\DeclareRobustCommand{\citet}{\@ifstar{\citep}{\citep}}
\DeclareRobustCommand{\Citet}{\@ifstar{\Citep}{\Citep}}
\theoremstyle{plain}
\newtheorem{theorem}{Theorem}
\newtheorem{lemma}{Lemma}
\newtheorem{proposition}{Proposition}
\theoremstyle{definition}
\newtheorem{example}{Example}
\theoremstyle{remark}
\newtheorem{remark}{Remark}
\renewcommand{\P}[1]{\mathbb{P}\left[#1\right]}
\newcommand{\E}[1]{\mathbb{E}\left[#1\right]}
\newcommand{\I}[1]{\mathbb{I}\left[#1\right]}
\newcommand{\joker}{\circledast}
\renewcommand{\algocf@captiontext}[2]{#1\algocf@typo. \AlCapFnt{}#2}
\def\@algocf@capt@plain{top}
\renewcommand{\algocf@makecaption}[2]{%
  \addtolength{\hsize}{\algomargin}%
  \sbox\@tempboxa{\algocf@captiontext{#1}{#2}}%
  \ifdim\wd\@tempboxa >\hsize%
    \hskip .5\algomargin%
    \parbox[t]{\hsize}{\algocf@captiontext{#1}{#2}}%
  \else%
    \global\@minipagefalse%
    \hbox to\hsize{\box\@tempboxa}%
  \fi%
  \addtolength{\hsize}{-\algomargin}%
}
\title{Conformal Inference for Open-Set and Imbalanced Classification}
\author{
  Tianmin Xie$^{1}$,
  Yanfei Zhou$^{1}$,
  Ziyi Liang$^{2}$,
  Stefano Favaro$^{3}$,
  Matteo Sesia$^{1,4}$\\
}
\date{
  {\small
  $^{1}$Department of Data Sciences and Operations, University of Southern California, USA\\[0.25em]
  $^{2}$Department of Statistics, University of California, Irvine, USA\\[0.25em]
  $^{3}$Dipartimento di Scienze Economico-Sociali e Matematico-Statistiche,\\Università di Torino and Collegio Carlo Alberto, Italy\\[0.25em]
  $^{4}$Department of Computer Science, University of Southern California, USA\\[4pt]
  \texttt{tianminx@marshall.usc.edu, yanfeizh@marshall.usc.edu, liangz25@uci.edu, stefano.favaro@unito.it, sesia@marshall.usc.edu}
  }
}
\begin{document}
\maketitle

\begin{abstract}
This paper presents a conformal prediction method for classification in highly imbalanced and open-set settings, where there are many possible classes and not all may be represented in the data. Existing approaches require a finite, known label space and typically involve random sample splitting, which works well when there is a sufficient number of observations from each class. Consequently, they have two limitations: (i) they fail to provide adequate coverage when encountering new labels at test time, and (ii) they may become overly conservative when predicting previously seen labels. To obtain valid prediction sets in the presence of unseen labels, we compute and integrate into our predictions a new family of conformal p-values that can test whether a new data point belongs to a previously unseen class. We study these p-values theoretically, establishing their optimality, and uncover an intriguing connection with the classical Good--Turing estimator for the probability of observing a new species. To make more efficient use of imbalanced data, we also develop a selective sample splitting algorithm that partitions training and calibration data based on label frequency, leading to more informative predictions. Despite breaking exchangeability, this allows maintaining finite-sample guarantees through suitable re-weighting. With both simulated and real data, we demonstrate our method leads to prediction sets with valid coverage even in challenging open-set scenarios with infinite numbers of possible labels, and produces more informative predictions under extreme class imbalance.


\end{abstract}

\noindent\textbf{Keywords:}
Classification; Conformal Prediction; Good-Turing Estimator; Imbalanced Data; Machine Learning; Species Problem.

\bigskip

\section{Introduction}

\subsection{Background and Motivation}

Real-world classification problems often involve not only many classes, but also the possibility that a test point may belong to a new, previously unseen category.
In machine learning, this task is known as {\em open-set} classification or {\em recognition} \citep{geng2020recent}.
Open-set classification is more difficult than standard classification, which operates under the {\em closed-set} assumption that all relevant labels are included in the observed data.
A well-known example is face recognition, where the task is to identify individuals from a gallery of known identities while also being able to recognize that some test images may correspond to people never seen before \citep{li2005open}.
Similar challenges arise in many other domains, including voice recognition \citep{jleed2020open}, biometric identification \citep{sun2022open}, medicine \citep{cao2023open}, forensic statistics \citep{bra10}, ecology \citep{shen2024open}, cybersecurity \citep{cruz2017open}, and anomaly detection \citep{bergman2020classification}.

Many modern applications rely on sophisticated classification models such as support vector machines \citep{junior2021open}, random forests \citep{feng2024open}, and deep neural networks \citep{bendale2016towards}, including transformer-based architectures \citep{bartusiak2022transformer}.
While such models are often necessary to achieve state-of-the-art predictive accuracy, they inevitably make mistakes and are generally not designed to provide rigorous quantification of statistical uncertainty, leading to sometimes overconfident and potentially unsafe usage.

To address this issue, we build on {\em conformal prediction} \citep{vovk2005algorithmic,shafer2008tutorial}, a flexible statistical framework for uncertainty quantification that can construct principled {\em prediction sets} based on the output of any model.
Conformal prediction methods have found significant traction in recent years because they are able to treat the underlying model as a ``black box", making them compatible with rapidly evolving machine learning algorithms, and do not rely on potentially unrealistic parametric assumptions about the data distribution, which broadens their applicability to a wide range of real-world problems.
Instead, the finite-sample validity of conformal prediction guarantees derives from a fundamental assumption of data exchangeability, a condition than is even weaker than requiring independent and identically distributed samples from an unknown population.
In this paper, we extend conformal prediction for classification by developing a new method and theoretical results that relax the standard closed-set assumption.

\subsection{Preview of Contributions and Paper Outline}

Developing a practical method for open-set conformal classification requires addressing several technical challenges.
First, we analyze the behavior of existing conformal classification methods in settings where the population distribution has support over many possible labels, but the observed data may not include all of them. We show that even under exchangeability, standard conformal methods can suffer from systematic under-coverage because of their implicit, potentially misspecified assumption that all relevant labels are represented in the observed data.

Second, we introduce and study a family of hypothesis testing problems in which the null hypothesis concerns how often the label of a new data point, drawn from the population, has been observed in an exchangeable data set. A key special case is the hypothesis that the test point has a new, previously unseen label. For these problems, we construct conformal p-values, establish their finite-sample super-uniformity under the null, investigate their optimality properties, and uncover an intriguing connection to the classical Good–Turing estimator \citep{good1953population}.

Third, we incorporate these {\em conformal Good–Turing p-values} into a conformal classification algorithm, enabling the construction of prediction sets with guaranteed coverage even in open-set scenarios. Intuitively, this algorithm consists of two-steps: testing and prediction. First we test whether the new data point has a previously seen label, then we output a subset of likely labels possibly (if necessary) including a place-holder ``joker'' symbol representing a new label.

Fourth, we further extend this methodology to improve the informativeness of our prediction sets in situations where data are limited. Specifically, we propose a non-random, label-frequency–based data splitting scheme that makes more efficient use of data with severe class imbalance, extending the simple random sample splitting approach typically used in conformal prediction, and a cross-validation–based hyper-parameter tuning strategy to optimally allocate the significance level between the hypothesis-testing and prediction components of our method.

Finally, we demonstrate the practical effectiveness of our approach on both synthetic and real datasets, showing that it delivers valid coverage in challenging open-set regimes, where existing methods fail, while often producing substantially more informative prediction sets compared to approaches based on simple random sample splitting.

\subsection{Related Work}

There is an extensive literature on open-set recognition (see \citet{geng2020recent} for a survey), but most prior efforts have emphasized algorithmic advances aimed at improving classification accuracy rather than providing finite-sample statistical guarantees. One line of work developed classifiers capable of ``rejecting'' to classify when presented with novel inputs \citep{scheirer2012toward}, for example by leveraging extreme value theory to model tail probabilities associated with rare labels \citep{Scheirer2014}. Other approaches extended deep neural networks with additional layers designed to estimate the probability that an input belongs to an unknown class \citep{Bendale2016}, with subsequent extensions incorporating generative modeling techniques \citep{ge2017generative, Yoshihashi2019, Oza2019}.  
Some studies have attempted to quantify uncertainty; for example, \citet{HolUE2024} proposed a Bayesian approach, but their method is model-specific and does not offer finite-sample guarantees. By contrast, our framework is model-agnostic and provides distribution-free, finite-sample, guarantees.

Within the conformal inference literature, most prior work has focused on constructing prediction sets in the classical closed-set setting, where the label space is finite and known. Methods differ primarily in their choice of nonconformity score—for example, some aim to minimize the expected size of prediction sets \citep{hechtlinger2018cautious}, while others focus on maximizing conditional coverage \citep{romano2020classification}. Our framework can leverage any of these approaches while extending their applicability to the open-set regime.  
Although marginal coverage is the standard guarantee in conformal inference, stronger notions of validity have also been explored, such as label-conditional coverage \citep{sadinle2019least}. These stronger guarantees, however, typically require many labeled samples per class. To address settings with large numbers of classes and scarce data, \citet{Ding2023manyclasses} proposed a relaxed form of label-conditional coverage by clustering similar labels and assigning rare classes to a null cluster. While their method remains restricted to the closed-set setting, it is complementary to ours and could, in principle, be combined with the open-set methodology developed in this paper.

Our work is also related to conformal classification with abstention \citep{hechtlinger2018cautious, Guan2022classificationOOD, wang2023classificationOOD} and to conformal out-of-distribution detection \citep{bates2023testing, liang2024integrative, marandon2024adaptive}. These approaches, however, tackle a fundamentally different problem: they typically assume the labeled data are exchangeable draws from a fixed ``inlier'' distribution while some test points may come from a distinct ``outlier'' distribution.  
In contrast, our setting assumes all data are exchangeable samples from the same distribution, but with potentially infinitely many possible labels. Here, a novel label is not treated as an ``outlier'' but simply as previously unseen. That said, there is a methodological connection: both lines of work compute feature-based conformal $p$-values to assess similarity between new and observed data points. We draw inspiration from existing approaches but use the resulting $p$-values for a different purpose: constructing prediction sets that provide finite-sample guarantees in the open-label regime.  
Looking ahead, these two perspectives could be combined to design open-set classification methods that not only provide distribution-free guarantees but also incorporate abstention when encountering inputs that are likely to be genuine outliers.

\section{Preliminaries} \label{sec:preliminaries}

\subsection{Setup and Problem Statement}

We consider $n+1$ paired data points $Z_i = (X_i, Y_i)$, for $i \in [n+1] := \{1, \ldots, n+1\}$, where $X_i$ denotes a feature vector in a (possibly high-dimensional) space $\mathcal{X}$ and $Y_i$ is a categorical label taking values in a countable dictionary $\mathcal{Y}$. Thus, each pair satisfies $Z_i \in \mathcal{Z} := \mathcal{X} \times \mathcal{Y}$.

We assume the sequence $Z = (Z_1, \ldots, Z_{n+1})$ is \emph{exchangeable}, meaning that its joint distribution is invariant under arbitrary permutations. 
Below, we illustrate a concrete example of a possible model generating such an exchangeable sequence. 
Importantly, however, our methodology does not rely on any particular generative model.

\begin{example}[Dirichlet Process Model]
\label{ex:dp-model}
A classical example of an exchangeable sequence with infinitely many possible labels is given by random sampling the Dirichlet process \citep{ferguson1973bayesian,bla73}.  
Suppose the label space is $\mathcal{Y} = \mathbb{R}$. The Dirichlet process depends on a  (non-atomic) base distribution $P_0$ on $\mathbb{R}$ (e.g., a standard normal) that governs how new labels are drawn, and a concentration parameter $\theta > 0$ that controls how often new labels appear. Given $n$ previous labels $Y_1,\dots,Y_n$, the next label $Y_{n+1}$ is drawn from $P_0$ with probability $\theta/(\theta+n)$, or set equal to an existing label $y \in \{Y_1,\dots,Y_n\}$ with probability $n_y/(\theta+n)$, where $n_y$ is the number of times $y$ has appeared. Conditional on the labels, features are sampled independently from some distribution $P_{X \mid Y}$; for example, one may take $P_{X \mid Y=y} = \text{Normal}(y,\sigma^2)$, for some $\sigma^2>0$ \citep{lo84}. This construction naturally models the open-set case, since the probability of seeing a new label, $\theta/(\theta+n)$, is always positive.
\end{example}

Throughout the paper, we assume $Z_{1:n} = \left( (X_1, Y_1), \ldots, (X_n, Y_n) \right)$ are observed and the goal is to predict the label $Y_{n+1}$ of a new sample with features $X_{n+1}$.
To account for uncertainty, we seek a prediction set $\hat{C}_{\alpha}(X_{n+1})$, implicitly depending on $Z_{1:n}$, with guaranteed {\em marginal coverage}:
\begin{align} \label{eq:def-marginal-coverage}
  \P{Y_{n+1} \in \hat{C}_{\alpha}(X_{n+1}) } \geq 1-\alpha.
\end{align}
Above, the probability is taken with respect to the randomness in all variables, $(X_1, Y_1), \ldots, (X_{n+1}, Y_{n+1})$, while $\alpha \in (0,1)$ is a pre-defined significance level; e.g., $\alpha = 0.1$.

Marginal coverage is a standard goal in conformal inference, aiming to ensure that prediction sets contain the true label with the desired frequency on average across the population. A stronger guarantee, called {\em conditional coverage}, would target $\P{Y_{n+1} \in \hat{C}_{\alpha}(X_{n+1}) \mid X_{n+1} = x}$ for every possible value of $x \in \mathcal{X}$, but that is theoretically unattainable without stronger distributional assumptions or infinite data \citep{foygel2021limits}. Consequently, we focus on marginal coverage, while aiming for good conditional performance in practice by leveraging accurate predictive models and carefully designed nonconformity scores, as explained in more detail below.

\subsection{From Closed-Set Conformal Classification to Open-Set Scenarios} \label{sec:methods-closed-set}

Existing conformal prediction methods operate as follows, assuming the label dictionary $\mathcal{Y}$ is finite and known. For each possible label $y \in \mathcal{Y}$, they compute a \emph{conformal $p$-value}, denoted as $p(y)$, that measures how well the augmented dataset $\mathcal{D}(y) := \{(X_1,Y_1),\ldots,(X_n,Y_n),(X_{n+1},y)\}$ conforms to the assumption of exchangeability. 
These $p$-values are obtained by computing relative ranks of \emph{nonconformity scores}, which are scalar statistics quantifying the degree to which an observation looks unusual relative to the rest of the data.
Formally:
\begin{align} \label{eq:conformal-pvalues-y}
  p(y) := \frac{1 + \sum_{i=1}^{n} \I{S_i^{(y)} \geq S_{n+1}^{(y)}} }{1+n},
\end{align}
where $S_1^{(y)}, \ldots, S_{n+1}^{(y)}$ are the non-conformity scores, computed by applying a suitable {\em symmetric score function} to the augmented dataset $\mathcal{D}(y)$; i.e., $S_i^{(y)} = s((X_i,Y_i); \mathcal{D}(y))$ for $i \in [n]$ and $S_{n+1}^{(y)} = s((X_{n+1},y); \mathcal{D}(y))$.
Here, $s$ is said to be ``symmetric'' because it does not look at the order of the data in $\mathcal{D}(y)$.
This is designed to ensure the p-values are marginally super-uniform when evaluated at $y = Y_{n+1}$; that is, $\P{p(Y_{n+1}) \leq u} \leq u$ for any $u \in (0,1)$, as explained in more detail below. 
In turn, this implies that marginal coverage~\eqref{eq:def-marginal-coverage} is achieved by the prediction set comprising all labels $y \in \mathcal{Y}$ whose conformal $p$-value exceeds the target level $\alpha$:
\begin{align} \label{eq:standard-prediction-set}
  \hat{C}_{\alpha}(X_{n+1}; \mathcal{Y}) := \left\{ y \in \mathcal{Y} : p(y) > \alpha \right\}.
\end{align}

There are two key methodological choices involving the score function that are worth recalling here. The first choice concerns how the output of a black-box classification model is mapped into scalar nonconformity scores. A classical and simple option is to use the negative estimated conditional probability of $y$ given $x$; i.e., $S_i^{(y)} = \hat{\mathbb{P}}[Y = Y_i \mid X=X_i]$ for $i \in [n]$ and $S_{n+1}^{(y)} = \hat{\mathbb{P}}[Y = y \mid X=X_{n+1}]$, where $\hat{\mathbb{P}}$ denotes an estimate of the conditional probability; e.g., as provided by the final soft-max layer of a deep neural network. This choice of score function is designed to approximately minimize the expected size of the prediction sets \citep{sadinle2019least}.
An alternative, designed to improve conditional coverage, is to apply rank-based transformations to the estimated class probabilities, yielding generalized inverse quantile scores \citep{romano2020arc}. If the classification model can accurately estimate the true conditional class probabilities in the population, then inverse quantile scores lead to prediction sets with not only marginal but also conditional coverage. We refer to Appendix~\ref{app:closed-set-classification} for further details on this score function.

The second choice concerns which data points are used for training the model and which for computing nonconformity scores. In the {\em full conformal} approach, the augmented dataset $\mathcal{D}(y)$ is used for both tasks. This means the model must be re-trained for each candidate label, which can be computationally demanding. In the more practical {\em split conformal} approach, the model is pre-trained once using an independent training data set, so that the scores $s((X_i,Y_i); \mathcal{D}(y))$ and $s((X_i,Y_i); \mathcal{D}(y))$ do not actually depend on the second argument $\mathcal{D}(y)$.
For completeness, a detailed implementation outline of split-conformal classification with generalized inverse quantile scores in the closed-set setting is provided by Algorithm~\ref{alg:standard-closed-set} in Appendix~\ref{app:closed-set-classification}.

For any of these existing conformal prediction methods, the data exchangeability translates into exchangeability of the nonconformity scores, resulting in marginally valid prediction sets satisfying~\eqref{eq:def-marginal-coverage}.
Moreover, as long as the scores are almost-surely distinct, one can also prove that conformal prediction sets are not overly conservative, in the sense that they satisfy an almost-matching coverage upper bound: $\P{Y_{n+1} \in \hat{C}_{\alpha}(X_{n+1}; \mathcal{Y}_{n})} \leq 1 - \alpha + \min\left\{1/(n+1), \alpha  \right\}$.
The methods and theory developed in this paper apply to any of these frameworks, but in our empirical demonstrations we will focus on split conformal prediction with generalized inverse quantile scores, which offers computational efficiency while often producing more informative sets with relatively good conditional coverage.

The first question we consider is how existing conformal prediction methods (like Algorithm~\ref{alg:standard-closed-set}), originally developed for the closed-set setting, behave when applied in an open-set scenario.  
In particular, we ask: what coverage can be expected if the unknown full label space $\mathcal{Y}$ is replaced by the subset of observed labels,
$\mathcal{Y}_n := \text{unique}(Y_1, \ldots, Y_n)$?
Concretely, Algorithm~\ref{alg:standard-plug-in} in Appendix~\ref{app:closed-set-classification} describes the corresponding plug-in implementation of Algorithm~\ref{alg:standard-closed-set}, using $\mathcal{Y}_n$ instead of $\mathcal{Y}$.  
Intuitively, the performance of this approach must depend on the probability that the next label has not been seen before---that is, on the probability of the event
\begin{align} \label{eq:event-new-label}
  N_{n+1} := \{\, Y_{n+1} \notin \mathcal{Y}_n \,\}.
\end{align}

\begin{theorem} \label{theorem:coverage-standard}
Assume $\{(X_i,Y_i)\}_{i=1}^{n+1}$ are exchangeable and the scores $\{S_1^{(y)},\ldots,S_{n+1}^{(y)}\}$ are almost-surely distinct for any $y \in \mathcal{Y}$, with $\mathcal{Y}$ being unknown and potentially infinite.
Then,
\begin{align*}
  1 - \alpha - \P{N_{n+1}} \leq 
  \P{Y_{n+1} \in \hat{C}_{\alpha}(X_{n+1}; \mathcal{Y}_{n})}
  \leq 1 - \alpha + \min\left\{\frac{1}{n+1}, \alpha - \P{N_{n+1}}  \right\}.
\end{align*}
Moreover, $\P{Y_{n+1} \notin \hat{C}_{\alpha}(X_{n+1}; \mathcal{Y}_n), Y_{n+1} \in \mathcal{Y}_n} \leq \alpha$.
\end{theorem}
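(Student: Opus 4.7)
The plan is to decompose the coverage event according to whether $Y_{n+1}$ has been previously observed, and then apply the standard two-sided conformal validity bound to the augmented $p$-value evaluated at the true label.

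The structural observation driving everything is that $\hat{C}_\alpha(X_{n+1}; \mathcal{Y}_n) \subseteq \mathcal{Y}_n$ by construction, so the coverage event is contained in $\{Y_{n+1} \in \mathcal{Y}_n\} = N_{n+1}^c$. On this latter event, $Y_{n+1}$ is one of the candidate labels examined by the algorithm, so membership in the prediction set is equivalent to $p(Y_{n+1}) > \alpha$. This immediately yields the identity
\begin{equation*}
\P{Y_{n+1} \in \hat{C}_\alpha(X_{n+1}; \mathcal{Y}_n)}
= \P{p(Y_{n+1}) > \alpha} - \P{p(Y_{n+1}) > \alpha,\, N_{n+1}}.
\end{equation*}

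Next, I would invoke the classical conformal guarantee. Exchangeability of the data together with symmetry of the score function make $\{S_i^{(Y_{n+1})}\}_{i=1}^{n+1}$ exchangeable, and the distinctness hypothesis forces the rank of $S_{n+1}^{(Y_{n+1})}$ to be uniform on $\{1,\ldots,n+1\}$. Equivalently, $p(Y_{n+1})$ is uniform on $\{k/(n+1) : k=1,\ldots,n+1\}$, giving the familiar two-sided estimate $1 - \alpha \leq \P{p(Y_{n+1}) > \alpha} \leq 1 - \alpha + 1/(n+1)$. Plugging the lower bound into the identity above, and bounding $\P{p(Y_{n+1}) > \alpha,\, N_{n+1}} \leq \P{N_{n+1}}$, delivers the theorem's lower bound. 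For the upper bound, I would combine two separate estimates: first, $\P{Y_{n+1} \in \hat{C}_\alpha} \leq \P{p(Y_{n+1}) > \alpha} \leq 1-\alpha+1/(n+1)$; second, since $\hat{C}_\alpha \subseteq \mathcal{Y}_n$, $\P{Y_{n+1} \in \hat{C}_\alpha} \leq 1 - \P{N_{n+1}} = 1-\alpha + (\alpha - \P{N_{n+1}})$. Taking the minimum produces the stated slack $\min\{1/(n+1), \alpha - \P{N_{n+1}}\}$.

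The ``moreover'' claim is a one-line corollary of the same decomposition: $\P{Y_{n+1}\notin\hat{C}_\alpha,\, Y_{n+1}\in\mathcal{Y}_n} = \P{p(Y_{n+1})\leq\alpha,\, Y_{n+1}\in\mathcal{Y}_n} \leq \P{p(Y_{n+1})\leq\alpha} \leq \alpha$ by super-uniformity. I do not foresee a genuine obstacle here; the only point requiring care is that the standard conformal distributional result must be applied to $p$ evaluated at the \emph{random} true label $Y_{n+1}$ (which is justified by score symmetry under exchangeability), and that the tightest upper bound requires combining two distinct estimates rather than relying on a single one.
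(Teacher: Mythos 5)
Your proof is correct and essentially matches the paper's argument: both decompose the coverage event on $N_{n+1}$, apply the standard two-sided conformal bound to $p(Y_{n+1})$ (the paper packages this step as an auxiliary lemma about the oracle set $\hat{C}_\alpha(X_{n+1};\mathcal{Y}_{n+1})$, which is equivalent since $Y_{n+1}\in\hat{C}_\alpha(X_{n+1};\mathcal{Y}_{n+1})$ iff $p(Y_{n+1})>\alpha$), and obtain the upper bound by taking the minimum of the conformal estimate and the trivial bound $1-\P{N_{n+1}}$. Working directly with $p(Y_{n+1})$ rather than through the oracle set is a cosmetic simplification of the same reasoning, and your handling of the lower bound and the ``moreover'' claim likewise mirror the paper's calculations.
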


In the special case where $\P{N_{n+1}} = 0$, Theorem~\ref{theorem:coverage-standard} reduces to the familiar result
$ 1-\alpha \leq \P{Y_{n+1} \in \hat{C}_{\alpha}(X_{n+1}; \mathcal{Y}_{n})} \leq 1 - \alpha + \min\left\{1/(n+1), \alpha  \right\}$, telling us that standard conformal prediction has valid coverage (slightly) above $1-\alpha$ in the closed-set setting.
In general, however, this shows that standard conformal prediction sets may significantly under-cover if $\P{N_{n+1}} > 0$, especially if new labels are relatively common in the sense that $\P{N_{n+1}} > \alpha$.

As a concrete illustration, under the same Dirichlet process model previously introduced in Example~\ref{ex:dp-model}, we can compute exactly $\P{N_{n+1}}=\theta/(\theta + n)$. This probability decreases as $n$ grows but remains strictly positive for any finite $n$. By Theorem~\ref{theorem:coverage-standard}, standard conformal prediction coverage is upper bounded by $1 - \theta/(\theta + n)$, which can be substantially below the target $1-\alpha$ when $\theta$ is large or $n$ is small. For instance, with $\theta = 100$, $n = 100$, and $\alpha=0.1$ the coverage upper bound becomes $0.5$, far below the nominal $0.9$ level.

\subsection{Preview of Proposed Method}

To address the problem that standard conformal methods may under-cover in the open-set setting, we must allow prediction sets to include labels beyond those observed in $\mathcal{Y}_{n}$. Since the remainder of the label space $\mathcal{Y} \setminus \mathcal{Y}_n$ is entirely obscure, it is necessary to introduce a special ``joker'' symbol, denoted as $\joker$, which acts as a placeholder for all unseen labels.
The joker provides a principled and clear way to signal that $X_{n+1}$ may be associated with a novel label $Y_{n+1}$, even if its identity cannot be specified. This is a natural approach in purely categorical problems, where labels carry no intrinsic meaning beyond whether they are the same or different, and leads to informative prediction sets as long as jokers are included parsimoniously.  

The central challenge is to determine when to include the joker in a prediction set, or equivalently, how to test whether $Y_{n+1}$ is novel. We cast this as a special case of a more general hypothesis testing problem: for each $k \geq 0$, we can ask whether the test label has appeared exactly $k$ times in the data, with $k=0$ indicating a new label. In Section~\ref{sec:testing}, we formalize this problem, develop tests with finite-sample type-I error control, and analyze their optimality. We also draw a connection to the classical work of Good and Turing on estimating the probability of unseen species \citep{good1953population}, which motivates the terminology \emph{conformal Good–Turing $p$-values} for our proposed tests. Then, in Section~\ref{sec:open-set-classification}, we show how to leverage these $p$-values within a conformal prediction algorithm that yields open-set prediction sets with valid marginal coverage.

Section~\ref{sec:open-set-classification} also makes two additional methodological contributions. First, because our procedure integrates two distinct components (hypothesis testing and prediction), we propose an effective data-driven strategy for allocating the miscoverage budget $\alpha$ across the two parts as efficiently as possible. 
Second, in the split conformal setting, we develop a non-random, frequency-based train–calibration sample split with proper reweighting. This can sometimes greatly improve predictive efficiency while preserving coverage, and is particularly advantageous in the presence of strong class imbalance, where standard random sample splitting would be sub-optimal.

\section{Testing Hypotheses about Label Frequencies}
\label{sec:testing}

\subsection{Setup}

We introduce a family of statistical hypotheses about the empirical frequency of the label $Y_{n+1}$ within the first $n$ observations.
Let $Y_{1:n} = (Y_1, \ldots, Y_n)$ denote the sequence of observed labels, define the {\em empirical frequency} of a label $y \in \mathcal{Y}$ as the number of times it occurs in $Y_{1:n}$:
\[
N(y; Y_{1:n}) = \sum_{i=1}^n \mathbf{1}\{Y_i = y\}.
\]
For each non-negative integer $k \in \{0\} \cup \mathbb{N}$, we define the hypothesis $H_k$ as:
\begin{align} \label{eq:freq-hypothesis}
H_k: N(Y_{n+1}; Y_{1:n}) = k.
\end{align}
Note that this is a {\em random} hypothesis because its truth value depends on the outcome of random variables $Y_1,\ldots,Y_{n+1}$. 
A special case of particular interest is $H_0$, which is true if and only if $Y_{n+1}$ is a new label.
We will now study how to construct a powerful p-value $\psi_k$ for testing $H_k$, using the data in $(X_1,Y_1), \ldots, (X_n, Y_n)$ and possibly also $X_{n+1}$.
Concretely, we seek a $\psi_k$ satisfying 
\begin{align}    
  \P{\psi_k \leq u, H_k} \leq u, \qquad \forall u \in (0,1),
\end{align}
which is a sufficient notion of super-uniformity for our purposes.

\subsection{Feature-Blind Conformal Good--Turing p-Values } \label{sec:pvals-feature-blind}

We begin by developing p-values for testing $H_k$ using only the observed labels $\{Y_1,\ldots,Y_n\}$, postponing to the next section the incorporation of feature information. Throughout, we treat the label space $\mathcal{Y}$ as purely categorical and unstructured: the labels have no intrinsic meaning beyond equality or inequality. It is therefore natural to focus on statistics of $\{Y_1,\ldots,Y_n\}$ that depend only on the sample {\em frequency profile}: $\mathcal{F}_n = \{M_0, M_1, \ldots, M_n\}$, where $M_k$ is the number of {\em distinct} labels that appear exactly $k$ times in $\{Y_1,\ldots,Y_n\}$, so that $\sum_{k=0}^{n}k M_k = n$.


For each $k \in \{0,1,\ldots,n\}$, we define the feature-blind {\em Good--Turing} conformal p-value $\psi_{k}^{\text{GT}}$:
\begin{align}    
\label{eq:p-value-GT}
\psi_{k}^{\text{GT}} = \frac{(k+1) M_{k+1} + k + 1}{n + 1}.
\end{align}
We can prove $\psi_{k}^{\text{GT}}$ is a valid conformal p-value for testing $H_k$, and is optimal among all possible conformal p-values that are (deterministic) functions of the sample frequency profile.

\begin{theorem}
\label{theorem:GT-super-uniform}
Assume $\{(Y_i)\}_{i=1}^{n+1}$ are exchangeable. For any $k \in \{0,1,\ldots,n\}$ and $u \in (0,1)$,
\begin{align}    
  \P{\psi_{k}^{\text{GT}} \leq u, H_k} \leq u.
\end{align}
Moreover, if $\psi'_k$ is any deterministic function of $\mathcal{F}_n$ satisfying $\P{\psi'_{k} \leq u, H_k} \leq u$  for all $u \in (0,1)$, then $\psi'_{k} \geq \psi_{k}^{\text{GT}}$ almost surely.
\end{theorem}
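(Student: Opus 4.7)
The plan is to treat the two halves of the theorem separately: super-uniformity follows from a direct exchangeability computation, while optimality is proved by exhibiting a single worst-case exchangeable distribution tailored to each realization $f_0$ of $\mathcal{F}_n$.

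For super-uniformity, the key step will be to rewrite $\psi_k^{\text{GT}}$ on the event $H_k$ in terms of the augmented frequency profile $\mathcal{F}'_{n+1}$ of $Y_{1:n+1}$. Specifically, letting $M'_{k+1}$ denote the number of labels appearing exactly $k+1$ times in $Y_{1:n+1}$, the event $H_k$ forces $M'_{k+1} = M_{k+1}+1$, so
\begin{equation*}
\psi_k^{\text{GT}}\cdot\I{H_k} \;=\; \frac{(k+1) M'_{k+1}}{n+1}\,\I{H_k}.
\end{equation*}
I would then use exchangeability of $Y_{1:n+1}$, together with the combinatorial identity $\sum_{i=1}^{n+1}\I{N_i=k} = (k+1) M'_{k+1}$ (where $N_i$ counts $j\neq i$ with $Y_j=Y_i$), to obtain
\begin{equation*}
\P{H_k \mid \mathcal{F}'_{n+1}} \;=\; \frac{(k+1) M'_{k+1}}{n+1}.
\end{equation*}
A tower-rule step then closes the bound, since the resulting integrand $\tfrac{(k+1)M'_{k+1}}{n+1}\cdot\I{\tfrac{(k+1)M'_{k+1}}{n+1}\leq u}$ is dominated by $u$ almost surely.

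For optimality, I would fix a realization $f_0=(m^0_0,m^0_1,\ldots,m^0_n)$ in the support of $\mathcal{F}_n$ with $m^0_k\geq 1$ (so that $H_k$ is possible at $f_0$) and construct a tight exchangeable distribution. Specifically, take the multi-set of $n+1$ labels whose frequency profile $f'$ is obtained from $f_0$ by converting one label of count $k$ into a label of count $k+1$ (so $m'_{k+1}=m^0_{k+1}+1$, $m'_k=m^0_k-1$, and $m'_j=m^0_j$ otherwise), and let $(Y_1,\ldots,Y_{n+1})$ be a uniformly random permutation of this multi-set. Under this exchangeable law, $\P{H_k}=(k+1)m'_{k+1}/(n+1)=\psi_k^{\text{GT}}(f_0)$, and a case check reveals that $\mathcal{F}_n=f_0$ if and only if $H_k$ occurs, so that $\P{\mathcal{F}_n=f_0,H_k} = \psi_k^{\text{GT}}(f_0)$ saturates the right-hand side. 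Because $\{\mathcal{F}_n=f_0\}\subseteq\{\psi'_k\leq\psi'_k(f_0)\}$, the hypothesized super-uniformity of $\psi'_k$ at $u=\psi'_k(f_0)$ then forces $\psi'_k(f_0)\geq\psi_k^{\text{GT}}(f_0)$, for otherwise one would obtain the contradiction $\psi_k^{\text{GT}}(f_0)\leq\psi'_k(f_0)<\psi_k^{\text{GT}}(f_0)$.

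The main obstacle I anticipate is the verification that the extremal construction really yields $\{\mathcal{F}_n=f_0\}=H_k$ (and not a larger set spilling into other $H_j$). This is essentially a combinatorial check: on each $H_j$ with $j\neq k$, removing $Y_{n+1}$ from the multi-set perturbs $\mathcal{F}_n$ in exactly the $j$-th and $(j+1)$-th coordinates, and one must verify that no such perturbation can reproduce $f_0$. The degenerate cases ($m^0_k=0$, where $H_k$ is impossible and the inequality is vacuous, or $\psi_k^{\text{GT}}(f_0)>1$, where the bound is trivially satisfied) warrant brief mention but pose no serious difficulty.
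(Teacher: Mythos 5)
Your proof is correct, and it diverges from the paper's argument in an interesting way for the first half while matching it in spirit for the second.

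For super-uniformity, the paper does not compute $\P{H_k\mid\mathcal F'_{n+1}}$ directly. Instead it instantiates a general auxiliary lemma (Lemma~\ref{lemma:general-augmented-p-value}) about conformal $p$-values: it chooses the symmetric score $f^{(k)}(y;A)=\mathbb{I}[\text{$y$ appears $k+1$ times in $A$}]$, observes that under $H_k$ the score of the test point is $1$, counts the indices whose score is also $1$ (which is $(k+1)M'_{k+1}$), and concludes via the exchangeability of the score vector. Your argument reaches exactly the same quantity $(k+1)M'_{k+1}/(n+1)$, but by the more elementary route of conditioning on the multiset/profile $\mathcal F'_{n+1}$, using the identity $\sum_i\I{N_i=k}=(k+1)M'_{k+1}$, and a tower-rule step. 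The two are logically equivalent — the conditional probability $\P{H_k\mid\mathcal F'_{n+1}}$ \emph{is} the rank-based $p$-value under the uniform permutation, which is what Lemma~\ref{lemma:general-augmented-p-value} encapsulates — but yours is self-contained and avoids the general scoring machinery, at the cost of not reusing that machinery for the randomized and feature-based variants the way the paper does. For optimality, your construction (fix $f_0$, build the $(n+1)$-element multiset with profile $f'$ obtained by promoting one label from count $k$ to $k+1$, take a uniform permutation, show $\P{H_k}=\psi^{\text{GT}}_k(f_0)$ and then invoke super-uniformity of $\psi'_k$ at $u=\psi'_k(f_0)$) is essentially identical to the paper's Lemma~\ref{lemma:GT-lower-bound}, which it also frames as a contradiction; the combinatorial check you flag (that removing an element of count $m\neq k+1$ perturbs coordinates $m$ and $m-1$ of the profile in a way that can never reproduce $f_0$, because the $k$-th coordinate forces $m\in\{k,k+1\}$ and the $(k+1)$-th forces $m\in\{k+1,k+2\}$) does go through and is exactly what the paper's permutation count $|\mathcal P|=((k+1)M_{k+1}+k+1)\cdot n!$ encodes.
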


The term ``Good–Turing'' emphasizes the connection between~\eqref{eq:p-value-GT} and the classical Good–Turing estimator for the probability of unseen species. \citet{good1953population} proposed estimating the total probability mass of all species observed exactly $k$ times in a sample of size $n$ as $(k+1)M_{k+1}/n$, where $M_{k+1}$ is the number of species appearing $k+1$ times. In particular, for $k=0$, the Good--Turing estimator of the probability of observing a new species is $M_1/n$, the fraction of observations corresponding to species seen only once.  
The conformal $p$-value in~\eqref{eq:p-value-GT}, though derived from different principles (hypothesis testing via conformal inference rather than estimation), mirrors this form and is asymptotically equivalent as $n \to \infty$ with $k$ fixed. The role of the additional $(k+1)/(n+1)$ term in~\eqref{eq:p-value-GT} is to ensure the finite-sample conservativeness of conformal $p$-values, in contrast to the Good–Turing estimator's focus on unbiasedness. 
Moreover, under the null hypothesis $H_k$, the p-value in~\eqref{eq:p-value-GT} recovers exactly the classical Good–Turing estimator evaluated on the augmented data set $Y_1,\ldots,Y_n,Y_{n+1}$.

The power of the conformal p-values in~\eqref{eq:p-value-GT} can be further increased through randomization.
For each $k \in \{0,1,\ldots,n\}$, we define the randomized feature-blind {\em Good--Turing} p-value $\psi_{k}^{\text{GT}}$ as:
\begin{align}    
\label{eq:p-value-RGT}
\psi^{\text{RGT}}_{k} = \frac{\text{Uniform}\left(\{0,1,\ldots,(k+1)M_{k+1} + k\}\right) + 1}{n + 1},
\end{align}
where $\text{Uniform}(A)$ denotes a uniform random variable taking on a discrete set $A \subset \mathbb{N}$, independent of everything else.
It is clear that $\psi^{\text{RGT}}_{k}$ is more powerful than $\psi_{k}^{\text{GT}}$ because $\P{\psi^{\text{RGT}}_{k} < \psi_{k}^{\text{GT}} \mid \mathcal{F}_n} = 1 - 1/((k+1)M_{k+1} + k+1)$. Moreover, it is still a valid conformal p-value.
\begin{proposition}
\label{proposition:RGT-super-uniform}
Assume $\{(Y_i)\}_{i=1}^{n+1}$ are exchangeable. 
Then, $\P{\psi^{\text{RGT}}_{k} \leq u, H_k} \leq u$ for all $u \in (0,1)$ and any $k \in \{0,1,\ldots,n\}$.
\end{proposition}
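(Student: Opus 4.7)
The plan is to condition on the multiset $\mathcal{M}$ of all $n+1$ labels $Y_{1:n+1}$ and exploit exchangeability, which makes the ordering of the labels uniform given $\mathcal{M}$. Let $M'_{k+1}$ denote the number of distinct values appearing exactly $k+1$ times in $\mathcal{M}$; this is a function of $\mathcal{M}$ alone.

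First I would establish two building blocks. By exchangeability, $Y_{n+1}$ equals any fixed token of $\mathcal{M}$ with probability $1/(n+1)$, and $H_k$ holds iff $Y_{n+1}$ takes a value of total multiplicity $k+1$ in $\mathcal{M}$, so
$$\P{H_k \mid \mathcal{M}} = \frac{(k+1)\, M'_{k+1}}{n+1}.$$
Moreover, on the event $H_k$ the label $Y_{n+1}$ appears $k+1$ times in $Y_{1:n+1}$ but only $k$ times in $Y_{1:n}$, so removing it decreases the count of labels of multiplicity $k+1$ by exactly one: $M_{k+1} = M'_{k+1} - 1$ on $H_k$. Hence the support size of $U$ in \eqref{eq:p-value-RGT} equals $(k+1) M_{k+1} + k + 1 = (k+1)\, M'_{k+1}$ on $H_k$, a quantity determined by $\mathcal{M}$.

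Next, since $U$ is drawn independently of the data, a direct count gives
$$\P{\psi^{\text{RGT}}_{k} \leq u \mid \mathcal{M}, H_k} = \frac{\min\bigl\{\lfloor u(n+1) \rfloor,\ (k+1)\, M'_{k+1}\bigr\}}{(k+1)\, M'_{k+1}},$$
and multiplying by $\P{H_k \mid \mathcal{M}}$ produces a clean cancellation of $(k+1)\, M'_{k+1}$:
$$\P{\psi^{\text{RGT}}_{k} \leq u,\ H_k \mid \mathcal{M}} = \frac{\min\bigl\{\lfloor u(n+1) \rfloor,\ (k+1)\, M'_{k+1}\bigr\}}{n+1} \leq \frac{\lfloor u(n+1) \rfloor}{n+1} \leq u.$$
Taking expectation over $\mathcal{M}$ then yields the claim.

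The main subtlety will be justifying the factorization that leads to the third display: this requires that, once we condition on $\mathcal{M}$ and $H_k$, the law of $\psi^{\text{RGT}}_{k}$ depends on $Y_{1:n}$ only through $M'_{k+1}$, which is itself a function of $\mathcal{M}$. In particular, the specific label that $Y_{n+1}$ equals (among those of multiplicity $k+1$) is irrelevant, because all such choices induce the same value of $M_{k+1}$ and the randomizer $U$ is independent of everything else. Once this conditional independence is in place, the calculation is routine; intuitively, the extra randomness contributed by $U$ is exactly what smooths the discrete atom of $\psi^{\text{GT}}_{k}$ at the value $(k+1)M'_{k+1}/(n+1)$ into a conditionally uniform grid, without needing a separate appeal to Theorem~\ref{theorem:GT-super-uniform}.
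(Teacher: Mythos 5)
Your proof is correct, and it takes a genuinely different route from the paper's. The paper derives the result as a corollary of its general Lemma on "imaginary conformal p-values" (Lemma~\ref{lemma:general-augmented-p-value}): it defines a randomized label-only score $f^{(k)}(y;A,\eta)=\phi_{k+1}(y;A)\cdot\eta$, uses exchangeability of $(Y_i,\eta_i)$ to argue that the rank of $F_{n+1}$ is uniform, and then checks that under $H_k$ the resulting imaginary p-value has the same conditional law as $\psi^{\text{RGT}}_k$. You instead condition directly on the multiset $\mathcal{M}$ of all $n+1$ labels and compute the joint probability $\P{\psi^{\text{RGT}}_k\le u, H_k\mid\mathcal{M}}$ exactly: the identity $\P{H_k\mid\mathcal{M}}=(k+1)M'_{k+1}/(n+1)$, the observation that $M_{k+1}=M'_{k+1}-1$ on $H_k$ (hence the support size of $U$ equals $(k+1)M'_{k+1}$, a $\mathcal{M}$-measurable quantity), and the cancellation of $(k+1)M'_{k+1}$ are all correct. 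The payoff of your approach is an \emph{exact} formula $\P{\psi^{\text{RGT}}_k\le u, H_k\mid\mathcal{M}}=\min\{\lfloor u(n+1)\rfloor,(k+1)M'_{k+1}\}/(n+1)$, which shows precisely where the $p$-value is conservative (only when $\lfloor u(n+1)\rfloor>(k+1)M'_{k+1}$), and it is entirely self-contained, needing neither the auxiliary Lemma nor Theorem~\ref{theorem:GT-super-uniform}. The paper's route buys modularity: the same Lemma covers the deterministic $\psi^{\text{GT}}_k$, the randomized $\psi^{\text{RGT}}_k$, and the feature-based $\psi^{\text{XGT}}_k$ in one framework, at the cost of giving only the one-sided bound. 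One small cosmetic note: when $M'_{k+1}=0$ the conditioning on $H_k$ in your third display is vacuous, but $\P{H_k\mid\mathcal{M}}=0$ makes the joint probability zero and the final formula still holds, so nothing breaks.
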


\subsection{Feature-Based Conformal Good--Turing p-Values } \label{sec:pvals-feature-based}

We now extend the conformal Good--Turing p-values to incorporate potentially valuable feature information contained in $\{X_1,\ldots,X_n,X_{n+1}\}$.
To that end, we introduce new notation.

For each $k \in \{0,1,\ldots,n\}$, let $\mathcal{S}_{k}$ be the set of indices of points whose label appears exactly $k$ times among $\{Y_1,\ldots,Y_n\}$:
\[
\mathcal{S}_{k} := \Big\{ i \in [n] : N(Y_i; Y_{1:n}) = k \Big\}.
\]
Next, let $\hat{s}_k:\mathcal{X} \to \mathbb{R}_+$ be an arbitrary score function, treated as fixed for now (we discuss data-driven choices at the end of this section). The function $\hat{s}_k$ measures how atypical a feature vector $x$ is for a label with frequency $k$: larger values of $\hat{s}_k(X_{n+1})$ indicate that $X_{n+1}$ is less consistent with $Y_{n+1}$ having frequency $k$, thus providing stronger evidence against $H_k$.

For any $k \in \{0,1,\ldots,n\}$, we define the feature-based conformal Good--Turing p-value as:
\begin{align}    
\label{eq:p-value-XGT}
  \psi^{\text{XGT}}_{k} := \frac{1}{n+1} \left( 1 + \sum_{i \in \mathcal{S}_{k+1}} \I{\hat{s}_k(X_i) \geq \hat{s}_k(X_{n+1})} + \max_{i \in \mathcal{S}_k} \sum_{j \in [n] : Y_j = Y_i} \I{\hat{s}_k(X_j) \geq \hat{s}_k(X_{n+1})} \right).
\end{align}

Before formally stating the super-uniformity of~\eqref{eq:p-value-XGT} under $H_k$, we discuss an interesting special case to gain more intuition. If $k=0$, the expression in~\eqref{eq:p-value-XGT} simplifies to:
\begin{align*}    
  \psi^{\text{XGT}}_{0} = \frac{1 + \sum_{i \in \mathcal{S}_{1}} \I{\hat{s}_0(X_i) \geq \hat{s}_0(X_{n+1})}}{n+1} \in \left\{ \frac{1}{n+1}, \ldots, \frac{M_1+1}{n+1}\right\}.
\end{align*}
Let us now consider two extreme situations.
On the one hand, suppose $\hat{s}_{0}(X_{n+1})$ is greater than $\hat{s}_{0}(X_i)$ for all $i \in \mathcal{S}_1$.
Intuitively, this means test point appears {\em more unusual} as a rare species, based on its features, than all observed data points with frequency one. This suggests the features $X_{n+1}$ carry strong evidence that $Y_{n+1}$ is not a new label, and indeed in this case the conformal p-value $\psi^{\text{XGT}}_{0}$ attains its smallest possible value at $1/(n+1)$.
On the other hand, suppose $\hat{s}_{0}(X_{n+1})$ is smaller than $\hat{s}_{0}(X_i)$ for all $i \in \mathcal{S}_1$.
Intuitively, this means test point appears {\em less unusual} as a rare species, based on its features, than all observed data points with frequency one. This suggests the features $X_{n+1}$ do not carry any evidence that $Y_{n+1}$ is not a new label, and indeed in this case the conformal p-value $\psi^{\text{XGT}}_{0}$ attains its largest possible value at $(1+M_1)/(n+1)$, which coincides with the deterministic feature-blind conformal Good--Turing p-value introduced earlier in~\eqref{eq:p-value-GT}.

Overall, \eqref{eq:p-value-XGT} behaves similarly to the randomized p-value in~\eqref{eq:p-value-RGT}, but ties are now broken based on the observed feature-based scores (which are likely to be informative) rather than on independent random noise.
For $k>0$, the expression in~\eqref{eq:p-value-XGT} becomes a little more complicated, as it requires comparing the score $\hat{s}_{k}(X_{n+1})$ not only to the scores $\hat{s}_{k}(X_{i})$ for $i \in \mathcal{S}_{k+1}$ but also to the scores $\hat{s}_{k}(X_{i})$ for $i \in [n]$ such that $Y_i = y$, for all $y \in \mathcal{S}_k$.
A similar intuition, however, still applies. 
Therefore, feature-based conformal Good--Turing p-values are typically more useful than (randomized) feature-blind p-values in practice, as we will demonstrate empirically later.

We are now ready to state the validity of $\psi^{\text{XGT}}_{k}$, defined in~\eqref{eq:p-value-XGT}, as a conformal p-value for $H_k$.
\begin{theorem} \label{thm:XGT}
Assume $\{(X_i, Y_i)\}_{i=1}^{n+1}$ are exchangeable, and fix any $k \in \{0,1,\ldots,n\}$. Suppose also the score function $\hat{s}_k$ in~\eqref{eq:p-value-XGT} is either fixed or invariant to permutations of $\{(X_i, Y_i)\}_{i=1}^{n+1}$ under $H_k$.
Then, $\P{\psi^{\text{XGT}}_{k} \leq u, H_k} \leq u$ for all $u \in (0,1)$.
\end{theorem}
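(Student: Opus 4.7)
The plan is to reduce the p-value to a rank statistic over the augmented frequency-$(k{+}1)$ set, and then invoke exchangeability after conditioning on the full multiset of the augmented sample. Let $\mathcal{M} := \{Z_1, \ldots, Z_{n+1}\}$ (as a multiset) and define $\tilde{\mathcal{S}}_{k+1} := \{i \in [n+1] : N(Y_i; Y_{1:n+1}) = k+1\}$. On $H_k$, $\tilde{\mathcal{S}}_{k+1}$ decomposes as the disjoint union of $\mathcal{S}_{k+1}$, $\{j \in [n] : Y_j = Y_{n+1}\}$, and $\{n+1\}$: any label with frequency $k+1$ in $Y_{1:n}$ retains that frequency in the augmented sample and must differ from $Y_{n+1}$, while $Y_{n+1}$ and its $k$ copies in $[n]$ all acquire frequency $k+1$ after augmentation.

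Step 1 is an algebraic lower bound. On $H_k$, the outer maximum in the definition of $\psi^{\text{XGT}}_k$ may be evaluated at any $i \in \mathcal{S}_k$ with $Y_i = Y_{n+1}$ (such $i$ exists, since $Y_{n+1}$ has frequency $k$ in $Y_{1:n}$), so the $B$-term is bounded below by $\sum_{j \in [n]:\, Y_j = Y_{n+1}} \I{\hat{s}_k(X_j) \geq \hat{s}_k(X_{n+1})}$. Since the leading ``$+1$'' equals $\I{\hat{s}_k(X_{n+1}) \geq \hat{s}_k(X_{n+1})}$, the three contributions combine into a rank statistic over $\tilde{\mathcal{S}}_{k+1}$:
\[
\psi^{\text{XGT}}_k \;\geq\; \frac{R}{n+1}, \qquad R := \sum_{i \in \tilde{\mathcal{S}}_{k+1}} \I{\hat{s}_k(X_i) \geq \hat{s}_k(X_{n+1})}, \quad \text{on } H_k.
\]

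Step 2 is the core exchangeability argument. By exchangeability of $Z_{1:n+1}$, conditional on $\mathcal{M}$ the sequence is a uniformly random ordering of the multiset. Let $L$ denote the number of elements of $\mathcal{M}$ whose label has multiplicity $k+1$ in $\mathcal{M}$. Then $\P{H_k \mid \mathcal{M}} = L/(n+1)$, and on $\{\mathcal{M}, H_k\}$ the value $Z_{n+1}$ is uniform over those $L$ elements. Under the invariance assumption on $\hat{s}_k$, the score function is itself a deterministic function of $\mathcal{M}$, so the comparator score multiset $\{\hat{s}_k(X_i) : i \in \tilde{\mathcal{S}}_{k+1}\}$ is pinned down by $\mathcal{M}$ and has size $L$, while $\hat{s}_k(X_{n+1})$ is a uniformly random element of it. Hence $R$---the top-rank of $\hat{s}_k(X_{n+1})$ within that multiset---is distributed as $\text{Uniform}\{1,\ldots,L\}$ when scores are distinct, and stochastically dominates this uniform distribution in the presence of ties.

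Combining the two ingredients,
\[
\P{\psi^{\text{XGT}}_k \leq u,\, H_k \mid \mathcal{M}} \;\leq\; \P{R \leq (n+1)u,\, H_k \mid \mathcal{M}} \;\leq\; \frac{L}{n+1}\cdot\frac{\min(\lfloor (n+1)u \rfloor,\, L)}{L} \;\leq\; u,
\]
and marginalizing over $\mathcal{M}$ gives the claim. The hard part will be the second step: one must carefully argue that, given $\mathcal{M}$ and $H_k$, the distribution of $\hat{s}_k(X_{n+1})$ really is uniform over the correct size-$L$ sub-multiset, and that $R$ depends on the random arrangement of $Z_{1:n+1}$ only through $\hat{s}_k(X_{n+1})$---the comparator scores having been fixed once $\mathcal{M}$ is revealed. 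Step 1 is routine disjoint-union bookkeeping, and the concluding display is an elementary uniform-rank calculation.
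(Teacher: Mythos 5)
Your proof is correct, and it covers the same two-step architecture as the paper: first, bound $\psi^{\text{XGT}}_k$ below on $H_k$ by the rank statistic $R/(n+1)$ (this is exactly the paper's ``$\psi^{\text{XGT}}_k \geq \widetilde{\psi}_k$'' step, obtained identically by evaluating the max at a witness $i$ with $Y_i = Y_{n+1}$); second, show $R/(n+1)$ is super-uniform on $H_k$. The divergence is in how that second step is executed. The paper routes it through a general-purpose auxiliary lemma (Lemma~\ref{lemma:general-augmented-p-value}) by constructing an artificial nonconformity score $f^{(k)} = \phi_{k+1}\,\cdot\,\hat{s}_k$ (a product of a frequency indicator and the feature score), showing those $n+1$ scores are exchangeable as a symmetric function of exchangeable data, and deducing that the relative rank of $F_{n+1}$ is uniform on $\{1/(n+1),\dots,1\}$. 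You instead condition directly on the multiset $\mathcal{M}$, exploit that $\hat{s}_k$ is $\mathcal{M}$-measurable on $H_k$ by the invariance hypothesis, identify $\hat{s}_k(X_{n+1})$ as a uniform draw from a fixed size-$L$ comparator multiset, and finish with an elementary uniform-rank calculation. Both routes encode the same exchangeability principle; your version is more self-contained (no intermediate score construction or lemma invocation) and makes the role of the invariance assumption on $\hat{s}_k$ explicit in a way the paper's lemma abstracts away. One incidental advantage: you work with the indicator direction $\I{\hat{s}_k(X_i)\geq \hat{s}_k(X_{n+1})}$ as it appears in~\eqref{eq:p-value-XGT}, whereas the paper's appendix proof silently flips the inequality (a likely typo in the definition of $f^{(k)}$, which should be the product rather than the ratio); your argument is internally consistent with the definition in the body of the paper.

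One small wording caution: you write that ``the score function is itself a deterministic function of $\mathcal{M}$,'' but the hypothesis only yields this \emph{on the event} $H_k$. You use it only there, so the argument goes through; just be careful not to claim $\mathcal{M}$-measurability of $\hat{s}_k$ unconditionally.
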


Theorem~\ref{thm:XGT} allows the score function $\hat{s}_k$ to be data-dependent, provided it is invariant to permutations of $\{(X_i, Y_i)\}_{i=1}^{n+1}$ under $H_k$. We now describe a practical, though not unique, way to construct such a function. 
Although the label $Y_{n+1}$ is latent, we know that under $H_k$ it must be one of those observed exactly $k$ times among $\{Y_1,\ldots,Y_n\}$.
Therefore, the value of $Y_{n+1}$ is irrelevant if $\hat{s}_k$ is fitted using only data with label frequency different from both $k$ and $k+1$.

Concretely, a natural strategy is to train a {\em one-class classifier} \citep{moya1993one,pimentel2014review} on the features of the observed data {\em excluding} all samples with label frequency $k$ or $k+1$. For example, for $k=0$, we would fit the scoring function on the features from samples with labels appearing at least twice in the available data. In general, making its dependence on the training data explicit, this score function can be written as $\hat{s}_k(\cdot \,; \{X_i\}_{i\in [n] \setminus (\mathcal{S}_k \cup \mathcal{S}_{k+1})})$. It is easy to verify the data subset $\{X_i\}_{i\in [n] \setminus (\mathcal{S}_k \cup \mathcal{S}_{k+1})}$ is invariant to permutation of $\{(X_i, Y_i)\}_{i=1}^{n+1}$ under $H_k$, which leads to an invariant score function, as required by Theorem~\ref{thm:XGT}, as long as the learning algorithm is insensitive to the order of the input data points (if that is not the case, the training data can also simply be randomly shuffled). Moreover, this approach is computationally efficient because $X_{n+1}$ is always excluded from this training set, ensuring the same scoring function can be re-used for different test points.

In this paper, we compute feature-based conformal Good–Turing p-values using a scoring function obtained from a local outlier factor one-class classification model, as implemented by the \texttt{Scikit-learn} Python package \citep{pedregosa2011scikit}. By convention, this model assigns larger scores to features conforming to the training distribution, and therefore larger values of $\hat{s}_k(X_{n+1} \,; \{X_i\}_{i\in [n] \setminus (\mathcal{S}_k \cup \mathcal{S}_{k+1})})$ indicate $Y_{n+1}$ is less likely to have frequency $k$ or $k+1$, thereby providing evidence against $H_k$. 
While alternative constructions are certainly possible, this approach is intuitive, computationally straightforward, and—as we show in our experiments—often yields substantially higher power than feature-blind $p$-values. 
The underlying assumption is, of course, that the features are informative as to the true label frequencies. Otherwise, $\psi^{\text{XGT}}_{k}$ would not bring any advantages compared to $\psi^{\text{RGT}}_{k}$.

\subsection{Testing the Composite Hypothesis that $Y_{n+1}$ is Previously Seen Label}

When developing our conformal prediction method in Section~\ref{sec:open-set-classification}, it will be useful not only to test $H_0$ but also to test the complementary null hypothesis that $Y_{n+1}$ is {\em not} a new label.
The latter is a composite null hypothesis, which can be written as:
\begin{align} \label{eq:freq-hypothesis}
H_{\text{seen}}: \bigcup_{k \in K_n} H_k,
\end{align}
where $K_n := \{k \in [n]: M_k > 0\}$ is the set of observed label frequencies.

We propose to test $H_{\text{seen}}$ using a {\em combination p-value} of the form:
\begin{align} \label{eq:combination-pval}
  \psi_{\text{seen}} = \max_{k \in K_n} \frac{\psi_k}{c_k},
\end{align}
where each $\psi_k$ is a valid p-value for $H_k$, as in the previous section, and the $c_k$'s are suitable constants.
We now provide sufficient conditions under which $\psi_{\text{seen}}$ is super-uniform under $H_{\text{seen}}$.
\begin{theorem} \label{thm:psi-seen-super-uniformity}
Assume $\{(X_i, Y_i)\}_{i=1}^{n+1}$ are exchangeable, $\P{\psi_k \leq u, H_k} \leq u$ for all $u \in (0,1)$ and $k \in [n]$.
Suppose also $c_k \geq 0$ and $\sum_{k=1}^{n} c_k \leq 1$. Then, with $\psi_{\text{seen}}$ defined as in~\eqref{eq:combination-pval}, we have  $\P{\psi_{\text{seen}} \leq u, H_{\text{seen}}} \leq u$ for all $u \in (0,1)$ .
\end{theorem}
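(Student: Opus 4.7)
The plan is to reduce the composite null $H_{\text{seen}}$ to the elementary hypotheses $H_k$ and then apply a weighted union bound. The starting observation is that $N(Y_{n+1}; Y_{1:n})$ takes a single integer value, so the events $\{H_k\}_{k=1}^{n}$ form a disjoint partition of $H_{\text{seen}}$. Indeed, $H_{\text{seen}} = \{N(Y_{n+1};Y_{1:n}) \geq 1\} = \bigsqcup_{k=1}^{n} H_k$, because writing $H_{\text{seen}}$ as a union over the random set $K_n$ is equivalent to a union over $[n]$ (the $H_k$ with $k \notin K_n$ are empty events). This decomposition also yields the key fact that, on $H_k$, one has $M_k \geq 1$ and hence $k \in K_n$, which guarantees that the $k$-th term actually appears in the maximum defining $\psi_{\text{seen}}$.

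With that in hand, I would argue pointwise on each $H_k$, using the convention $\psi_k / 0 := +\infty$ so that the ratios are always well defined. Since $k \in K_n$ on $H_k$,
\[
\psi_{\text{seen}} = \max_{k' \in K_n} \frac{\psi_{k'}}{c_{k'}} \geq \frac{\psi_k}{c_k},
\]
so the event $\{\psi_{\text{seen}} \leq u\} \cap H_k$ is contained in $\{\psi_k \leq u c_k\} \cap H_k$ for any $u \in (0,1)$.

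Combining these observations and invoking the super-uniformity of each individual $\psi_k$ under $H_k$,
\[
\P{\psi_{\text{seen}} \leq u,\, H_{\text{seen}}} \;=\; \sum_{k=1}^{n} \P{\psi_{\text{seen}} \leq u,\, H_k} \;\leq\; \sum_{k=1}^{n} \P{\psi_k \leq u c_k,\, H_k} \;\leq\; \sum_{k=1}^{n} u\, c_k \;\leq\; u,
\]
where the final inequality uses the assumption $\sum_{k=1}^{n} c_k \leq 1$.

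The proof is essentially a weighted Bonferroni combination, so I do not expect any deep technical obstacle. The points that require care are (i) verifying disjointness of the $H_k$ and confirming that $k \in K_n$ almost surely on $H_k$, which is what allows passing from the maximum defining $\psi_{\text{seen}}$ to a bound on the individual $\psi_k$, and (ii) handling possibly zero weights $c_k$ through the convention $\psi_k / 0 := +\infty$ so that the maximum is always well defined. Neither is substantive, so the statement follows directly from Theorems~\ref{theorem:GT-super-uniform} and~\ref{thm:XGT} (or whichever elementary $p$-values are plugged in).
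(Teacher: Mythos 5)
Your proposal is correct and follows essentially the same route as the paper: decompose the composite null into the elementary events $H_k$, bound the maximum defining $\psi_{\text{seen}}$ from below by the single term $\psi_k/c_k$ on $H_k$ (using that $k\in K_n$ there), and conclude with a weighted union bound and the constraint $\sum_k c_k \le 1$. Your version is slightly more careful than the paper's on two minor points — summing over the deterministic index set $[n]$ rather than the random $K_n$ (observing $H_k=\varnothing$ whenever $k\notin K_n$), and explicitly adopting the convention $\psi_k/0:=+\infty$ to handle $c_k=0$ — but these are cosmetic tightenings of the same argument, not a different proof.
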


We now discuss the choice of the $c_k$ constants. The {\em empirically} optimal weights, minimizing $\psi_{\text{seen}}$ conditional on $\{\psi_k\}_{k \in K_n}$, are obtained by solving
\[
\min_{\{c_k \geq 0\}} \; \max_{k \in K_n} \left\{ \frac{\psi_k}{c_k} \right\}
\quad \text{subject to} \quad \sum_{k \in K_n} c_k \leq 1.
\]
This solution sets $c_k \propto \psi_k$ for $k \in K_n$.  
However, such weights are infeasible in practice since they depend on the data, while Theorem~\ref{thm:psi-seen-super-uniformity} requires fixed constants.  

Still, the ideal target $c_k \propto \psi_k$ suggests a practical approximation. From~\eqref{eq:p-value-GT}, we know that $\psi_k \propto M_{k+1}$, and it is well documented that in many real-world categorical data sets $M_{k+1}$ exhibits approximate power-law tail behavior \citep{ferrer2001two,zipf2016human}. This motivates choosing a fixed value of $c_k$ according to a power law as well:  
\begin{align} \label{eq:combination-pval-power-law}
c_k = \frac{k^{-\beta}}{\sum_{j=1}^n j^{-\beta}}, \quad k \in [n],
\end{align}
where $\beta > 0$ is a fixed scaling parameter.
Based on extensive numerical experiments, we recommend $\beta = 1.6$ as a robust default, though $\beta$ may be tuned (e.g., via cross-validation) for specific applications. All experiments in this paper use $\beta = 1.6$ unless otherwise stated.

\section{Open-Set Conformal Classification} \label{sec:open-set-classification}

\subsection{Conformal Good--Turing Classification} \label{sec:cgtc}

We can now present our method for open-set conformal classification, which we call {\em conformal Good--Turing classification}.
Let $\psi_{\text{unseen}}$ denote a conformal Good--Turing p-value for testing the null hypothesis $H_{\text{unseen}} := H_0$, corresponding to $k=0$ in~\eqref{eq:freq-hypothesis}.
Let $\psi_{\text{seen}}$ denote a conformal Good--Turing p-value for testing the null hypothesis $H_{\text{seen}}$, defined in~\eqref{eq:freq-hypothesis}.
Fix any $\alpha_{\text{class}},\alpha_{\text{unseen}},\alpha_{\text{seen}} \in [0,1]$ such that $\alpha_{\text{class}}+\alpha_{\text{unseen}}+\alpha_{\text{seen}} = \alpha$, where $1-\alpha$ is the desired coverage level. We postpone to later the discussion of how to best allocate a given $\alpha$ budget to $\alpha_{\text{class}},\alpha_{\text{unseen}},\alpha_{\text{seen}}$.

As anticipated earlier, our prediction sets will sometimes include a ``joker'' symbol $\joker$, whose interpretation is that of a ``catch-all'' for every label not observed among the first $n$ data points; i.e., $\{\joker\} := \mathcal{Y} \setminus \mathcal{Y}_n$.
Intuitively, the prediction set for $Y_{n+1}$ should include the joker unless there is sufficient evidence that $Y_{n+1}$ is not a new label.
Conversely, it is safe for the prediction set to include only the joker, and no previously seen labels, if there is sufficient evidence that $Y_{n+1}$ is a new label.
This leads to the following open-set prediction set:
\begin{align} \label{eq:open-set-predictor}
  \hat{C}_{\alpha}^*(X_{n+1})
  & := \begin{cases}
    \hat{C}_{\alpha_{\text{class}}}(X_{n+1}; \mathcal{Y}_n), & \text{if } \psi_{\text{unseen}} \leq \alpha_{\text{unseen}} \text{ and } \psi_{\text{seen}} > \alpha_{\text{seen}}, \\
    \{\joker\}, & \text{if } \psi_{\text{unseen}} > \alpha_{\text{unseen}} \text{ and } \psi_{\text{seen}} \leq \alpha_{\text{seen}}, \\
    \hat{C}_{\alpha_{\text{class}}}(X_{n+1}; \mathcal{Y}_n) \cup \{\joker\}, & \text{otherwise.}
  \end{cases}
\end{align}
Above, $\hat{C}_{\alpha_{\text{class}}}(X_{n+1}; \mathcal{Y}_n)$ denotes a standard conformal prediction set constructed under the closed-set setting, assuming $\mathcal{Y} = \mathcal{Y}_n$, at level $\alpha_{\text{class}}$. Any existing conformal classification method can be applied for this purpose, as reviewed in Section~\ref{sec:preliminaries}. In particular, this includes both split- and full-conformal prediction approaches, based on any choice of non-conformity scores.

The following result establishes that $\hat{C}_{\alpha}^*(X_{n+1})$ has marginal coverage at level $1-\alpha$ as long as $\hat{C}_{\alpha_{\text{class}}}(X_{n+1}; \mathcal{Y}_n)$ is valid under the closed-set setting and $\psi_{\text{unseen}}, \psi_{\text{seen}}$ are super-uniform under their respective null hypotheses.
\begin{theorem} \label{theorem:coverage-cgtc}
Assume $\{(X_i, Y_i)\}_{i=1}^{n+1}$ are exchangeable, $\P{\psi_{\text{unseen}} \leq \alpha_{\text{unseen}}, H_{\text{unseen}}} \leq \alpha_{\text{unseen}}$ and $\P{\psi_{\text{seen}} \leq \alpha_{\text{seen}}, H_{\text{seen}}} \leq \alpha_{\text{seen}}$. 
Assume also $\hat{C}_{\alpha_{\text{class}}}(X_{n+1}; \mathcal{Y})$ is a valid $\alpha_{\text{class}}$-level prediction set under the closed-set setting, for any label dictionary $\mathcal{Y}$; i.e., it suffices that $\P{Y_{n+1} \notin \hat{C}_{\alpha_{\text{class}}}(X_{n+1}; \mathcal{Y}_n), Y_{n+1} \in \mathcal{Y}_n} \leq \alpha_{\text{class}}$.
Then, $\P{Y_{n+1} \in \hat{C}_{\alpha}^*(X_{n+1})} \geq 1-\alpha$, where $\alpha = \alpha_{\text{class}}+\alpha_{\text{unseen}}+\alpha_{\text{seen}}$.
\end{theorem}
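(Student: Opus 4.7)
The plan is to decompose the miscoverage event $E := \{Y_{n+1} \notin \hat{C}_{\alpha}^*(X_{n+1})\}$ into sub-events indexed by (i) which of the three branches of~\eqref{eq:open-set-predictor} is active, and (ii) whether the test label is seen or unseen (i.e., whether $H_{\text{seen}}$ or $H_{\text{unseen}}$ holds), and then to bound each sub-event by one of the three ``bad'' events whose probability is controlled by assumption, closing with a union bound.

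Concretely, I would first observe that exactly one of $H_{\text{seen}}$ and $H_{\text{unseen}}$ must hold, since either $Y_{n+1} \in \mathcal{Y}_n$ or not. Under $H_{\text{unseen}}$, the prediction set fails to cover $Y_{n+1}$ only when the joker symbol is absent from $\hat{C}_{\alpha}^*(X_{n+1})$, which by~\eqref{eq:open-set-predictor} corresponds precisely to the first branch, and in particular requires $\psi_{\text{unseen}} \leq \alpha_{\text{unseen}}$. Hence the contribution of this scenario to $\P{E}$ is at most $\P{\psi_{\text{unseen}} \leq \alpha_{\text{unseen}}, H_{\text{unseen}}} \leq \alpha_{\text{unseen}}$. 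Under $H_{\text{seen}}$, miscoverage occurs either in the second branch (where the output is just $\{\joker\}$ and thus cannot contain $Y_{n+1} \in \mathcal{Y}_n$), which requires $\psi_{\text{seen}} \leq \alpha_{\text{seen}}$, or in the first and third branches (where the output is $\hat{C}_{\alpha_{\text{class}}}(X_{n+1}; \mathcal{Y}_n)$ possibly together with the joker), which in either case implies $Y_{n+1} \notin \hat{C}_{\alpha_{\text{class}}}(X_{n+1}; \mathcal{Y}_n)$ while $Y_{n+1} \in \mathcal{Y}_n$.

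Combining these observations yields the inclusion
\begin{equation*}
E \subseteq \bigl\{\psi_{\text{unseen}} \leq \alpha_{\text{unseen}}, H_{\text{unseen}}\bigr\} \cup \bigl\{\psi_{\text{seen}} \leq \alpha_{\text{seen}}, H_{\text{seen}}\bigr\} \cup \bigl\{Y_{n+1} \notin \hat{C}_{\alpha_{\text{class}}}(X_{n+1}; \mathcal{Y}_n),\ Y_{n+1} \in \mathcal{Y}_n\bigr\}.
\end{equation*}
A union bound, together with the three hypotheses of the theorem (super-uniformity of $\psi_{\text{unseen}}$ under $H_{\text{unseen}}$, of $\psi_{\text{seen}}$ under $H_{\text{seen}}$, and closed-set validity of $\hat{C}_{\alpha_{\text{class}}}$), then gives $\P{E} \leq \alpha_{\text{unseen}} + \alpha_{\text{seen}} + \alpha_{\text{class}} = \alpha$, which is exactly the desired conclusion.

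The proof is essentially pure bookkeeping; there is no analytical difficulty beyond making sure the case analysis is exhaustive and correctly maps each miscoverage scenario into one of the three controlled events. The only subtlety worth double-checking is the third branch under $H_{\text{unseen}}$, where $\joker \in \hat{C}_{\alpha}^*(X_{n+1})$ automatically ensures coverage, so that branch contributes nothing, and similarly the second branch under $H_{\text{unseen}}$, where $\{\joker\}$ alone already covers $Y_{n+1}$. This confirms that the only channels through which miscoverage can arise are the three listed above, and the budget decomposition $\alpha = \alpha_{\text{class}} + \alpha_{\text{unseen}} + \alpha_{\text{seen}}$ precisely matches the three terms in the union bound.
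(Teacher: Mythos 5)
Your proposal is correct and follows essentially the same decomposition as the paper: split the miscoverage event by whether $Y_{n+1}$ is a seen or unseen label, map each branch of~\eqref{eq:open-set-predictor} to one of the three controlled events, and close with a union bound. The only cosmetic difference is that you drop the extra $\psi_{\text{seen}} > \alpha_{\text{seen}}$ qualifier from the third event before applying the union bound, which makes the inclusion slightly looser but yields the same bound.
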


Moreover, the next result demonstrates that our method is a natural generalization of existing approaches. 
In particular, if the label dictionary $\mathcal{Y}$ has finite cardinality $K = |\mathcal{Y}|$, there exists an allocation of the $\alpha$-budget (depending on $K$) that allows our method to recover the standard conformal prediction sets, up to a small conservative $\alpha$-adjustment of order $\mathcal{O}(K/n)$. This asymptotically vanishing extra conservativeness is the price to pay for having guaranteed coverage at level $1-\alpha$ despite not knowing in advance the identity of all possible labels.

\begin{proposition}
\label{prop:recovery}
Assume $\{(X_i, Y_i)\}_{i=1}^{n+1}$ are exchangeable and the label space satisfies $|\mathcal{Y}| = K$ for some finite $K$, with $(K+1)/(n+1) < \alpha$. If we set $(\alpha_{\mathrm{class}},\alpha_{\mathrm{unseen}},\alpha_{\mathrm{seen}}) = (\alpha - (K+1)/(n+1), (K+1)/(n+1), 0)$,
the prediction set in~\eqref{eq:open-set-predictor} obtained using any of the p-values $\psi_{0}^{\text{GT}}$, $\psi_{0}^{\text{RGT}}$, or $\psi_{0}^{\text{XGT}}$ in~\eqref{eq:p-value-GT}, \eqref{eq:p-value-RGT}, or~\eqref{eq:p-value-XGT} respectively, satisfies $\hat{C}^{*}_{\alpha}(X_{n+1}) \;=\; \hat{C}_{\alpha_{\mathrm{class}}}(X_{n+1};\mathcal{Y}_n)$ almost surely.
\end{proposition}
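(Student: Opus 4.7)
The strategy is to show that, with the proposed allocation, the first branch of the three-way case distinction in \eqref{eq:open-set-predictor} is triggered almost surely. Since that branch returns exactly $\hat{C}_{\alpha_{\mathrm{class}}}(X_{n+1};\mathcal{Y}_n)$, the claim then follows. I therefore need to verify, almost surely, the two conditions $\psi_{\text{unseen}} \leq \alpha_{\text{unseen}}$ and $\psi_{\text{seen}} > \alpha_{\text{seen}}$. The assumption $(K+1)/(n+1) < \alpha$ simply ensures $\alpha_{\mathrm{class}} > 0$, so that $\hat{C}_{\alpha_{\mathrm{class}}}(X_{n+1};\mathcal{Y}_n)$ is a meaningful closed-set prediction set to which Theorem~\ref{theorem:coverage-cgtc} and its closed-set guarantee apply.

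The condition on $\psi_{\text{seen}}$ is immediate: with $\alpha_{\text{seen}} = 0$ it suffices to observe that, for $n \geq 1$, the set $K_n$ of observed frequencies is non-empty, and each $\psi_k$ used to build $\psi_{\text{seen}}$ through \eqref{eq:combination-pval} is strictly positive by inspection of \eqref{eq:p-value-GT}--\eqref{eq:p-value-XGT} (each has numerator at least $1$, so $\psi_k \geq 1/(n+1)$). With the positive constants $c_k$, the maximum in \eqref{eq:combination-pval} is strictly positive, so $\psi_{\text{seen}} > 0 = \alpha_{\text{seen}}$ almost surely, which rules out the second branch of \eqref{eq:open-set-predictor}.

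For the condition on $\psi_{\text{unseen}}$, the key observation is that $M_1$, the number of distinct labels appearing exactly once in $Y_{1:n}$, is bounded by the total number of distinct labels in $\mathcal{Y}$, so $M_1 \leq K$. Plugging $k=0$ into \eqref{eq:p-value-GT} gives $\psi_0^{\text{GT}} = (M_1+1)/(n+1) \leq (K+1)/(n+1) = \alpha_{\text{unseen}}$. For $\psi_0^{\text{RGT}}$, the numerator in \eqref{eq:p-value-RGT} is at most $M_1+1$ almost surely, yielding the same bound. For $\psi_0^{\text{XGT}}$, I would use the simplification already noted by the authors after \eqref{eq:p-value-XGT}: since $\mathcal{S}_0$ is empty (every observed label has frequency at least one), the maximum term vanishes and $\psi_0^{\text{XGT}} = (1 + \sum_{i \in \mathcal{S}_1} \I{\hat{s}_0(X_i) \geq \hat{s}_0(X_{n+1})})/(n+1) \leq (1 + |\mathcal{S}_1|)/(n+1) = (M_1+1)/(n+1) \leq \alpha_{\text{unseen}}$. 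Combining this with the previous paragraph places us in the first branch of \eqref{eq:open-set-predictor} almost surely, giving the identity. There is no serious obstacle; the only point that requires attention is the convention that the $\max$ over the empty set $\mathcal{S}_0$ equals zero in \eqref{eq:p-value-XGT}, which is implicitly adopted in the authors' earlier simplification of the $k=0$ expression.
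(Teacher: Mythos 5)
Your proof is correct and follows essentially the same route as the paper's: both arguments show that, with the given allocation, the first branch of \eqref{eq:open-set-predictor} is triggered almost surely by bounding $\psi_{\text{unseen}}$ above by $(M_1+1)/(n+1) \leq (K+1)/(n+1) = \alpha_{\text{unseen}}$ for each of the three p-value constructions and by noting $\psi_{\text{seen}} > 0 = \alpha_{\text{seen}}$. The only superficial difference is that the paper bounds $\psi_0^{\text{RGT}}$ and $\psi_0^{\text{XGT}}$ by $\psi_0^{\text{GT}}$ directly, while you bound each by inspection of its definition, and you make explicit the (harmless) empty-max convention in \eqref{eq:p-value-XGT} when $\mathcal{S}_0 = \emptyset$.
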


\subsection{Data-Driven Allocation of Significance Budget} \label{sec:alpha-allocation}

While Theorem~\ref{theorem:coverage-cgtc} guarantees that~\eqref{eq:open-set-predictor} achieves marginal coverage at level $1-\alpha$ for any allocation $(\alpha_{\text{class}},\alpha_{\text{unseen}},\alpha_{\text{seen}})$ with $\alpha = \alpha_{\text{class}}+\alpha_{\text{unseen}}+\alpha_{\text{seen}}$, the usefulness of the resulting prediction sets depends on how this $\alpha$ budget is distributed. Increasing $\alpha_{\text{class}}$ produces smaller sets of seen labels, increasing $\alpha_{\text{unseen}}$ reduces reliance on the $\joker$ symbol, and increasing $\alpha_{\text{seen}}$ allows the method to reject all seen labels in cases where new labels are very common. These trade-offs depend on the problem setting: for example, in a closed-set regime where $\Pr(N_{n+1})=0$, the optimal allocation is $\alpha_{\text{class}}=\alpha$, while in a fully open-set regime with $\Pr(N_{n+1})=1$, the optimal allocation is $\alpha_{\text{seen}}=\alpha$. To adapt across these and all possible intermediate cases, we therefore develop a practical data-driven strategy for tuning $(\alpha_{\text{class}},\alpha_{\text{unseen}},\alpha_{\text{seen}})$.

It is worth emphasizing the data-driven approach presented below is heuristic, in the sense it may theoretically invalidate the coverage guarantee provided by Theorem~\ref{theorem:coverage-cgtc}, which assumes $(\alpha_{\text{class}},\alpha_{\text{unseen}},\alpha_{\text{seen}})$ are fixed.
Nonetheless, it is useful and will be validated empirically in Section~\ref{sec:empirical}, where we show the coverage of our method is robust this tuning in practice.

Our tuning strategy is intended to (approximately) maximize the informativeness of the prediction sets, which in this setting depends on two factors: their size and how often they include the joker symbol.  
Formally, we define the size of a prediction set as the number of seen labels it contains, plus one if it also includes the joker:
   \begin{align} \label{eq:def-cardinality}
  |\hat{C}_{\alpha}^*(X_{n+1})|
  & := \begin{cases}
    |\hat{C}_{\alpha_{\text{class}}}(X_{n+1}; \mathcal{Y}_n)|, & \text{if } \psi_{\text{unseen}} \leq \alpha_{\text{unseen}} \text{ and } \psi_{\text{seen}} > \alpha_{\text{seen}}, \\
    1, & \text{if } \psi_{\text{unseen}} > \alpha_{\text{unseen}} \text{ and } \psi_{\text{seen}} \leq \alpha_{\text{seen}}, \\
    |\hat{C}_{\alpha_{\text{class}}}(X_{n+1}; \mathcal{Y}_n)| + 1, & \text{otherwise.}
  \end{cases}
    \end{align}

Minimizing the prediction set size alone is not entirely satisfactory, however, because a set containing $\joker$ intuitively seems less informative than one with an extra seen label but no joker. To capture this trade-off between seen labels and the joker, we define a composite loss function designed to penalize both larger prediction sets and, separately, sets including the joker symbol:  
\begin{align} \label{eq:alpha_allocation_loss}
L\bigl(\hat{C}_{\alpha}^*(X_{n+1}),Y_{n+1};\lambda\bigr)
=
\lambda\,
\frac{\bigl|\hat{C}_{\alpha}^*(X_{n+1})\bigr|}
     {\bigl|\hat{C}_{\alpha}(X_{n+1}; \mathcal{Y}_n)\bigr|}
\;+\;
(1-\lambda)\,
\mathbb{I}\bigl\{\joker\in\hat{C}_{\alpha}^*(X_{n+1}),\,Y_{n+1}\in\mathcal{Y}_n\bigr\}.
\end{align}
Above, $\lambda \in [0,1]$ is a user-specified parameter balancing the desire for small prediction sets (first term) against the cost of wasting a joker when the true label is seen (second term).  

The optimal allocation of $(\alpha_{\text{class}},\alpha_{\text{unseen}},\alpha_{\text{seen}})$ is then defined as  
\[
{\arg\min}_{\substack{\alpha_{\text{class}},\alpha_{\text{unseen}},\alpha_{\text{seen}} \geq 0 \\ \alpha_{\text{class}} + \alpha_{\text{unseen}} + \alpha_{\text{seen}} = \alpha}} \; \mathbb{E}\Bigl[L\bigl(\hat{C}_{\alpha}^*(X_{n+1}),Y_{n+1};\lambda\bigr)\Bigr]
\]
In practice, we approximate this expectation empirically using a cross-validation approach: the data are split into $K$ folds, and for each candidate allocation we compute the empirical loss on the held-out fold while applying our method on the data from the remaining folds. 
This procedure is summarized by Algorithm~\ref{alg:alpha-tuning-loss-optimization} in Appendix~\ref{app:alpha-allocation}.

\subsection{Frequency-Based Selective Sample Splitting} \label{sec:selective-splitting}

So far we have not specified how the closed-set conformal predictor $\hat{C}_{\alpha_{\text{class}}}(X_{n+1}; \mathcal{Y}_n)$ is constructed, and our results apply equally to split and full conformal methods. Since full conformal prediction is often computationally prohibitive, we expect that in practice our approach will typically rely on a split-conformal scheme. This raises the question of how best to divide the data into training and calibration sets when the label space is very large. The standard practice in conformal prediction is to split samples uniformly at random, but this can be highly suboptimal in open-set settings, where class distributions are extremely imbalanced.  

The core issue is that single observations from rare classes are far more valuable than single observations from common ones. If such rare examples are placed in the calibration set rather than training, the model cannot learn how to classify them. Therefore, rare observations should be prioritized for training rather than calibration. Such a ``selective'' splitting strategy however has two costs: (1) it breaks exchangeability between calibration and test samples, though this can be corrected through appropriate reweighting \citep{tibshirani2019conformal}; and (2) it may reduce conditional coverage for rare classes \citep{sesia2023conformal}, although marginal validity is still guaranteed. We will show empirically that these trade-offs are worthwhile in practice.

It is important to clarify that, in this section, we assume $\hat{C}_{\alpha_{\text{class}}}(X_{n+1}; \mathcal{Y}_n)$ is built following the standard procedure reviewed in Section~\ref{sec:methods-closed-set}, but evaluating the nonconformity scores only on a calibration subset $\mathcal{D}^{\text{cal}} \subset [n]$, while the remaining data in $\mathcal{D}^{\text{train}} = [n] \setminus \mathcal{D}^{\text{cal}}$ are used to fit the model defining the score function.  
Crucially, sample-splitting will apply only to the construction of $\hat{C}_{\alpha_{\text{class}}}$; all other components of our method continue to utilize the full data sample. In particular, the observed label subspace is still defined as $\mathcal{Y}_n := \text{unique}(Y_1,\ldots,Y_n)$, and the conformal $p$-values $\psi_{\text{seen}}$ and $\psi_{\text{unseen}}$ in~\eqref{eq:open-set-predictor} are likewise computed from all available samples, as in Section~\ref{sec:testing}.

The selective sample-splitting scheme we propose is as follows.
Recall that, for any $y \in \mathcal{Y}$, $N(y; Y_{1:n})$ counts how many times $y$ appears in $\{Y_1,\ldots,Y_n\}$.
For each $y \in \mathcal{Y}_n$ and $i \in [n] : Y_i = y$, we randomly sample an independent binary variable $I_i \in \{0,1\}$ conditional on the label sequence $Y_{1:n}$ from a Bernoulli distribution whose success probability depends on $N(Y_i; Y_{1:n})$:
\[
I_i \overset{\text{ind.}}{\sim} \operatorname{Bernoulli}\bigl(\pi(N(Y_i; Y_{1:n}))\bigr),
\]
where $\pi:\mathbb{N}\to[0,1]$ is a monotone increasing function, whose choice is discussed at the end of this section.
Then, we assign the samples with $I_i = 1$ to the calibration set:
\begin{align*}
  & \mathcal{D}^{\text{cal}} = \{ i \in [n] : I_i = 1\},
  & \mathcal{D}^{\text{train}} = [n] \setminus \mathcal{D}^{\text{cal}}.
\end{align*}

In general, selective sample splitting breaks the exchangeability between the data indexed by $\mathcal{D}^{\text{cal}}$ and the test point $(X_{n+1}, Y_{n+1})$, which could potentially invalidate the assumption $\P{Y_{n+1} \notin \hat{C}_{\alpha_{\text{class}}}(X_{n+1}; \mathcal{Y}_n), Y_{n+1} \in \mathcal{Y}_n} \leq \alpha_{\text{class}}$ required in Theorem~\ref{theorem:coverage-cgtc} to ensure the validity of our method.
Fortunately, the closed-set validity of $\hat{C}_{\alpha_{\text{class}}}(X_{n+1}; \mathcal{Y}_n)$ can be recovered by applying {\em weighted} conformal prediction techniques \citep{tibshirani2019conformal} to account for the distribution shift when computing the conformal p-values $p(y)$ in \eqref{eq:conformal-pvalues-y}, as described next.

For each candidate test label $y \in \mathcal{Y}_n$, consider swapping the test point (with assumed label $Y_{n+1} = y$) with a calibration point $j \in \mathcal{D}^{\text{cal}}$ (with label $Y_j$). 
Define the corresponding swapped label sequences $Y_{1:n}^{(j,y)} = (Y_{1}^{(j,y)}, \ldots, Y_{n}^{(j,y)})$ and $Y_{1:(n+1)}^{(j,y)} = (Y_{1}^{(j,y)}, \ldots, Y_{n+1}^{(j,y)})$, where:
\[
Y_i^{(j,y)} = \begin{cases}
y & \text{if } i = j, \\
Y_j & \text{if } i = n+1, \\
Y_i & \text{otherwise.}
\end{cases}
\]
After such swapping, the empirical frequency of each label $y' \in \mathcal{Y}_n$ becomes:
\[
N(y'; Y_{1:n}^{(j,y)}) = 
\begin{cases}
N(y'; Y_{1:n}) + 1 & \text{if } y' = y \text{ and } y \neq Y_j, \\
N(y'; Y_{1:n}) - 1 & \text{if } y' = Y_j \text{ and } y \neq Y_j, \\
N(y'; Y_{1:n}) & \text{otherwise.}
\end{cases}
\]
With this notation, for all $j \in \mathcal{D}^{\text{cal}} \cup \{n+1\}$, we define the {\em conformalization weights} 
\begin{align} \label{eq:conformalization-weights}
w_j(y) = \frac{p^{(j)}(y)}{p^{(n+1)}(y) + \sum_{k \in \mathcal{D}^{\text{cal}}} p^{(k)}(y)},
\end{align}
where
\[
p^{(j)}(y) := \pi\bigl(N(y; Y_{1:n}^{(j,y)})\bigr) \prod_{i \in \mathcal{D}^{\text{cal}}, i \neq j} \pi\bigl(N(Y_i^{(j,y)}; Y_{1:n}^{(j,y)})\bigr) \prod_{i \in \mathcal{D}^{\text{train}}} \bigl(1 - \pi(N(Y_i^{(j,y)}; Y_{1:n}^{(j,y)}))\bigr).
\]
At first glance, computing each of these weights directly appears to require evaluating a product over all $n$ samples; consequently, computing all weights naively incurs a total cost of $\mathcal{O}(n^2)$ per test point, which is computationally expensive. However, with a more careful implementation, this cost can be reduced from $\mathcal{O}(n^2)$ to $\mathcal{O}(n)$. Additional implementation details, including a fast computational shortcut for computing these weights in practice, are provided in Appendix~\ref{app:computation-shortcut}.

Finally, the weighted version of the closed-set prediction set $\hat{C}_{\alpha_{\text{class}}}(X_{n+1}; \mathcal{Y}_n)$ is given by
\begin{align} \label{eq:weighted-split}
\hat{C}^{\pi}_{\alpha}(X_{n+1}; \mathcal{Y}_n) = \left\{y \in \mathcal{Y}_n : w_{n+1}(y) +\sum_{j\in\mathcal{D}^{\text{cal}}}w_j(y)\I{S_j \geq S_{n+1}} > \alpha\right\},
\end{align}
where $S_j$ is the non-conformity score for calibration point $j$ and $S_{n+1}$ is the score for the test point assuming $Y_{n+1}=y$. 
In the split conformal approach, the model is pre-trained once using an independent training data set, so that the scores $s((X_i,Y_i); \mathcal{D}(y))$ do not depend on $\mathcal{D}(y)$. Consequently, we can write $S_j$ and $S_{n+1}$ instead of $S_j^{(y)}$ and $S_{n+1}^{(y)}$.

Intuitively, this is equivalent to $\hat{C}^{\pi}_{\alpha}(X_{n+1}; \mathcal{Y}_n) := \left\{ y \in \mathcal{Y}_n : p(y) > \alpha \right\}$, as in~\eqref{eq:standard-prediction-set}, after replacing the unweighted p-value $p(y)$ defined in~\eqref{eq:conformal-pvalues-y} with its weighted counterpart 
\[
  p^{\pi}(y) := w_{n+1}(y) + \sum_{j\in\mathcal{D}^{\text{cal}}}w_j(y)\I{S_j \geq S_{n+1}}.
\]
In fact, if the probability function $\pi$ is constant, it is easy to verify that $w_j(y) = 1/(1+|\mathcal{D}^{\text{cal}}|)$.

The following result establishes that this weighted conformal prediction set satisfies the closed-set validity requirement needed by Theorem~\ref{theorem:coverage-cgtc} to guaranteed the validity of our open-set method, for any choice of the function $\pi$.

\begin{theorem}
\label{thm:weighted-split-validity}
Assume $\{(X_i, Y_i)\}_{i=1}^{n+1}$ are exchangeable. For any $\alpha \in (0,1)$, the conformal prediction set $\hat{C}^{\pi}_{\alpha}(X_{n+1}; \mathcal{Y}_n)$ constructed as in~\eqref{eq:weighted-split} using a fixed probability function $\pi$ satisfies:
\[
\P{Y_{n+1} \notin \hat{C}^{\pi}_{\alpha}(X_{n+1}; \mathcal{Y}_n), Y_{n+1} \in \mathcal{Y}_n} \leq \alpha.
\]
\end{theorem}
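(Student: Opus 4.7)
My strategy is to adapt the weighted conformal prediction framework of \citet{tibshirani2019conformal} to the label-dependent selection mechanism that defines $\mathcal{D}^{\text{cal}}$. The event of interest $\{Y_{n+1}\notin\hat{C}^{\pi}_{\alpha}(X_{n+1};\mathcal{Y}_n),\, Y_{n+1}\in\mathcal{Y}_n\}$ is equivalent to $\{p^{\pi}(Y_{n+1})\le\alpha,\, Y_{n+1}\in\mathcal{Y}_n\}$, where $p^{\pi}(y) := w_{n+1}(y) + \sum_{j\in\mathcal{D}^{\text{cal}}} w_j(y)\,\I{S_j \ge S_{n+1}}$. The plan is to show that $p^{\pi}(Y_{n+1})$ is super-uniform on the event $\{Y_{n+1}\in\mathcal{Y}_n\}$.

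The central step is a conditional-distribution argument. Let $\mathcal{M}$ denote the unordered multiset $\{(X_j,Y_j) : j\in\mathcal{D}^{\text{cal}}\cup\{n+1\}\}$. I would condition on the ordered training data $(Z_i)_{i\in\mathcal{D}^{\text{train}}}$, the calibration assignment $I$, and $\mathcal{M}$, and establish that under this conditioning the true test position is distributed over $\mathcal{D}^{\text{cal}}\cup\{n+1\}$ with probabilities $\{w_j(Y_{n+1})\}$. This follows from Bayes' rule: by exchangeability of $(Z_1,\ldots,Z_{n+1})$, each arrangement of the elements of $\mathcal{M}$ across positions in $\mathcal{D}^{\text{cal}}\cup\{n+1\}$ has the same prior density, whereas the conditional likelihood of having observed the same $I$ depends on which labels end up at the calibration positions through $\pi$ composed with label frequencies. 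A direct computation identifies this likelihood, up to a common normalization, with $p^{(j)}(y)$, where $j$ is the position to which the test element (carrying label $y$) would be assigned; normalization then produces the weights $w_j(y)$ from~\eqref{eq:conformalization-weights}.

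Granted this lemma, the coverage inequality follows from a routine weighted-rank argument. Order the scores $\{S_k : k \in \mathcal{D}^{\text{cal}} \cup \{n+1\}\}$ in decreasing order as $S_{k_1} \ge S_{k_2} \ge \cdots$. If the true test position is $k_\ell$, then $p^{\pi}(Y_{n+1})$ coincides with the cumulative weight $\sum_{m \le \ell} w_{k_m}(Y_{n+1})$, so the event $\{p^{\pi}(Y_{n+1}) \le \alpha\}$ occurs only when the test position lies in a prefix of the sorted list whose cumulative weight does not exceed $\alpha$. By the conditional distribution from the previous step, this event has conditional probability at most $\alpha$. Taking expectations over the conditioning and intersecting with $\{Y_{n+1} \in \mathcal{Y}_n\}$ yields the claimed bound.

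The main technical obstacle will be the bookkeeping required to verify the likelihood identification in the lemma, namely that swapping labels across positions $j$ and $n+1$ transforms the Bernoulli product $\prod_{i \in [n]} \pi(N(Y_i; Y_{1:n}))^{I_i} (1 - \pi(N(Y_i; Y_{1:n})))^{1 - I_i}$ into a quantity proportional to $p^{(j)}(y)$. This requires careful tracking of how the frequency arguments $N(\cdot; Y_{1:n}^{(j,y)})$ change under the swap $Y_j \leftrightarrow Y_{n+1}$, and of the origin of the slightly asymmetric leading factor $\pi(N(y; Y_{1:n}^{(j,y)}))$, which for $j \in \mathcal{D}^{\text{cal}}$ corresponds to the $j$-th term of the Bernoulli product under the swap, but for $j = n+1$ acts as an extended inclusion-probability factor ensuring that the $p^{(j)}$'s are comparable across all $j$. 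Once this identity is in place, the Bayesian inversion and the rank calculation are standard.
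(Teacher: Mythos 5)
Your proposal mirrors the paper's argument: condition on the multiset of data plus the training split, invoke Bayes' rule to identify the conditional placement distribution of the test element with the conformalization weights, and close with a weighted-quantile/rank argument. That is exactly the route the paper takes, so in structure the two proofs are the same.

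One substantive technical remark, anticipated by your own flag about ``bookkeeping'': your speculated reading of the $j = n+1$ leading factor as an ``extended inclusion-probability factor ensuring comparability'' is not what the Bayes' rule computation actually produces. When $j = n+1$ there is no swap and only indices in $[n]$ carry an inclusion indicator, so the likelihood term is
\[
p^{(n+1)}(y) \;=\; \prod_{i \in \mathcal{D}^{\text{cal}}} \pi\bigl(N(Y_i; Y_{1:n})\bigr) \prod_{i \in \mathcal{D}^{\text{train}}} \bigl(1 - \pi(N(Y_i; Y_{1:n}))\bigr),
\]
with no leading $\pi(N(y; Y_{1:n}))$; this is precisely what the paper's proof derives. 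Getting this right is not cosmetic: keeping the extra factor would make the normalized weights differ from the true conditional placement probabilities, and then the final weighted-rank step would not yield $\P{p^{\pi}(Y_{n+1}) \le \alpha} \le \alpha$. (Read literally at $j = n+1$, the displayed definition of $p^{(j)}(y)$ in Section~\ref{sec:selective-splitting} does include this extra factor; the proof uses the version without it, and your verification should come down on the proof's side.)
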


Returning to the choice of the probability function $\pi$, recall that its purpose is to favor assigning rarer labels to the training set, where they are most useful. We propose setting $\pi(1)=0$, ensuring that singleton labels are always used for training. For labels with frequency $k \geq 2$, we use
$\pi(k) = \min\{n^{\text{cal}}/n(1 - p_1), \; 1\}$, where $n^{\text{cal}} \leq n$ is the target calibration set size (chosen by the user) and $p_1$ denotes the proportion of singleton labels in the population. In practice, we recommend the plug-in estimate $p_1 = M_1/n$.  
Strictly speaking, this substitution is not fully valid, since $M_1$ is random and Theorem~\ref{thm:weighted-split-validity} assumes $\pi$ is fixed. Nonetheless, our simulations demonstrate that this heuristic works well in practice. Other choices of $\pi$ are also possible.

\section{Empirical Demonstrations} \label{sec:empirical}

\subsection{Setup}

We empirically evaluate the validity and effectiveness of the conformal Good--Turing classification method in open-set classification scenarios, using both simulated and real datasets.

We compare our method against the standard closed-set conformal approach reviewed in Section~\ref{sec:methods-closed-set}, which naively assumes $\mathcal{Y} = \mathcal{Y}_n$. Both methods are implemented with two sample-splitting strategies: (i) standard random splitting and (ii) the selective splitting approach introduced in Section~\ref{sec:selective-splitting}. In the standard random splitting case, exactly $90\%$ of the samples are assigned to the training set and the remaining $10\%$ to the calibration set. In the selective splitting approach, the split is randomized according to the probability function $\pi$ described in Section~\ref{sec:selective-splitting}, resulting in approximately $90\%$ of the data being used for training on average.  

This setup yields four approaches under comparison, all using the same $k$-nearest neighbor classifier with $k=5$, and the same inverse quantile nonconformity scores \citep{romano2020arc}. For the conformal Good--Turing method, feature-based conformal Good--Turing p-values (implemented as detailed in Section~\ref{sec:pvals-feature-based}) are used by default when testing $H_{\text{unseen}}$, while randomized feature-blind conformal Good--Turing p-values (Section~\ref{sec:pvals-feature-blind}) are used when testing $H_{\text{seen}}$. The constants $c_k$ used to compute $\psi_{\text{seen}}$ in~\eqref{eq:combination-pval} are set according to the power-law weighting scheme defined in~\eqref{eq:combination-pval-power-law}, with $\beta = 1.6$.
 The significance level $\alpha$ is allocated using the cross-validation tuning strategy introduced in Section~\ref{sec:alpha-allocation}. Additional implementation details are in Appendix~\ref{app:experiments}.

Prediction sets are evaluated based on two criteria: coverage and informativeness. Coverage is measured by whether the true label $Y_{n+1}$ is contained in the prediction set; when $Y_{n+1}$ is a new label, coverage equals one if and only if the prediction set includes the joker. Informativeness is assessed in two ways: first, by the cardinality of the prediction set, defined as in~\eqref{eq:def-cardinality} with the joker counted as one element; and second, by the frequency with which the joker appears in the prediction sets. Smaller sets and less frequent use of the joker indicate higher informativeness.

All reported metrics are averaged over multiple independent repetitions using different training, calibration, and test data sets.

\subsection{Numerical Experiments with Synthetic Data}

We begin by evaluating the proposed method on synthetic data generated under a Dirichlet process model, as described in Example~\ref{ex:dp-model}. The labels $Y_1,\ldots,Y_{n+1} \in \mathbb{R}$ are drawn from a Dirichlet process with (nonatomic) base distribution $P_0 = \text{Uniform}(0,1)$ and concentration parameter $\theta > 0$, which governs the probability of new labels. Conditional on each label $Y_i$, the corresponding feature vector $X_i \in \mathbb{R}^{3}$ is independently sampled from a multivariate Gaussian distribution with 3 independent coordinates, mean $Y_i$, and variance \(5 \times 10^{-6}\).
The total sample size is $n = 2000$.  

Figure~\ref{fig:dp-main-four-panel} summarizes the results, comparing the performance of different methods across a range of $\theta$ values (from $0$ to $1000$). Each point represents an average over $1000$ test samples and $50$ independent repetitions of the experiment.  

The first panel empirically verifies Theorem~\ref{theorem:coverage-standard}, showing that standard conformal classification methods fail to maintain the nominal $90\%$ marginal coverage as the proportion of unseen labels increases. In contrast, consistent with Theorem~\ref{theorem:coverage-cgtc}, conformal Good--Turing classification maintains valid marginal coverage by incorporating the joker symbol to account for new labels.  
The second panel illustrates the efficiency advantage of the selective sample splitting strategy, which yields substantially smaller (and thus more informative) prediction sets than random splitting, while maintaining valid coverage. The two rightmost panels further demonstrate that the proposed method produces informative prediction sets by using the joker symbol parsimoniously, including it only when necessary to maintain coverage due to high prevalence of new labels.

\begin{figure}[!htb]
    \centering
    \includegraphics[width=\textwidth]{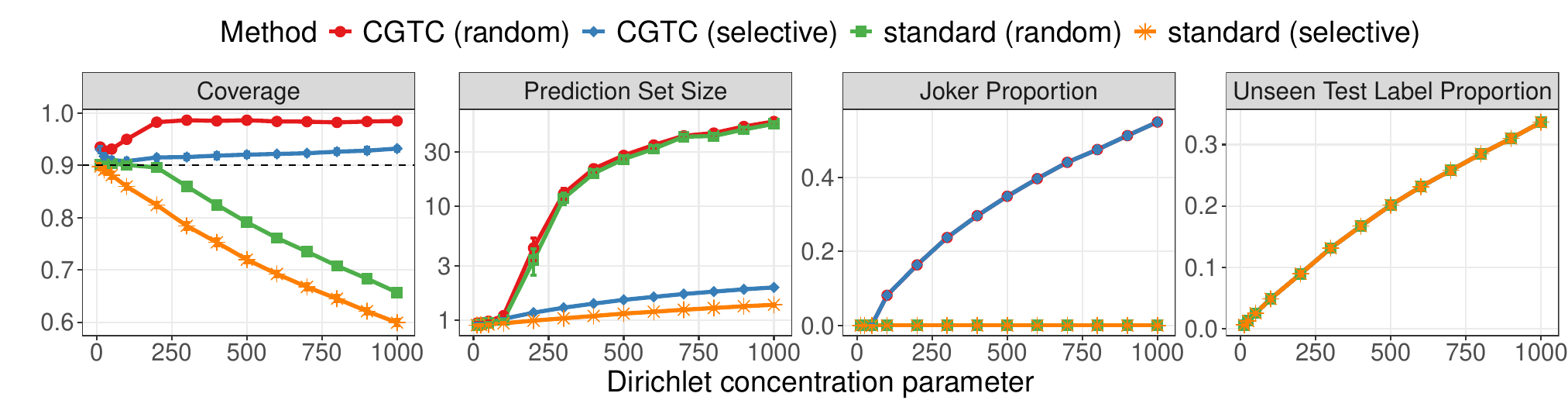}
    \caption{Performance of conformal prediction sets constructed using different methods on synthetic data generated from a Dirichlet process model, as a function of the concentration parameter $\theta$. Larger $\theta$ values correspond to a higher probability of unseen labels at test time. The nominal coverage level is $90\%$. The conformal Good--Turing classification (CGTC) method maintains valid marginal coverage even when new labels are common by incorporating a ``catch-all'' joker symbol into the prediction sets. When combined with the selective sample-splitting strategy, it also produces smaller and more informative prediction sets. Error bars represent $\pm 1.96$ standard errors.}
    \label{fig:dp-main-four-panel}
\end{figure}

Figure~\ref{fig:dp-main-conditional-coverage} provides a more detailed view of the results from Figure~\ref{fig:dp-main-four-panel} by stratifying coverage according to the frequency of the test label. Specifically, coverage is computed by grouping test points based on the true frequency of their labels, divided into four bins. The “very rare” bin includes labels that were either unseen or appeared only once in the combined training and calibration sets, while the remaining bins correspond to labels of increasing frequency. These results demonstrate that the standard conformal method performs poorly on very rare labels, whereas the conformal Good--Turing approach achieves higher coverage not only marginally but also separately across all frequency levels.

\begin{figure}[!htb]
    \centering
    \includegraphics[width=\textwidth]{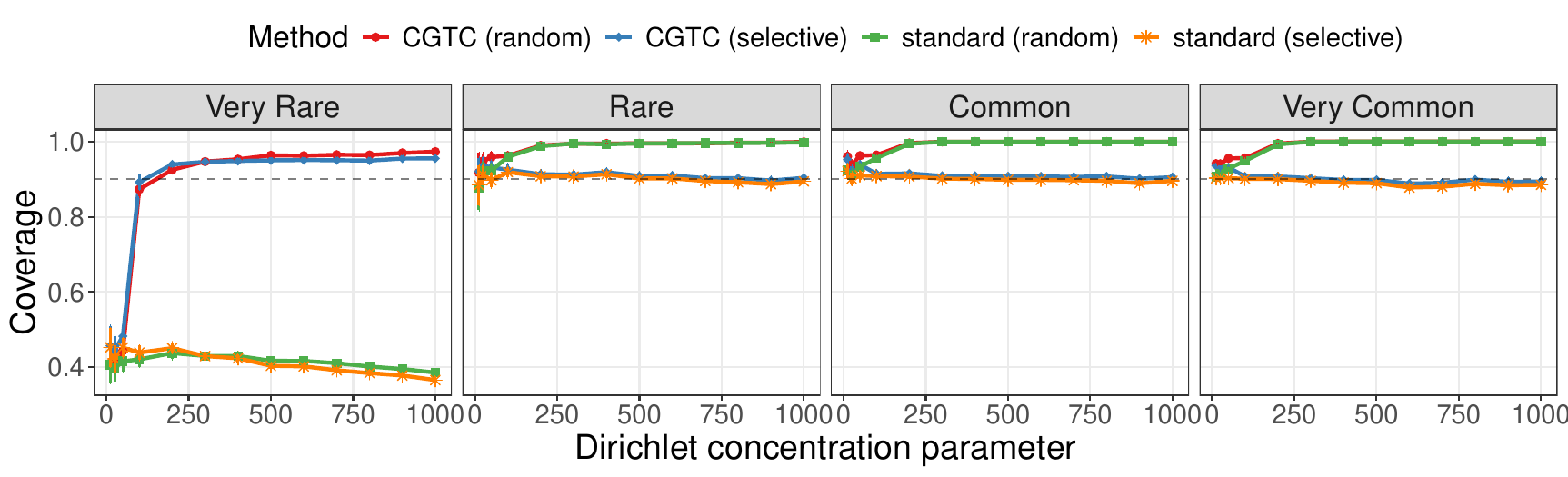}
    \caption{Coverage of conformal prediction sets constructed using different methods on synthetic data from a Dirichlet process model, as in Figure~\ref{fig:dp-main-four-panel}, stratified by test-label frequency. The standard approach fails to achieve coverage for very rare labels, whereas the conformal Good--Turing method is valid for both rare and common labels.}
    \label{fig:dp-main-conditional-coverage}
\end{figure}

Finally, Figure~\ref{fig:dp-main-pvalues-comparison} illustrates the advantages of incorporating feature information into the conformal Good--Turing $p$-values (Section~\ref{sec:pvals-feature-based}), compared with the feature-blind deterministic and randomized versions introduced in Section~\ref{sec:pvals-feature-blind}. The feature-based statistic $\psi_{0}^{\text{XGT}}$ achieves the highest power, most frequently rejecting $H_{\text{unseen}}$ and thereby minimizing the inclusion of the joker symbol in the prediction sets shown in Figure~\ref{fig:dp-main-four-panel}. In contrast, the feature-blind variants $\psi_{0}^{\text{GT}}$ and $\psi_{0}^{\text{RGT}}$ are substantially more conservative; if used to construct the prediction sets in Figure~\ref{fig:dp-main-four-panel}, they would yield less informative results with more frequent joker inclusion. The randomized version $\psi_{0}^{\text{RGT}}$ offers only a modest gain over its deterministic counterpart.

\begin{figure}[!htb]
    \centering
    \includegraphics[width=0.6\textwidth]{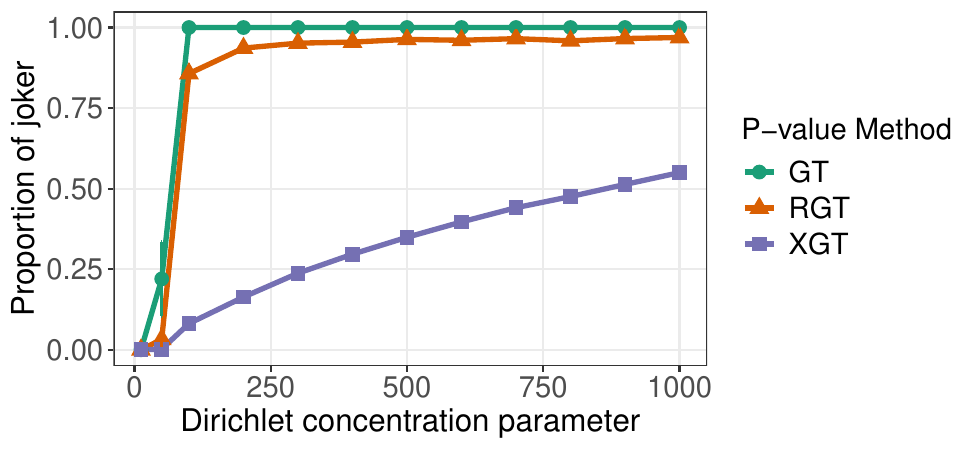}
    \caption{Percentage of conformal Good--Turing prediction sets including the joker symbol under different $p$-value constructions for testing the null hypothesis $H_{\text{unseen}}$, which states that the test point corresponds to a new label, on synthetic data. The feature-based method (XGT)—corresponding to the results shown in Figure~\ref{fig:dp-main-four-panel}—makes the most parsimonious use of the joker by attaining the highest power to reject $H_{\text{unseen}}$. In contrast, the deterministic feature-blind approach (GT) is overly conservative, and its randomized variant (RGT) reduces this conservatism only slightly.}
    \label{fig:dp-main-pvalues-comparison}
\end{figure}

In conclusion, these results show that conformal Good--Turing classification, when combined with selective sample splitting, outperforms standard benchmarks by producing more informative prediction sets while maintaining valid marginal and empirically higher conditional coverage. Additional experimental results are presented in Appendix~\ref{app:experiments}: Figure~\ref{fig:app-dp-alpha-allocation} illustrates the allocation of significance levels, Figure~\ref{fig:app-dp-pvalue-full} provides a detailed comparison of the three conformal Good--Turing $p$-value constructions, and Figures~\ref{fig:app-dp-four-panel-nref}--\ref{fig:app-dp-pvalue-full-nref} evaluate performance under an alternative setting where the concentration parameter $\theta$ is fixed and the sample size varies.

\subsection{Numerical Experiments with Real Data}

We next evaluate the methods from the previous section on real data using the CelebFaces Attributes (CelebA) dataset, which contains 202,599 facial images of size $178 \times 218$ pixels from 10,177 celebrity identities. Each image is preprocessed using a pre-trained multitask cascaded convolutional network for face detection and alignment \citep{Zhang2016MTCNN} and FaceNet \citep{Schroff2015FaceNet} for feature extraction, resulting in a robust embedding matrix with corresponding identity labels. To construct a more manageable (for repeated experiments) yet representative subset, we randomly sample 2{,}000 identities and retain all associated images. Each experiment then partitions this subset into training, calibration, and test sets, as detailed in the previous section.
All methods are applied to target 80\% marginal coverage.

Figure~\ref{fig:real-main-four-panel} reports on the performances of the prediction sets as a function of reference sample size, defined as the total number of training plus calibration samples, which is varied between 2{,}000 and 6{,}000. Results are averaged across 1{,}000 test points and 20 independent experiments. Across all sizes, our conformal Good--Turing classification method achieves valid marginal coverage, whereas the standard conformal approach under-covers when the the reference sample size is small and the test points are thus likely to contain previously unseen identities. Moreover, these results confirm that the selective sample splitting strategy substantially reduces the average size of the prediction sets. Therefore, the combination of the conformal Good--Turing classification with selective splitting again leads to the most informative prediction sets.

\begin{figure}[!htb]
    \centering
    \includegraphics[width=1\textwidth]{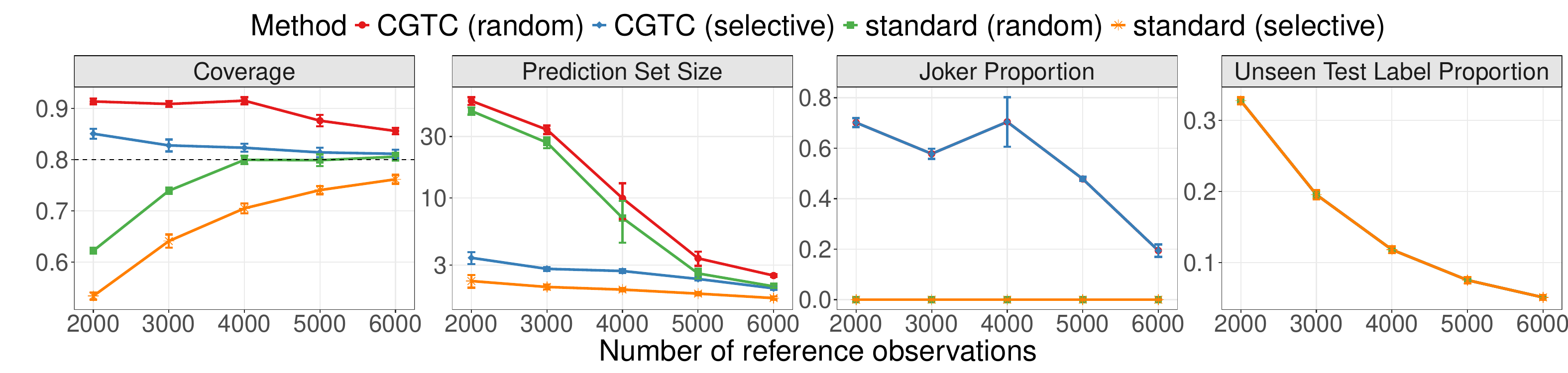}
    \caption{Performance of conformal prediction sets constructed using different methods on face recognition data from the CelebA resource, as a function of the total number of labeled data points. Conformal Good--Turing classification (CGTC) maintains valid marginal coverage across all settings and achieves smaller prediction sets when combined with the selective sample splitting strategy. Error bars indicate $1.96$ standard errors.}
    \label{fig:real-main-four-panel}
\end{figure}

Additional experimental results on the CelebA dataset are presented in Appendix~\ref{app:experiments}, yielding qualitatively consistent findings. Figure~\ref{fig:real-app-conditional-coverage} reports coverage stratified by test-label frequency. Figure~\ref{fig:real-app-alpha-allocation} illustrates the adaptive allocation of significance levels obtained through cross-validation tuning, while Figure~\ref{fig:real-prop-joker-mixed-labels-80} compares the three conformal Good--Turing $p$-value constructions. Finally, Figures~\ref{fig:app-real-four-panel-nlab}--\ref{fig:app-real-prop-joker-mixed-labels-80-nlab} analyze the performance of conformal Good--Turing classification as the total number of possible identities varies.

\section{Discussion}

This paper extends the applicability of conformal prediction to open-set classification tasks, revealing an intriguing connection between this modern topic and classical statistical work on species sampling models \citep{good1953population}. In addition, we introduced a selective sample-splitting strategy that substantially improves the performance of the proposed Good--Turing classification method in open-set settings. Beyond this context, the same strategy may also prove broadly useful for closed-set conformal classification, particularly in the presence of imbalanced data.

A limitation of conformal Good--Turing classification lies in its potential conservativeness. Because the significance levels for its classification and testing components are combined via a union bound to ensure a theoretical lower bound on coverage (Theorem~\ref{theorem:coverage-cgtc}), no corresponding upper bound is guaranteed. In particular, since the total error budget $\alpha$ is divided into $\alpha_{\text{class}}, \alpha_{\text{unseen}},$ and $\alpha_{\text{seen}}$—each strictly less than $\alpha$—our method tends to be more conservative than standard conformal prediction in closed-set settings, where the joker symbol is unnecessary. Although some conservativeness is inevitable, as our approach operates under weaker assumptions that permit new labels to arise at test time, this effect is largely mitigated by data-driven tuning (Section~\ref{sec:alpha-allocation}), which performs well empirically despite lacking formal theoretical justification.

Several directions offer promising opportunities for future work.  
First, relaxing the assumption of exchangeability \citep{barber2023conformal} would enable the method to address distributional shifts, such as covariate or class-conditional shift, potentially through additional re-weighting \citep{tibshirani2019conformal}.  
Second, our method could be adapted to online settings \citep{vovk2005algorithmic,angelopoulos2024online}, where observations arrive sequentially and the label set $\mathcal{Y}_n$ expands over time; in practice, such an extension would be most useful if it could be implemented efficiently without retraining from scratch.  
Third, this method could be extended to handle structured label spaces, such as ordinal or hierarchical labels.  
Finally, while this work tests separately the complementary null hypotheses $H_{\text{unseen}}$ and $H_{\text{seen}}$, splitting the error budget across them, alternative joint testing formulations may yield tighter coverage guarantees.

\section*{Software Availability}

A software package implementing the methods described in this paper and containing code needed to reproduce all numerical experiments is available online at \url{https://github.com/TianminX/open-set-conformal-classification}. 

\section*{Acknowledgments}

T.~X., Y.~Z., and M.~S~were partly supported by a USC-Capital One CREDIF award. M.~S. was also partly supported by a Google Research Scholar Award.


\bibliographystyle{plain}
\bibliography{ref}

\appendix
\renewcommand{\thesection}{A\arabic{section}}
\renewcommand{\theequation}{A\arabic{equation}}
\renewcommand{\thetheorem}{A\arabic{theorem}}
\renewcommand{\thecorollary}{A\arabic{corollary}}
\renewcommand{\theproposition}{A\arabic{proposition}}
\renewcommand{\thelemma}{A\arabic{lemma}}
\renewcommand{\thetable}{A\arabic{table}}
\renewcommand{\thefigure}{A\arabic{figure}}
\renewcommand{\thealgocf}{A\arabic{algocf}}
\setcounter{figure}{0}
\setcounter{table}{0}
\setcounter{proposition}{0}
\setcounter{theorem}{0}
\setcounter{lemma}{0}

\section{Review of Closed-Set Conformal Classification} \label{app:closed-set-classification}

\subsection{Non-Conformity Scores Based on Generalized Inverse Quantiles}
\label{app:adaptive-scores}

We briefly review the construction of standard conformal prediction sets based on the \emph{generalized inverse quantile} non-conformity scores proposed by~\cite{romano2020classification}, which we adopt for the numerical experiments presented in this paper.  
Compared with the classical probability scores introduced in Section~\ref{sec:methods-closed-set}, these scores are more sophisticated but offer the advantage of improved conditional coverage, as they explicitly account for potential heteroscedasticity in the conditional distribution of $Y \mid X$.

Assume that the label space $\mathcal{Y}$ is finite with cardinality $K = |\mathcal{Y}|$.  
Without loss of generality, let $\mathcal{Y} = [K] := \{1, \ldots, K\}$.  
For any $x \in \mathcal{X}$ and $k \in [K]$, let $\hat{\pi}(x, k)$ denote an estimate of $\mathbb{P}[Y = k \mid X = x]$ produced by a classification model (e.g., the output of the final softmax layer in a neural network).

For any $x \in \mathcal{X}$ and $t \in [0,1]$, let $\hat{\pi}_{(1)}(x) \geq \ldots \geq \hat{\pi}_{(K)}(x)$ denote the descending order statistics of $\hat{\pi}(x,1), \ldots, \hat{\pi}(x,K)$.  
Similarly, let $\hat{r}(x, \hat{\pi}, k)$ denote the rank of $\hat{\pi}(x,k)$ among the values $\hat{\pi}(x,1), \ldots, \hat{\pi}(x,K)$.  
Using this notation, we can define the generalized cumulative distribution function:
\begin{align*}
  \hat{\Pi}(x,\hat{\pi},k)
  = \hat{\pi}_{(1)}(x) + \hat{\pi}_{(2)}(x) + \ldots + \hat{\pi}_{(\hat{r}(x, \hat{\pi}, k))}(x).
\end{align*}

Following the notation from Section~\ref{sec:methods-closed-set}, and with sufficient generality to cover both split- and full-conformal approaches, the non-conformity scores of~\cite{romano2020classification} can be written as:
\begin{align} \label{eq:adaptive-scores}
  & S_i^{(y)} := \hat{\Pi}(X_i,\hat{\pi}^{(y)},Y_i), \quad i \in [n], 
  && S_{n+1}^{(y)} := \hat{\Pi}(X_{n+1},\hat{\pi}^{(y)},y),
\end{align}
where $\hat{\pi}^{(y)}$ represents an estimate of $\mathbb{P}[Y = k \mid X = x]$ obtained from a model trained on the augmented dataset $\mathcal{D}(y) := \{(X_1,Y_1),\ldots,(X_n,Y_n),(X_{n+1},y)\}$.

We now recall a convenient and easily computable characterization of the conformal prediction set $\hat{C}_{\alpha}(X_{n+1}; \mathcal{Y})$, as defined in~\eqref{eq:standard-prediction-set}, corresponding to this choice of non-conformity scores.

\subsection{Implementation of Closed-Set Split-Conformal Classification} \label{app:closed-set-split}

In the split-conformal implementation of closed-set classification, where the model $\hat{\pi}$ is trained on an independent dataset that does not depend on the placeholder test label $y$, the non-conformity scores can be written more simply as $S_i = \hat{\Pi}(X_i,\hat{\pi},Y_i)$, for $i \in [n]$.
Then, with the choice of scores reviewed in Section~\ref{app:adaptive-scores}, the prediction set $\hat{C}_{\alpha}(X_{n+1}; \mathcal{Y})$ in~\eqref{eq:standard-prediction-set} can be conveniently expressed as follows.

For any $x \in \mathbb{R}^{d}$ and $t \in [0,1]$, define the generalized quantile function:
\begin{align} \label{eq:oracle-threshold}
  \hat{Q}(x, \hat{\pi}, t) & := \min \{ k \in \{1,\ldots,K\} : \hat{\pi}_{(1)}(x) + \hat{\pi}_{(2)}(x) + \ldots + \hat{\pi}_{(k)}(x) \geq t \}.
\end{align}
This makes it possible to write $\hat{C}_{\alpha}(X_{n+1}; \mathcal{Y})$ in~\eqref{eq:standard-prediction-set} as:
\begin{align}  \label{eq:pred-sets-aps-split}
  \hat{C}_{\alpha}(X_{n+1}; \mathcal{Y}) := \{ k \in \mathcal{Y} : \hat{r}(X_{n+1}, \hat{\pi}, k) \leq \hat{Q}(X_{n+1}, \hat{\pi}, \hat{\tau}_{n,\alpha})\},
\end{align}
where
\begin{align} \label{eq:def-tau-hat}
  \hat{\tau}_{n,\alpha} = \lceil (1+n)\cdot(1-\alpha) \rceil \text{-th smallest value in } \{S_1, \ldots, S_n\}.
\end{align}
This procedure is summarized by Algorithm~\ref{alg:standard-closed-set}, under the closed-set assumption that the label space $\mathcal{Y}$ is known.
If the full label space is not known in advance, but is instead replaced by the plug-in estimate $\mathcal{Y}_n := \text{unique}(Y_1,\ldots,Y_n)$, then Algorithm~\ref{alg:standard-closed-set} becomes Algorithm~\ref{alg:standard-plug-in}.

\begin{algorithm}[!htb]
  \caption{Standard Split-Conformal Classification with Known Label Space}
  \label{alg:standard-closed-set}
  \SetKwInOut{Input}{Input}
  \SetKwInOut{Output}{Output}

  \Input{%
    Calibration data set $\mathcal{D}^{\mathrm{cal}} = \{(X_i, Y_i)\}_{i=1}^{n}$;\\
    Training data set $\mathcal{D}^{\mathrm{train}} = \{(X^{\mathrm{train}}_i, Y^{\mathrm{train}}_i)\}_{i=1}^{n_{\mathrm{train}}}$; \\
    Known label space $\mathcal{Y} = \{1,\ldots,K\}$; \\
    Unlabeled test point with features $X_{n+1}$; \\
    Machine-learning algorithm $\mathcal{A}$ for training a $K$-class classification model; \\
    Desired miscoverage probability $\alpha \in (0,1)$.
  }

  Train model $\hat{\pi}$ by fitting classifier $\mathcal{A}$ on $\mathcal{D}^{\mathrm{train}}$\;

  Compute non-conformity scores $S_i = \hat{s}(X_i, \tilde{Y}_i)$ for all $i \in [n]$; e.g., by using~\eqref{eq:adaptive-scores}\;

  Compute threshold $\hat{\tau}_{n,\alpha}$ as in~\eqref{eq:def-tau-hat}\;

  Evaluate prediction set $\hat{C}_{\alpha}(X_{n+1}; \mathcal{Y})$ as in~\eqref{eq:pred-sets-aps-split}.

  \Return{$\hat{C}_{\alpha}(X_{n+1}; \mathcal{Y}) \subseteq \mathcal{Y}$}.
\end{algorithm}

We refer to \cite{romano2020classification} for additional details on closed-set conformal classification, including a rigorous justification of why the non-conformity scores reviewed in Section~\ref{app:adaptive-scores} tend to achieve relatively high conditional coverage.  
We also refer to \cite{romano2020classification} for details on a randomized variant of these non-conformity scores that retains the same theoretical guarantees while producing even more informative prediction sets.  Our framework can seamlessly incorporate this randomized extension; indeed, it is the approach used in the empirical results presented in Section~\ref{sec:empirical}.  
However, to avoid unnecessary notational complexity, we omit the explicit description of this extension here.

\begin{algorithm}[!htb]
  \caption{Standard Split-Conformal Classification with Plug-In Label Space}
  \label{alg:standard-plug-in}
  \SetKwInOut{Input}{Input}
  \SetKwInOut{Output}{Output}

  \Input{%
    Data set $\{(X_i, Y_i)\}_{i=1}^{n}$;\\
    Unlabeled test point with features $X_{n+1}$; \\
    Machine-learning algorithm $\mathcal{A}$ for training a $K$-class classification model; \\
    Desired miscoverage probability $\alpha \in (0,1)$.
  }

  Define observed label space $\mathcal{Y}_n := \text{unique}(Y_1,\ldots,Y_n)$\;
  Randomly split $[n]$ into two disjoint subsets, $\mathcal{I}^{\text{train}}$ and $\mathcal{I}^{\mathrm{cal}}$\;
  Apply Algorithm~\ref{alg:standard-closed-set} with $\mathcal{Y} = \mathcal{Y}_n$, $\mathcal{D}^{\mathrm{cal}} = \{(X_i, Y_i)\}_{i \in \mathcal{I}^{\text{cal}}}$, and $\mathcal{D}^{\mathrm{train}} = \{(X_i, Y_i)\}_{i \in \mathcal{I}^{\text{train}}}$\;
  \Return{$\hat{C}_{\alpha}(X_{n+1}; \mathcal{Y}) \subseteq \mathcal{Y}_n$}.
\end{algorithm}

\clearpage

\section{Additional Methodological Details}

\FloatBarrier

\subsection{Data-Driven Allocation of Significance Budget} \label{app:alpha-allocation}

\begin{algorithm}[htb]
  \caption{Data-Driven Allocation of Significance Budget}
  \label{alg:alpha-tuning-loss-optimization}
  \DontPrintSemicolon
  \SetKwInOut{Input}{Input}
  \SetKwInOut{Output}{Output}
  \Input{tuning data $\{Z_1,\dots,Z_{n}\}$; total significance budget $\alpha$; number of folds $K$; grid size $G$ for $\alpha_{\mathrm{class}}$; preference parameter $\lambda\in[0,1]$.}
  \tcp{Initialization}
  $\text{best\_loss} \gets \infty$; \quad $(\alpha_{\mathrm{seen}}^*, \alpha_{\mathrm{class}}^*) \gets (0,0)$\;
  $\alpha_{\mathrm{seen}}^{\mathrm{candidates}} \gets \{0, 0.01, 0.02, 0.05, 0.1\}\cap[0,\alpha)$\;

  \ForEach{$\alpha_{\mathrm{seen}} \in \alpha_{\mathrm{seen}}^{\mathrm{candidates}}$}{
    $\alpha_{\mathrm{remaining}} \gets \alpha - \alpha_{\mathrm{seen}}$\;
    $\alpha_{\mathrm{class}}^{\mathrm{grid}} \gets \{0, \frac{\alpha_{\mathrm{remaining}}}{G}, \frac{2\alpha_{\mathrm{remaining}}}{G}, \dots, \alpha_{\mathrm{remaining}}\}$\;

    \ForEach{$\alpha_{\mathrm{class}} \in \alpha_{\mathrm{class}}^{\mathrm{grid}}$}{
      $\alpha_{\mathrm{unseen}} \gets \alpha_{\mathrm{remaining}} - \alpha_{\mathrm{class}}$\;
      Partition labeled data into $K$ folds $\mathcal{D}_1,\dots,\mathcal{D}_K$\;

      \For{$k \gets 1$ \KwTo $K$}{
        $\mathcal{D}_{\mathrm{train}}^{(k)} \gets \bigcup_{j\neq k} \mathcal{D}_j$; \quad
        $\mathcal{D}_{\mathrm{val}}^{(k)} \gets \mathcal{D}_k$\;

        \ForEach{$(X_i,Y_i) \in \mathcal{D}_{\mathrm{val}}^{(k)}$}{
          Apply conformal Good--Turing classification for $X_i$ using $\mathcal{D}_{\mathrm{train}}^{(k)}$ as reference data at $(\alpha_{\mathrm{class}}, \alpha_{\mathrm{unseen}}, \alpha_{\mathrm{seen}})$\;
          Compute point loss $\ell_i \gets L(\hat{C}_{\alpha}^*(X_i),Y_i;\lambda)$ for $i \in \mathcal{D}_{\mathrm{val}}^{(k)}$ as in in~\eqref{eq:alpha_allocation_loss}}

        Compute fold loss $\hat{L}_k \gets \frac{1}{|\mathcal{D}_{\mathrm{val}}^{(k)}|}\sum_{(X_i,Y_i)\in \mathcal{D}_{\mathrm{val}}^{(k)}} \ell_i$\;
      }

      Compute average loss $\bar{L}(\alpha_{\mathrm{seen}}, \alpha_{\mathrm{class}}) \gets \frac{1}{K}\sum_{k=1}^K \hat{L}_k$\;

      \If{$\bar{L}(\alpha_{\mathrm{seen}}, \alpha_{\mathrm{class}}) < \text{best\_loss}$}{
        $\text{best\_loss} \gets \bar{L}(\alpha_{\mathrm{seen}}, \alpha_{\mathrm{class}})$\;
        $(\alpha_{\mathrm{seen}}^*, \alpha_{\mathrm{class}}^*) \gets (\alpha_{\mathrm{seen}}, \alpha_{\mathrm{class}})$\;
      }
    }
  }

  $\alpha_{\mathrm{unseen}}^* \gets \alpha - \alpha_{\mathrm{seen}}^* - \alpha_{\mathrm{class}}^*$\;
  \Return $( \alpha_{\mathrm{class}}^*, \alpha_{\mathrm{unseen}}^*,
  \alpha_{\mathrm{seen}}^*)$\;
\end{algorithm}

\FloatBarrier

\subsection{Fast Computation of Conformalization Weights for Selective Sample Splitting}
\label{app:computation-shortcut}

In Section~\ref{sec:selective-splitting}, we introduced a selective sample-splitting procedure for constructing the conformal prediction set $\hat{C}^{\pi}_{\alpha}(X_{n+1}; \mathcal{Y}_n)$ as defined in~\eqref{eq:weighted-split}. This construction requires computing the ``conformalization'' weights $w_j(y)$ for each calibration point $j \in \mathcal{D}^{\text{cal}}$ and each candidate label $y \in \mathcal{Y}_n$. In this section, we describe how these weights can be computed efficiently.

Recall that the original definition of the weights in~\eqref{eq:weighted-split} is given by
\[
w_j(y) = \frac{p^{(j)}(y)}{p^{(n+1)}(y) + \sum_{k \in \mathcal{D}^{\text{cal}}} p^{(k)}(y)},
\]
where $p^{(j)}(y)$ involve products over all $n$ data points:
\begin{align*}
p^{(j)}(y) 
&=
\pi\bigl(N(y; Y_{1:n}^{(j,y)})\bigr)
\prod_{\substack{i\in \mathcal{D}^{\text{cal}}\\ i\neq j}}\pi\bigl(N(Y_i^{(j,y)}; Y_{1:n}^{(j,y)})\bigr)
\prod_{i\in \mathcal{D}^{\text{train}}}\bigl(1-\pi(N(Y_i^{(j,y)}; Y_{1:n}^{(j,y)}))\bigr).
\end{align*}
Computing each of these weights directly would require evaluating a product over all $n$ samples; consequently, computing all weights naively incurs a total cost of $\mathcal{O}(n^2)$ per test point, which is computationally prohibitive. However, with a more careful implementation this cost can be reduced from $\mathcal{O}(n^2)$ to $\mathcal{O}(n)$.

The key insight is that swapping the test point with calibration point $j$ affects only the frequencies of two labels: $y$ (the test label) and $Y_j$ (the label at position $j$). 
Therefore, instead of calculating the entire product, we can focus only on the factors that change. 

To make this intuition precise, define the set of calibration labels as
$\mathcal{Y}_{\text{cal}} := \{Y_i : i \in \mathcal{D}^{\text{cal}}\}$.
The complete set of labels involved in the weight calculations is then
$\mathcal{T} := \mathcal{Y}_{\text{cal}} \cup \{y\}$.
For any label $\ell \notin \{y, Y_j\}$, the frequency $N(\ell; Y_{1:n}^{(j,y)}) = N(\ell; Y_{1:n})$ remains unchanged after swapping. Consequently, the corresponding factors in $p^{(j)}(y)$ are identical across different values of $j$ and will cancel in the normalization.

Let us define the product of unchanged factors when swapping the test label and the label at position $j$:
\[
A_j := \prod_{\substack{i=1 \\ Y_i \notin \{y, Y_j\}}}^{n}
\begin{cases}
\pi\bigl(N(Y_i; Y_{1:n})\bigr), & \text{if } i \in \mathcal{D}^{\text{cal}},\\[1mm]
1-\pi\bigl(N(Y_i; Y_{1:n})\bigr), & \text{if } i \in \mathcal{D}^{\text{train}}.
\end{cases}
\]
To simplify the notation further, introduce the frequency counts within each split:
\[
f_{\text{cal}}(\ell) := \sum_{i \in \mathcal{D}^{\text{cal}}} \mathbb{I}\{Y_i = \ell\}, \quad
f_{\text{train}}(\ell) := \sum_{i \in \mathcal{D}^{\text{train}}} \mathbb{I}\{Y_i = \ell\}.
\]
Note that $f_{\text{cal}}(\ell) + f_{\text{train}}(\ell) = N(\ell; Y_{1:n})$ for any label $\ell$.

When $y \neq Y_j$, the weight $p^{(j)}(y)$ can be expressed as:
\begin{align*}
p^{(j)}(y) 
&=
\bigl[\pi\bigl(N(Y_j; Y_{1:n})-1\bigr)\bigr]^{f_{\text{cal}}(Y_j)-1}
\bigl[1- \pi\bigl(N(Y_j; Y_{1:n})-1\bigr)\bigr]^{f_{\text{train}}(Y_j)}\\[1mm]
&\quad \times
\bigl[\pi\bigl(N(y; Y_{1:n})+1\bigr)\bigr]^{f_{\text{cal}}(y)+1}
\bigl[1- \pi\bigl(N(y; Y_{1:n})+1\bigr)\bigr]^{f_{\text{train}}(y)}
\cdot A_j.
\end{align*}
For the baseline case without swapping, we have:
\begin{align*}
p^{(n+1)}(y)
&=
\bigl[\pi\bigl(N(Y_j; Y_{1:n})\bigr)\bigr]^{f_{\text{cal}}(Y_j)}
\bigl[1- \pi\bigl(N(Y_j; Y_{1:n})\bigr)\bigr]^{f_{\text{train}}(Y_j)}\\[1mm]
&\quad \times
\bigl[\pi\bigl(N(y; Y_{1:n})\bigr)\bigr]^{f_{\text{cal}}(y)}
\bigl[1- \pi\bigl(N(y; Y_{1:n})\bigr)\bigr]^{f_{\text{train}}(y)}
\cdot A_j.
\end{align*}
The key computational insight is to work with the ratios:
\begin{align*}
\tilde{p}^{(j)}(y) := \frac{p^{(j)}(y)}{p^{(n+1)}(y)}.
\end{align*}
Since the common factor $A_j$ cancels in this ratio, we need only compute the changes in the factors corresponding to labels $y$ and $Y_j$. 

The normalized weights can then be equivalently expressed as:
\begin{equation} \label{eq:weight-simplified-shortcut}
w_j(y) = \frac{p^{(j)}(y)}{p^{(n+1)}(y) + \sum_{k \in \mathcal{D}^{\text{cal}}} p^{(k)}(y)} = \frac{\tilde{p}^{(j)}(y)}{1 + \sum_{k \in \mathcal{D}^{\text{cal}}} \tilde{p}^{(k)}(y)}.
\end{equation}
This computational shortcut reduces the complexity from $O(n)$ per weight to $O(1)$ per weight after an initial $O(n)$ preprocessing step to compute the frequency counts $f_{\text{cal}}(\cdot)$ and $f_{\text{train}}(\cdot)$. 

\FloatBarrier

\FloatBarrier

\section{Mathematical Proofs} \label{app:proofs}

\subsection{From Closed-Set Conformal Classification to Open-Set Scenarios}

\begin{lemma} \label{lemma:coverage-standard-oracle}
Assume $\{(X_i,Y_i)\}_{i=1}^{n+1}$ are exchangeable. Let $\hat{C}_{\alpha}(X_{n+1}; \mathcal{Y}_{n+1})$ denote the conformal prediction set computed as  in~\eqref{eq:standard-prediction-set}, using a symmetric score function $s$ and replacing the full label dictionary $\mathcal{Y}$ with $\mathcal{Y}_{n+1} := \text{unique}(Y_1,\ldots,Y_{n+1})$.
Assume also that the nonconformity scores $\{S_1^{(y)},\ldots,S_{n+1}^{(y)}\}$ are almost-surely distinct for any $y \in \mathcal{Y}$, 
Then,
\begin{align*}  
  1 - \alpha \leq \P{Y_{n+1} \in \hat{C}_{\alpha}(X_{n+1}; \mathcal{Y}_{n+1})}
  & \leq 1 - \alpha + \frac{1}{n+1}.
\end{align*}
\end{lemma}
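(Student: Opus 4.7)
The plan is to reduce the coverage statement to the (standard) super-uniformity of the conformal $p$-value at the true label, and then invoke exchangeability of the induced non-conformity scores. My first step would be to observe that $Y_{n+1} \in \mathcal{Y}_{n+1}$ by definition of $\mathcal{Y}_{n+1} = \text{unique}(Y_1,\ldots,Y_{n+1})$, so whether $Y_{n+1}$ belongs to $\hat{C}_{\alpha}(X_{n+1}; \mathcal{Y}_{n+1})$ depends only on the value of $p(Y_{n+1})$ and not at all on the rest of the (random) index set $\mathcal{Y}_{n+1}$. Concretely,
\begin{align*}
\P{Y_{n+1} \in \hat{C}_{\alpha}(X_{n+1}; \mathcal{Y}_{n+1})} \;=\; \P{p(Y_{n+1}) > \alpha}.
\end{align*}
This small observation is really the only non-standard part of the argument; the rest mirrors the classical conformal coverage proof.

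Next I would analyze the distribution of $p(Y_{n+1})$. Plugging $y = Y_{n+1}$ into~\eqref{eq:conformal-pvalues-y}, the augmented dataset $\mathcal{D}(Y_{n+1})$ coincides with the true full sequence $\{(X_i,Y_i)\}_{i=1}^{n+1}$. Because $s$ is symmetric in its second argument and the paired data are exchangeable, the induced scores $(S_1^{(Y_{n+1})},\ldots,S_{n+1}^{(Y_{n+1})})$ form an exchangeable sequence. Under the almost-sure distinctness assumption, the rank of $S_{n+1}^{(Y_{n+1})}$ among these $n+1$ scores is therefore uniformly distributed on $\{1,\ldots,n+1\}$, which implies $p(Y_{n+1})$ is uniformly distributed on the grid $\{k/(n+1) : k = 1,\ldots,n+1\}$.

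Finally, the two bounds follow from a direct computation on this discrete uniform distribution: $\P{p(Y_{n+1}) > \alpha} = (n+1 - \lfloor (n+1)\alpha \rfloor)/(n+1)$, and sandwiching the floor via $(n+1)\alpha - 1 < \lfloor (n+1)\alpha \rfloor \leq (n+1)\alpha$ immediately yields both $1 - \alpha \leq \P{p(Y_{n+1}) > \alpha}$ and $\P{p(Y_{n+1}) > \alpha} \leq 1 - \alpha + 1/(n+1)$. I do not anticipate any serious obstacle. The only conceptual subtlety worth flagging is the first step: one must verify that the random index set $\mathcal{Y}_{n+1}$ causes no measurability or dependence issue for the coverage event, and this is transparent once one notices $Y_{n+1} \in \mathcal{Y}_{n+1}$ almost surely, so using $\mathcal{Y}_{n+1}$ instead of the true (unknown) $\mathcal{Y}$ never excludes the true label from consideration.
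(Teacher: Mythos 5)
Your proof is correct and follows essentially the same route as the paper's: both reduce the coverage event to $\{p(Y_{n+1}) > \alpha\}$ via the observation $Y_{n+1} \in \mathcal{Y}_{n+1}$ almost surely, then invoke exchangeability and distinctness of the scores to conclude that $p(Y_{n+1})$ is uniform on $\{1/(n+1),\ldots,1\}$. The only cosmetic difference is that the paper conditions on $\mathcal{Y}_{n+1}$ (noting this set is permutation-invariant) and then takes expectation, whereas you argue directly that the coverage event does not depend on the random index set $\mathcal{Y}_{n+1}$ once $Y_{n+1}\in\mathcal{Y}_{n+1}$ is established; both formulations are valid and lead to the same computation.
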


\begin{proof}[of Lemma~\ref{lemma:coverage-standard-oracle}]
With $\hat{C}_{\alpha}(X_{n+1}; \mathcal{Y})$ instead of $\hat{C}_{\alpha}(X_{n+1}; \mathcal{Y}_{n+1})$, this would be a standard result in conformal prediction; e.g., see \citet{angelopoulos2024theoretical}.
A similar proof also works with $\mathcal{Y}_{n+1}$.

We begin by recalling that the nonconformity scores $\{S_{i}^{(Y_{n+1})}\}_{i=1}^{n}$ are exchangeable because they are obtained by applying a symmetric score function $s$ to exchangeable data. This exchangeability is unaffected by conditioning on $\mathcal{Y}_{n+1}$ because the latter is a function of the unordered set $\{Y_1,\ldots,Y_{n+1}\}$, which is invariant to any permutations.
Combined with the almost-sure distinctiveness of the scores, this implies that $p(Y_{n+1})$ has a uniform distribution on $\{1/(n+1), 2/(n+2), \ldots, 1\}$ conditional on $\mathcal{Y}_{n+1}$.
Therefore,
\begin{align*}  
  \alpha - \frac{1}{n+1} \leq \P{p(Y_{n+1}) \leq \alpha \mid \mathcal{Y}_{n+1}} \leq \alpha,
\end{align*}
and hence:
\begin{align*}  
  1-\alpha \leq \E{\P{p(Y_{n+1}) > \alpha \mid \mathcal{Y}_{n+1} }} \leq 1-\alpha + \frac{1}{n+1}.
\end{align*}
Finally, the proof is completed by noting that $Y_{n+1} \in \hat{C}_{\alpha}(X_{n+1}; \mathcal{Y}_{n+1})$ if and only if $p(Y_{n+1}) > \alpha$.
\end{proof}

\begin{proof}[of Theorem~\ref{theorem:coverage-standard}]
By definition of $\hat{C}_{\alpha}(X_{n+1}; \mathcal{Y}_{n})$,
\begin{align*}
  & \P{Y_{n+1} \in \hat{C}_{\alpha}(X_{n+1}; \mathcal{Y}_{n})} \\
  & \qquad = \P{Y_{n+1} \in \hat{C}_{\alpha}(X_{n+1}; \mathcal{Y}_{n}), Y_{n+1} \in \mathcal{Y}_{n}} + \P{Y_{n+1} \in \hat{C}_{\alpha}(X_{n+1}; \mathcal{Y}_{n}), Y_{n+1} \notin \mathcal{Y}_{n}} \\
  & \qquad = \P{Y_{n+1} \in \hat{C}_{\alpha}(X_{n+1}; \mathcal{Y}_{n}), Y_{n+1} \in \mathcal{Y}_{n}} \\
  & \qquad = \P{Y_{n+1} \in \hat{C}_{\alpha}(X_{n+1}; \mathcal{Y}_{n+1}), Y_{n+1} \in \mathcal{Y}_{n}} \\
  & \qquad \leq \min\left\{ \P{Y_{n+1} \in \hat{C}_{\alpha}(X_{n+1}; \mathcal{Y}_{n+1})}, \P{Y_{n+1} \in \mathcal{Y}_{n}} \right\} \\
  & \qquad \leq \min\left\{1 - \alpha + \frac{1}{n+1}, 1-\P{N_{n+1}} \right\} \\
  & \qquad = 1 - \alpha - \P{N_{n+1}} + \min\left\{\P{N_{n+1}} + \frac{1}{n+1}, \alpha  \right\},
\end{align*}
where the last inequality above follows from Lemma~\ref{lemma:coverage-standard-oracle}.
Moreover,
\begin{align*}
  & \P{Y_{n+1} \notin \hat{C}_{\alpha}(X_{n+1}; \mathcal{Y}_{n})} \\
  & \qquad = \P{Y_{n+1} \notin \hat{C}_{\alpha}(X_{n+1}; \mathcal{Y}_{n}), Y_{n+1} \in \mathcal{Y}_{n}} + \P{Y_{n+1} \notin \hat{C}_{\alpha}(X_{n+1}; \mathcal{Y}_{n}), Y_{n+1} \notin \mathcal{Y}_{n}} \\
  & \qquad = \P{Y_{n+1} \notin \hat{C}_{\alpha}(X_{n+1}; \mathcal{Y}_{n+1}), Y_{n+1} \in \mathcal{Y}_{n}} + \P{Y_{n+1} \notin \mathcal{Y}_{n}} \\
  & \qquad \leq \P{Y_{n+1} \notin \hat{C}_{\alpha}(X_{n+1}; \mathcal{Y}_{n+1})} + \P{Y_{n+1} \notin \mathcal{Y}_{n}} \\
  & \qquad \leq \alpha +  \P{N_{n+1}},
\end{align*}
where the last inequality above follows from Lemma~\ref{lemma:coverage-standard-oracle}.
Combining these two results gives:
\begin{align*}
  1 - \alpha - \P{N_{n+1}} \leq 
  \P{Y_{n+1} \in \hat{C}_{\alpha}(X_{n+1}; \mathcal{Y}_{n})}
  \leq 1 - \alpha + \min\left\{\frac{1}{n+1}, \alpha - \P{N_{n+1}}  \right\}.
\end{align*}
\end{proof}

\subsection{Main Auxiliary Lemma for Conformal Good--Turing p-Values}



\begin{lemma}\label{lemma:general-augmented-p-value}
Let $\{Z_i = (X_i, Y_i)\}_{i=1}^{n+1}$ be exchangeable random variables, where $X_i \in \mathcal{X}$ are features and $Y_i \in \mathcal{Y}$ are labels.
Let also $\{\eta_i\}_{i=1}^{n+1} \in \mathbb{R}^{n+1}$ represent an independent collection of $n+1$ exchangeable random variables.
 Consider a symmetric score function $f^{(k)}$ that may take one of the following forms:
\begin{itemize}
\item[(i)] Label-only: $f^{(k)}: \mathcal{Y} \times \mathcal{Y}^{n+1} \times \mathbb{R} \to \mathbb{R}$, whose first argument depends only on labels;
\item[(ii)] Feature-based: $f^{(k)}: (\mathcal{X}\times \mathcal{Y}) \times (\mathcal{X}\times \mathcal{Y})^{n+1} \times \mathbb{R} \to \mathbb{R}$, whose first argument depends also on features.
\end{itemize}
For any candidate value $y \in \mathcal{Y}$ for $Y_{n+1}$, define the scores:
\begin{align*}
F_i^{(k,y)} := \begin{cases}
f^{(k)}(Y_i; \{Y_1, \ldots, Y_n, y\}, \eta_i), & \text{if label-only}, \\
f^{(k)}(Z_i; \{Z_1, \ldots, Z_n, (X_{n+1}, y)\}, \eta_i), & \text{if feature-based},
\end{cases}
\end{align*}
for $i \in [n]$, and
\begin{align*}
F_{n+1}^{(k,y)} := \begin{cases}
f^{(k)}(y; \{Y_1, \ldots, Y_n, y\}, \eta_{n+1}), & \text{if label-only}, \\
f^{(k)}((X_{n+1}, y); \{Z_1, \ldots, Z_n, (X_{n+1}, y)\}, \eta_{n+1}), & \text{if feature-based}.
\end{cases}
\end{align*}
Define also the imaginary conformal p-value:
\begin{align*}
\widetilde{\psi}_{k}(Z_1, \ldots, Z_n, X_{n+1}, y) := \frac{1}{n+1}\sum_{i=1}^{n+1}\mathbb{I}[F_i^{(k,y)} \geq F_{n+1}^{(k,y)}],
\end{align*}
leaving its dependence on $\{\eta_i\}_{i=1}^{n+}$ implicit.
Then, for any $u \in (0,1)$:
\begin{align*}
\mathbb{P}\{\widetilde{\psi}_{k}(Z_1, \ldots, Z_n, X_{n+1}, Y_{n+1}) \leq u\} \leq u.
\end{align*}

Furthermore, assume that for any two values $y, y' \in \mathcal{Y}$ both appearing exactly $k$ times in $\{Y_1,\ldots,Y_n\}$, the following invariance property holds:
\begin{align*}
\widetilde{\psi}_{k}(Z_1, \ldots, Z_n, X_{n+1}, y) = \widetilde{\psi}_{k}(Z_1, \ldots, Z_n, X_{n+1}, y').
\end{align*}
Then, under this additional assumption, we can define a computable p-value $\psi_{k}$ that depends only on $(Z_1,\ldots,Z_n,X_{n+1})$ such that:
\begin{align*}
\P{\psi_{k}(Z_1,\ldots,Z_n,X_{n+1}) \leq u, H_k} \leq u,
\end{align*}
where $H_k : \{\sum_{i=1}^{n} \I{Y_i = Y_{n+1}} = k\}$.
In particular, $\psi_{k}(Z_1,\ldots,Z_n,X_{n+1})$ is defined as the common value of $\widetilde{\psi}_{k}(Z_1,\ldots,Z_n,X_{n+1}, y)$ for any $y$ satisfying $H_k$; that is, $\psi_{k}(Z_1, \ldots, Z_n, X_{n+1}) := \widetilde{\psi}_{k}(Z_1, \ldots, Z_n, X_{n+1}, y)$ for $y$ such that $\sum_{i=1}^{n} \I{Y_i = y} = k$.
This is uniquely well-defined under the above invariance assumption.
\end{lemma}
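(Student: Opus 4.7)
The plan is to establish the two claims in sequence: first I would prove the unconditional super-uniformity of $\widetilde{\psi}_{k}(Z_{1},\ldots,Z_{n},X_{n+1},Y_{n+1})$ via a standard conformal rank argument, which requires exchangeability of the scores $(F_{1}^{(k,Y_{n+1})},\ldots,F_{n+1}^{(k,Y_{n+1})})$. Then I would use the invariance hypothesis to transfer this bound to the computable $\psi_{k}(Z_{1},\ldots,Z_{n},X_{n+1})$ on the event $H_{k}$.

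For the first step, the critical observation is that when we instantiate $y = Y_{n+1}$, the augmented set appearing in the second argument of $f^{(k)}$ coincides with the unordered collection $\{Z_{1},\ldots,Z_{n+1}\}$, which is invariant under any permutation of the $n+1$ indices. I would then pick an arbitrary permutation $\sigma$ of $[n+1]$ and argue that, by the symmetry of $f^{(k)}$ in its set argument, the vector $(F_{\sigma(i)}^{(k,Y_{n+1})})_{i=1}^{n+1}$ is obtained from $(F_{i}^{(k,Y_{n+1})})_{i=1}^{n+1}$ by relabeling the atoms $(Z_{i},\eta_{i})$. Since $\{Z_{i}\}_{i=1}^{n+1}$ is exchangeable, $\{\eta_{i}\}_{i=1}^{n+1}$ is exchangeable and independent of the data, the joint distribution is invariant under this relabeling, so the score vector is exchangeable. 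The rank of $F_{n+1}^{(k,Y_{n+1})}$ among the full collection is then super-uniform on $\{1,\ldots,n+1\}$ (exactly uniform absent ties, inflated by ties otherwise), and because $\widetilde{\psi}_{k}$ equals exactly this rank divided by $n+1$, we obtain $\P{\widetilde{\psi}_{k} \leq u} \leq u$.

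For the second step, I would observe that on the event $H_{k}$ the label $Y_{n+1}$ itself appears exactly $k$ times in $Y_{1:n}$, so there is at least one admissible $y$ realizing the common value in the invariance hypothesis, and $\psi_{k}(Z_{1},\ldots,Z_{n},X_{n+1})$ is thus well defined there; off $H_{k}$ one may extend $\psi_{k}$ by any measurable convention, as that region does not affect the conclusion. By the invariance hypothesis, plugging $y=Y_{n+1}$ yields the same value as plugging any other $y$ with frequency $k$, so on $H_{k}$,
\[
\widetilde{\psi}_{k}(Z_{1},\ldots,Z_{n},X_{n+1},Y_{n+1}) = \psi_{k}(Z_{1},\ldots,Z_{n},X_{n+1}).
\]
Combining this identity with step one gives $\P{\psi_{k} \leq u, H_{k}} = \P{\widetilde{\psi}_{k}(Z_{1},\ldots,Z_{n},X_{n+1},Y_{n+1}) \leq u, H_{k}} \leq \P{\widetilde{\psi}_{k} \leq u} \leq u$.

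The main obstacle will be pinning down the exchangeability step cleanly in the feature-based case, where $f^{(k)}$ acts on pairs $Z_{i}=(X_{i},Y_{i})$ rather than labels alone, and ensuring the noise variables $\eta_{i}$ are permuted \emph{jointly} with the $Z_{i}$ (which is legitimate because the two sequences are independent and each is exchangeable). A secondary subtlety is the status of $\psi_{k}$ off $H_{k}$: since the conclusion only constrains the joint event $\{\psi_{k}\le u, H_{k}\}$, the off-$H_{k}$ definition is immaterial and no global invariance of $\widetilde{\psi}_{k}(\cdots,y)$ in $y$ is needed.
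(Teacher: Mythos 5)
Your argument is correct and follows essentially the same route as the paper's own proof: establish exchangeability of the score vector $(F_i^{(k,Y_{n+1})})_{i=1}^{n+1}$ by permuting the pairs $(Z_i,\eta_i)$ jointly and invoking the multiset-symmetry of $f^{(k)}$, deduce super-uniformity of $\widetilde\psi_k$ at $y=Y_{n+1}$ from the rank representation, and then use the invariance hypothesis to identify $\psi_k$ with $\widetilde\psi_k(\cdots,Y_{n+1})$ on $H_k$ so that the bound transfers. You also flag, slightly more carefully than the paper does, that the rank is only exactly uniform absent ties but remains conservative in general, and correctly note that the definition of $\psi_k$ off $H_k$ is immaterial.
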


\begin{proof}[of Lemma~\ref{lemma:general-augmented-p-value}]
We first prove $\widetilde{\psi}_{k} = \widetilde{\psi}_{k}(Z_1, \ldots, Z_n, X_{n+1}, Y_{n+1})$ is super-uniform.
Note that $\widetilde{\psi}_{k}$ is the relative rank of $F_{n+1}^{(k,Y_{n+1})}$ among $\{F_1^{(k,Y_{n+1})}, \ldots, F_{n+1}^{(k,Y_{n+1})}\}$. Therefore, to establish super-uniformity, it suffices to show $(F_1^{(k,Y_{n+1})}, \ldots, F_{n+1}^{(k,Y_{n+1})})$ are exchangeable. We prove this under the more general feature-based case; the label-only case follows similarly.

Let $\sigma$ be any permutation of $[n+1]$, and define $\widetilde{Z}_j = Z_{\sigma(j)}$, $\widetilde{Y}_j = Y_{\sigma(j)}$, and $\widetilde{\eta}_j = \eta_{\sigma(j)}$ for $j = 1, \ldots, n+1$. For each $i \in [n]$, define:
\begin{align*}
\widetilde{F}_i^{(k,\widetilde{Y}_{n+1})} & := f^{(k)}(\widetilde{Z}_i; \{\widetilde{Z}_1, \ldots, \widetilde{Z}_n, \widetilde{Z}_{n+1}\}, \widetilde{\eta}_i), \\
\widetilde{F}_{n+1}^{(k,\widetilde{Y}_{n+1})} & := f^{(k)}(\widetilde{Z}_{n+1}; \{\widetilde{Z}_1, \ldots, \widetilde{Z}_n, \widetilde{Z}_{n+1}\}, \widetilde{\eta}_{n+1}).
\end{align*}

By the symmetry of the function $f^{(k)}$ (it only depends on the multiset structure, not the ordering),
\begin{align*}
(F_{\sigma(1)}^{(k,Y_{n+1})}, \ldots, F_{\sigma(n+1)}^{(k,Y_{n+1})}) = (\widetilde{F}_1^{(k,\widetilde{Y}_{n+1})}, \ldots, \widetilde{F}_{n+1}^{(k,\widetilde{Y}_{n+1})}).
\end{align*}
Since $((Z_1,\eta_1), \ldots, (Z_{n+1},\eta_{n+1}))$ are jointly exchangeable, it follows that:
\begin{align*}
(\widetilde{F}_1^{(k,\widetilde{Y}_{n+1})}, \ldots, \widetilde{F}_{n+1}^{(k,\widetilde{Y}_{n+1})}) \overset{d}{=} (F_1^{(k,Y_{n+1})}, \ldots, F_{n+1}^{(k,Y_{n+1})}).
\end{align*}
Therefore, the non-conformity scores are exchangeable, and the relative rank of $F_{n+1}^{(k,Y_{n+1})}$ among $\{ F_1^{(k,Y_{n+1})}, \ldots, F_{n+1}^{(k,Y_{n+1})}\}$ follows a discrete uniform distribution on $\{1/(n+1), 2/(n+1), \ldots, 1\}$, yielding:
\begin{align*}
\P{\widetilde{\psi}_{k}(Z_1, \ldots, Z_n, X_{n+1}, Y_{n+1}) \leq u} \leq u \quad \text{for all } u \in (0,1).
\end{align*}
This completes the first part of the proof.

We now discuss the construction of a computable p-value that does not depend on the unknown label $Y_{n+1}$. 
Suppose that, for any two values $y, y' \in \mathcal{Y}$ both appearing exactly $k$ times in $\{Y_1,\ldots,Y_n\}$:
\begin{align*}
\widetilde{\psi}_{k}(Z_1, \ldots, Z_n, X_{n+1}, y) = \widetilde{\psi}_{k}(Z_1, \ldots, Z_n, X_{n+1}, y').
\end{align*}
Under this invariance property, we can define $\psi_{k}(Z_1, \ldots, Z_n, X_{n+1})$ as the common value of $\widetilde{\psi}_{k}(Z_1, \ldots, Z_n, X_{n+1}, y)$ for any $y$ satisfying $H_k$; that is, $\psi_{k}(Z_1, \ldots, Z_n, X_{n+1}) := \widetilde{\psi}_{k}(Z_1, \ldots, Z_n, X_{n+1}, y)$ for $y$ such that $\sum_{i=1}^{n} \I{Y_i = y} = k$.
This is uniquely well-defined under the above invariance assumption.
Then, for any $u \in (0,1)$:
\begin{align*}
\P{\psi_{k}(Z_1, \ldots, Z_n, X_{n+1}) \leq u, H_k} 
&= \P{\widetilde{\psi}_{k}(Z_1, \ldots, Z_n, X_{n+1}, Y_{n+1}) \leq u, H_k} \\
&\leq \P{\widetilde{\psi}_{k}(Z_1, \ldots, Z_n, X_{n+1}, Y_{n+1}) \leq u} \\
&\leq u,
\end{align*}
where the last inequality follows from the first part of this result.
\end{proof}

\subsection{Feature-Blind Conformal Good--Turing p-Values}

\begin{lemma}\label{lemma:GT-lower-bound}
Let $\psi'_k: \mathcal{Y}^n \to \mathbb{R}$ be a function such that $\psi'_k(Y_1, \ldots, Y_n)$ depends only on the frequency profile $\mathcal{F}_n$ of $(Y_1, \ldots, Y_n)$ without regard to their order or actual face values.
If $\psi'_k$ is a valid p-value for all exchangeable distributions over $(Y_1, \ldots, Y_n, Y_{n+1})$, that is, for all $u \in (0,1)$:
\begin{equation*}
\P{\psi'_k(Y_1, \ldots, Y_n) \leq u, H_k} \leq u,
\end{equation*}
then for any sequence $(Y_1, \ldots, Y_n)$ with frequency profile $\mathcal{F}_n$:
\begin{align*}
\psi'_k(Y_1, \ldots, Y_n)\geq \frac{(k+1)M_{k+1} + k + 1}{n + 1},
\end{align*}
where $M_{k+1}$ is the number of distinct labels appearing exactly $k+1$ times in $(Y_1, \ldots, Y_n)$.
\end{lemma}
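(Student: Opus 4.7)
The plan is to derive the lower bound by exhibiting an adversarial exchangeable distribution that forces $\psi'_k$ to be large on the fixed profile of interest. Throughout, write $M_j := M_j(\mathcal{F}_n^*)$ for the counts of the target profile.

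First, for any exchangeable law $Q$ on $(Y_1,\dots,Y_{n+1})$ and any target profile $\mathcal{F}_n^*$, the super-uniformity hypothesis together with the set inclusion $\{\mathcal{F}_n = \mathcal{F}_n^*\} \subseteq \{\psi'_k(\mathcal{F}_n) \leq \psi'_k(\mathcal{F}_n^*)\}$ gives
\[
\mathbb{P}_Q\!\left[\mathcal{F}_n = \mathcal{F}_n^*,\, H_k\right] \;\leq\; \mathbb{P}_Q\!\left[\psi'_k(\mathcal{F}_n) \leq \psi'_k(\mathcal{F}_n^*),\, H_k\right] \;\leq\; \psi'_k(\mathcal{F}_n^*).
\]
Thus it suffices to exhibit a single exchangeable law $Q^{*}$ that makes the left-hand side equal to $((k+1)M_{k+1} + k+1)/(n+1)$.

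Second, assuming the nontrivial case $M_k \geq 1$, I would assemble an augmented profile $\mathcal{F}_{n+1}^*$ for the full sequence by promoting one frequency-$k$ label to frequency $k+1$: set $M_{k+1}^{(n+1)} = M_{k+1}+1$, $M_k^{(n+1)} = M_k - 1$, and $M_j^{(n+1)} = M_j$ otherwise. The book-keeping identity $\sum_j j\, M_j^{(n+1)} = n+1$ is immediate. Next, I would verify by direct case analysis that deleting from an $(n+1)$-sequence with profile $\mathcal{F}_{n+1}^*$ a position whose label has frequency $j+1$ produces an $n$-profile obtained from $\mathcal{F}_{n+1}^*$ by decrementing $M_{j+1}$ by one and incrementing $M_j$ by one; matching this against $\mathcal{F}_n^*$ isolates $j=k$ as the unique value that reproduces $\mathcal{F}_n^*$, whereas any other $j$ yields a strictly different profile.

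Third, let $Q^{*}$ be the uniform distribution over labelings of the $n+1$ positions consistent with $\mathcal{F}_{n+1}^*$; this is clearly exchangeable. Position symmetry then gives that $Y_{n+1}$ lies at a position carrying a frequency-$(k+1)$ label with probability $(k+1)(M_{k+1}+1)/(n+1)$, and by the combinatorial step above this event coincides exactly with $\{\mathcal{F}_n = \mathcal{F}_n^*\} \cap H_k$. Chaining with the first step yields
\[
\psi'_k(\mathcal{F}_n^*) \;\geq\; \frac{(k+1)(M_{k+1}+1)}{n+1} \;=\; \frac{(k+1)M_{k+1} + k + 1}{n+1},
\]
as claimed.

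The step I expect to need the most care is the combinatorial matching underlying steps two and three: one must be sure that, under $Q^{*}$, the joint event $\{\mathcal{F}_n = \mathcal{F}_n^*\} \cap H_k$ collapses to exactly ``$Y_{n+1}$ sits in a frequency-$(k+1)$ class,'' with no parasitic contribution from deletions of other frequencies. A secondary subtlety is the degenerate regime $M_k = 0$, in which $H_k$ is vacuous given the profile and the construction of $\mathcal{F}_{n+1}^*$ is blocked; there the stated bound should be read with the natural truncation at $1$, which is consistent with $(k+1)M_{k+1}+k+1 > n+1$ being possible only when $M_k = 0$.
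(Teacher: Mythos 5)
Your argument is correct and rests on the same core device as the paper's proof: exhibit one adversarial exchangeable law obtained by distributing a fixed augmented frequency profile uniformly over $n+1$ positions, and use position symmetry to compute $\P{\mathcal{F}_n = \mathcal{F}_n^*,\, H_k}$ exactly. The stylistic difference is that the paper argues by contradiction, introducing an intermediate threshold $u$ strictly between the hypothesized small value of $\psi'_k$ and the target, whereas you obtain the bound directly by feeding (a limit of values approaching) $u=\psi'_k(\mathcal{F}_n^*)$ into super-uniformity; both are valid. The substantive refinement in your version is the bookkeeping: by defining $\mathcal{F}_{n+1}^*$ as the promotion of exactly one frequency-$k$ class of the given $\mathcal{F}_n^*$ and checking the deletion calculus explicitly, you guarantee that under $Q^*$ the $n$-profile coincides with $\mathcal{F}_n^*$ on $H_k$ for \emph{any} target profile with $M_k\ge 1$. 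The paper's concrete construction (``one additional label $y^*$ with frequency $k$; remaining positions filled with labels of frequency different from $k$ and $k+1$'') literally reproduces only profiles having $M_k=1$, so the permuted sequences need not have profile $\mathcal{F}_n$ when $M_k\ne 1$; your promotion/deletion argument removes this minor mismatch. You also correctly identify the degenerate case $M_k=0$, where $H_k$ cannot co-occur with the observed profile, the stated lower bound can exceed $1$, and the adversarial construction is unavailable; this regime is outside the reach of the argument (and the paper does not address it either).
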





\begin{proof}[of Lemma~\ref{lemma:GT-lower-bound}]
Throughout this proof, we use $\psi'_k(\mathcal{F}_n)$ and $\psi'_k(Y_1, \ldots, Y_n)$ interchangeably, since $\psi'_k(Y_1, \ldots, Y_n)$ depends on $(Y_1, \ldots, Y_n)$ only through $\mathcal{F}_n$.
We prove this result by contradiction. Assume there exists a frequency profile $\mathcal{F}_n$ with $M_{k+1}$ distinct labels appearing exactly $k+1$ times such that:
\begin{align*}
\psi'_k(\mathcal{F}_n) < \frac{(k+1)M_{k+1} + k + 1}{n + 1}.
\end{align*}

We construct a specific sequence of labels $(y_1, \ldots, y_n, y_{n+1}) \in \mathcal{Y}^{n+1}$ as follows. Specifically, we include $M_{k+1}$ distinct labels, each appearing exactly $k+1$ times among positions $1$ through $n$. We also include one additional label $y^*$ appearing exactly $k$ times among positions $1$ through $n$, and set $y_{n+1} = y^*$. Any remaining positions are filled with labels of frequencies different from $k$ and $k+1$. 

Consider the uniform distribution over all permutations of this sequence. Let $\pi$ denote a uniformly random permutation, so:
\begin{align*}
(Y_1, \ldots, Y_n, Y_{n+1}) = (y_{\pi(1)}, \ldots, y_{\pi(n)}, y_{\pi(n+1)}).
\end{align*}

Set $u = \frac{1}{2}\left(\psi'_k(\mathcal{F}_n) + \frac{(k+1)M_{k+1} + k + 1}{n+1}\right)$, so that:
\begin{align*}
\psi'_k(\mathcal{F}_n) < u < \frac{(k+1)M_{k+1} + k + 1}{n+1}.
\end{align*}

Now we count the permutations where $H_k$ occurs; i.e., where $Y_{n+1}$ appears exactly $k$ times in $Y_1, \ldots, Y_n$. The event $H_k$ occurs if and only if position $n+1$ contains one of the $(k+1)M_{k+1}$ copies of labels with frequency $k+1$, one of the $k$ copies of $y^*$ from positions $1$ through $n$, or the original $y_{n+1} = y^*$ itself.

Let $\mathcal{P}$ denote this set of permutations. We have $|\mathcal{P}| = ((k+1)M_{k+1} + k + 1) \cdot n!$ since there are $(k+1)M_{k+1} + k + 1$ choices for which element goes to position $n+1$, and the remaining $n$ positions can be arranged in $n!$ ways.
For any $\pi \in \mathcal{P}$, the sequence $(Y_1, \ldots, Y_n) = (y_{\pi(1)}, \ldots, y_{\pi(n)})$ has frequency profile $\mathcal{F}_n$. Since $\psi'_k$ depends only on the frequency profile, we have:
\begin{align*}
\psi'_k(Y_1, \ldots, Y_n) = \psi'_k(\mathcal{F}_n) < u \quad \text{whenever } \pi \in \mathcal{P}.
\end{align*}

Therefore:
\begin{align*}
\P{\psi'_k(Y_1, \ldots, Y_n) \leq u, H_k} 
&\geq \P{\pi \in \mathcal{P}} \\
&= \frac{|\mathcal{P}|}{(n+1)!} \\
&= \frac{((k+1)M_{k+1} + k + 1) \cdot n!}{(n+1)!} \\
&= \frac{(k+1)M_{k+1} + k + 1}{n+1} \\
&> u,
\end{align*}
contradicting the validity requirement $\P{\psi'_k(Y_1, \ldots, Y_n) \leq u, H_k} \leq u,
$.
\end{proof}



\begin{proof}[of Theorem~\ref{theorem:GT-super-uniform}]
We prove both the validity and optimality of $\psi_{k}^{\text{GT}}$.
We first demonstrate the validity.
By Lemma~\ref{lemma:general-augmented-p-value} (applied in the label-only setting, ignoring the $\eta$ noise variables), it suffices to specify an appropriate score function $f^{(k)}$ and show that under $H_k$, the augmented conformal p-value $\widetilde{\psi}_{k}$ is invariant to the specific value of $Y_{n+1}$.
Define the following score function, for $k\in\{0,1,2,\ldots\}$:
\begin{align*}
f^{(k)}(y; A) := \phi_{k+1}(y; A),
\end{align*}
where for any multiset $A\subseteq \mathcal{Y}$ and label $y\in\mathcal{Y}$,
\[
\phi_{k+1}(y; A) := \mathbb{I}\Biggl[\sum_{a\in A}\mathbb{I}[a=y] = k+1\Biggr].
\]
In words, $\phi_{k+1}(y; A)$ is an indicator that the label $y$ appears exactly $k+1$ times in the multiset $A$.

Under $H_k$ (that $Y_{n+1}$ appears exactly $k$ times in $\{Y_1,\ldots,Y_n\}$), we have $\phi_{k+1}(Y_{n+1}; \{Y_1, \ldots, Y_n, Y_{n+1}\}) = 1$, since $Y_{n+1}$ appears $k+1$ times in the augmented set.
Following the construction in Lemma~\ref{lemma:general-augmented-p-value}, for $i \in [n]$:
\begin{align*}
F_i^{(k,Y_{n+1})} & = \phi_{k+1}(Y_i; \{Y_1, \ldots, Y_n, Y_{n+1}\}),  \\
F_{n+1}^{(k,Y_{n+1})} & = \phi_{k+1}(Y_{n+1}; \{Y_1, \ldots, Y_n, Y_{n+1}\}) = 1.
\end{align*}
The indicator $\mathbb{I}[F_i^{(k,Y_{n+1})} \geq F_{n+1}^{(k,Y_{n+1})}] = \mathbb{I}[F_i^{(k,Y_{n+1})} \geq 1] = \mathbb{I}[F_i^{(k,Y_{n+1})} = 1]$ equals 1 if and only if $Y_i$ appears exactly $k+1$ times in the augmented set $\{Y_1, \ldots, Y_n, Y_{n+1}\}$. This occurs when:
\begin{itemize}
\item $Y_i = Y_{n+1}$: Since $Y_{n+1}$ appears $k$ times in $\{Y_1,\ldots,Y_n\}$, there are exactly $k$ such indices.
\item $Y_i \neq Y_{n+1}$: Then $Y_i$ must appear exactly $k+1$ times in $\{Y_1,\ldots,Y_n\}$. The total number of such indices is $(k+1)M_{k+1}$, where $M_{k+1}$ is the number of distinct labels with frequency $k+1$.
\end{itemize}

Therefore, under $H_k$, we can compute:
\begin{align*}
\widetilde{\psi}_{k} &= \frac{1}{n+1}\sum_{i=1}^{n+1}\mathbb{I}[F_i^{(k,Y_{n+1})} \geq F_{n+1}^{(k,Y_{n+1})}] \\
&= \frac{1}{n+1}\left(\mathbb{I}[F_{n+1}^{(k,Y_{n+1})} = 1] + \sum_{i=1}^{n}\mathbb{I}[F_i^{(k,Y_{n+1})} = 1]\right) \\
&= \frac{1}{n+1}\left(1 + \sum_{i \in [n]: Y_i = Y_{n+1}} 1 + \sum_{i \in [n]: Y_i \neq Y_{n+1}, \phi_{k+1}(Y_i; \{Y_1,\ldots,Y_n\}) = 1} 1\right) \\
&= \frac{1}{n+1}\left(1 + k + (k+1)M_{k+1}\right) \\
&= \frac{(k+1)M_{k+1} + k + 1}{n+1},
\end{align*}
which depends only on the frequency profile and not on the specific value of $Y_{n+1}$. Thus we can set:
\begin{align}
\psi_{k}^{\text{GT}} := \frac{(k+1)M_{k+1} + k + 1}{n+1}.
\end{align}
By Lemma~\ref{lemma:general-augmented-p-value}, $\P{\psi_{k}^{\text{GT}} \leq u, H_k} \leq u$ for all $u \in (0,1)$.

We then prove the optimality.
Let $\psi'_k$ be any deterministic function of $\mathcal{F}_n$ satisfying $\P{\psi'_k \leq u, H_k} \leq u$ for all $u \in (0,1)$.
By Lemma~\ref{lemma:GT-lower-bound}, for any frequency profile $\mathcal{F}_n$ with $M_{k+1}$ distinct labels appearing exactly $k+1$ times:
\begin{align*}
\psi'_k(\mathcal{F}_n) \geq \frac{(k+1)M_{k+1} + k + 1}{n + 1}.
\end{align*}
Since $\psi_k^{\text{GT}} = \frac{(k+1)M_{k+1} + k + 1}{n+1}$ by definition, we have:
\begin{align*}
\psi'_k(\mathcal{F}_n) \geq \psi_k^{\text{GT}}(\mathcal{F}_n)
\end{align*}
for every possible frequency profile $\mathcal{F}_n$.
Therefore, $\psi'_k \geq \psi_k^{\text{GT}}$ almost surely, establishing that $\psi_k^{\text{GT}}$ is most powerful among all deterministic p-values that depend only on the frequency profile.
\end{proof}


\begin{proof}[of Proposition~\ref{proposition:RGT-super-uniform}]
Define a randomized label-only score function as in Lemma~\ref{lemma:general-augmented-p-value}:
\begin{align*}
f^{(k)}(y; A, \eta) = \phi_{k+1}(y; A) \cdot \eta.
\end{align*}

Let $\{\eta_i\}_{i=1}^{n+1}$ be i.i.d. $\text{Uniform}[0,1]$ random variables, independent of $\{Y_i\}_{i=1}^{n+1}$. 
Under $H_k$, following the construction in Lemma~\ref{lemma:general-augmented-p-value} (applied in the label-only setting, including the $\eta$ noise variables), 
\begin{align*}
F_i^{(k,Y_{n+1})} &= \phi_{k+1}(Y_i; \{Y_1, \ldots, Y_n, Y_{n+1}\}) \cdot \eta_i,\\
F_{n+1}^{(k,Y_{n+1})} &= \phi_{k+1}(Y_{n+1}; \{Y_1, \ldots, Y_n, Y_{n+1}\}) \cdot \eta_{n+1} = \eta_{n+1},
\end{align*}
where the second equation follows from $\phi_{k+1}(Y_{n+1}; \{Y_1, \ldots, Y_n, Y_{n+1}\}) = 1$. 
Therefore, the indicator $\mathbb{I}[F_i^{(k,Y_{n+1})} \geq F_{n+1}^{(k,Y_{n+1})}]$ equals 1 if and only if $\phi_{k+1}(Y_i; \{Y_1, \ldots, Y_n, Y_{n+1}\}) = 1$ and $\eta_i \geq \eta_{n+1}$.

Under $H_k$, the indices satisfying $\phi_{k+1}(Y_i; \{Y_1, \ldots, Y_n, Y_{n+1}\}) = 1$ are indices in $\mathcal{S}_{k+1}$ (labels with frequency $k+1$ in the original data) and indices where $Y_i = Y_{n+1}$ (exactly $k$ such indices, forming a subset of $\mathcal{S}_k$). Thus there are $(k+1)M_{k+1} + k$ such indices in total.
Since $\{\eta_i\}_{i=1}^{n+1}$ are i.i.d. continuous random variables, they are almost surely distinct. By exchangeability, $\eta_{n+1}$ has equal probability of ranking in any position among these $(k+1)M_{k+1} + k + 1$ relevant $\eta$ values (those with $\phi_{k+1} = 1$ plus $\eta_{n+1}$ itself).
Therefore, the number of indices with both $\phi_{k+1} = 1$ and $\eta_i \geq \eta_{n+1}$ follows a discrete uniform distribution on $\{0, 1, \ldots, (k+1)M_{k+1} + k\}$.
The valid conformal p-value under $H_k$ becomes:
\begin{align*}
\widetilde{\psi}_{k} = \frac{1}{n+1}\left(1 + \sum_{i \in \mathcal{S}_{k+1}} \mathbb{I}[\eta_i \geq \eta_{n+1}] + \sum_{j: Y_j = Y_{n+1}} \mathbb{I}[\eta_j \geq \eta_{n+1}]\right) = \frac{U + 1}{n+1},
\end{align*}
where $U \sim \text{Uniform}\{0, 1, \ldots, (k+1)M_{k+1} + k\}$.
Since this depends only on $\mathcal{F}_n$ and not on the specific value of $Y_{n+1}$ under $H_k$, we can define:
\begin{align*}
\psi^{\text{RGT}}_{k} := \frac{\text{Uniform}\{0, 1, \ldots, (k+1)M_{k+1} + k\} + 1}{n+1}.
\end{align*}
By Lemma~\ref{lemma:general-augmented-p-value}, we have $\P{\psi^{\text{RGT}}_{k} \leq u, H_k} \leq u$ for all $u \in (0,1)$.
\end{proof}


\subsection{Feature-Based Conformal Good--Turing p-Values}

\begin{proof}[of Theorem~\ref{thm:XGT}]
We apply Lemma~\ref{lemma:general-augmented-p-value} with a feature-based score function:
\begin{align*}
f^{(k)}((x,y); \mathcal{D}) = \frac{\phi_{k+1}(y; \{Y_j : Z_j \in \mathcal{D}\})}{\hat{s}_k(x)},
\end{align*}
where $\hat{s}_k: \mathcal{X} \to \mathbb{R}_+$ is a feature score function quantifying how atypical feature $x$ is for a label with frequency $k$.

Under $H_k$, since $Y_{n+1}$ appears exactly $k$ times in $\{Y_1, \ldots, Y_n\}$, we have $\phi_{k+1}(Y_{n+1}; \{Y_1, \ldots, Y_n, Y_{n+1}\}) = 1$. Therefore:
\begin{align*}
F_{n+1}^{(k,Y_{n+1})} = \frac{1}{\hat{s}_k(X_{n+1})}.
\end{align*}
For $i \in [n]$, we have:
\begin{align*}
F_i^{(k,Y_{n+1})} = \frac{\phi_{k+1}(Y_i; \{Y_1, \ldots, Y_n, Y_{n+1}\})}{\hat{s}_k(X_i)}.
\end{align*}

The indicator $\mathbb{I}[F_i^{(k,Y_{n+1})} \geq F_{n+1}^{(k,Y_{n+1})}]$ equals 1 if and only if:
\begin{enumerate}
\item $\phi_{k+1}(Y_i; \{Y_1, \ldots, Y_n, Y_{n+1}\}) = 1$ (i.e., $Y_i$ appears exactly $k+1$ times in the augmented set), and
\item $\hat{s}_k(X_{n+1}) \geq \hat{s}_k(X_i)$.
\end{enumerate}

Under $H_k$, the indices satisfying $\phi_{k+1} = 1$ are precisely:
\begin{enumerate}
\item Indices in $\mathcal{S}_{k+1} = \{i \in [n]: \sum_{j=1}^n \mathbb{I}[Y_i = Y_j] = k+1\}$ (labels with frequency $k+1$ in the original data)
\item Indices where $Y_i = Y_{n+1}$ (exactly $k$ such indices under $H_k$)
\end{enumerate}

Therefore, the augmented conformal p-value is:
\begin{align*}
\widetilde{\psi}_{k} = \frac{1}{n+1}\left(1 + \sum_{i \in \mathcal{S}_{k+1}} \mathbb{I}\{\hat{s}_k(X_{n+1}) \geq \hat{s}_k(X_i)\} + \sum_{j \in [n]: Y_j = Y_{n+1}} \mathbb{I}\{\hat{s}_k(X_{n+1}) \geq \hat{s}_k(X_j)\}\right).
\end{align*}

Case $k = 0$: Under $H_0$, $Y_{n+1}$ is a new label not in $\{Y_1,\ldots,Y_n\}$. Thus the second sum vanishes, making $\widetilde{\psi}_0$ invariant to the specific value of $Y_{n+1}$. We can therefore define:
\begin{align*}
\psi^{\text{XGT}}_{0} := \frac{1}{n+1}\left(1 + \sum_{i \in \mathcal{S}_1} \mathbb{I}\{\hat{s}_0(X_{n+1}) \geq \hat{s}_0(X_i)\}\right).
\end{align*}

Case $k > 0$: The second sum depends on which specific label (among those appearing $k$ times) equals $Y_{n+1}$. Under $H_k$, there are $M_k$ distinct labels with frequency $k$, and $Y_{n+1}$ must be one of them. To ensure validity without knowing $Y_{n+1}$, we take the conservative approach of maximizing over all $M_k$ possible values:
\begin{align*}
\psi^{\text{XGT}}_{k} := \frac{1}{n+1}\left(1 + \sum_{i \in \mathcal{S}_{k+1}} \mathbb{I}\{\hat{s}_k(X_{n+1}) \geq \hat{s}_k(X_i)\} + \max_{y \in \{Y_i : i \in \mathcal{S}_k\}} \sum_{j \in [n]: Y_j = y} \mathbb{I}\{\hat{s}_k(X_{n+1}) \geq \hat{s}_k(X_j)\}\right).
\end{align*}

Therefore, under $H_k$, we have:
\begin{align*}
\psi^{\text{XGT}}_{k} &\geq 
 \widetilde{\psi}_{k} \quad \text{almost surely},
\end{align*}
since the maximum is at least as large as the value for the true (but unknown) $Y_{n+1}$.

Therefore, for any $u \in (0,1)$:
\begin{align*}
\P{\psi^{\text{XGT}}_{k} \leq u, H_k} \leq \P{\widetilde{\psi}_{k} \leq u, H_k} \leq u,
\end{align*}
where the last inequality follows from Lemma~\ref{lemma:general-augmented-p-value}, provided $\hat{s}_k$ is either fixed or invariant to permutations of $\{(X_i, Y_i)\}_{i=1}^{n+1}$ under $H_k$.
\end{proof}

\subsection{Testing the Composite Hypothesis that $Y_{n+1}$ is Previously Seen Label}


\begin{proof}[of Theorem~\ref{thm:psi-seen-super-uniformity}]
Recall the definition in \eqref{eq:freq-hypothesis}: $H_{\text{seen}}: \bigcup_{k \in K_n} H_k$.
Now consider the probability
\[
\P{\psi_{\text{seen}} \leq u, H_{\text{seen}}} 
= \sum_{k \in K_n} \P{\psi_{\text{seen}} \leq u, H_k}.
\]

By the definition of $\psi_{\text{seen}}$ in~\eqref{eq:combination-pval},
\begin{align*}
\sum_{k \in K_n} \P{\psi_{\text{seen}} \leq u, H_k} 
&= \sum_{k \in K_n} \P{\max_{j \in K_n} \frac{\psi_j}{c_j} \leq u, H_k} \\
&\leq \sum_{k \in K_n} \P{\frac{\psi_k}{c_k} \leq u, H_k} \\
&= \sum_{k \in K_n} \P{\psi_k \leq c_k u, H_k} \\
& \leq \sum_{k \in K_n} c_k u  && \text{(since $\P{\psi_k \leq v, H_k} \leq v$) } \\
& \leq u. && \text{(since $\sum_{k=1}^n c_k \leq 1$)} 
\end{align*}

This establishes the validity of $\psi_{\text{seen}}$ as a conformal p-value for $H_{\text{seen}}$.
\end{proof}

\subsection{Conformal Good--Turing Classification}

\begin{proof}[of Theorem~\ref{theorem:coverage-cgtc}]

Let $N_{n+1} := \{Y_{n+1} \notin \mathcal{Y}_n\}$ denote the event that $Y_{n+1}$ is a new label (i.e., not observed in the first $n$ data points), and let $N_{n+1}^c$ denote its complement. Note that under $N_{n+1}$, the hypothesis $H_{\text{unseen}}$ is true, while under $N_{n+1}^c$, the hypothesis $H_{\text{seen}}$ is true.
The miscoverage probability can be decomposed as:
\begin{align*}
  & \mathrel{\phantom{=}} \P{Y_{n+1} \notin \hat{C}_{\alpha}^*(X_{n+1})} \\
  & \qquad = \P{Y_{n+1} \notin \hat{C}_{\alpha}^*(X_{n+1}),\, N_{n+1}^c}
       + \P{Y_{n+1} \notin \hat{C}_{\alpha}^*(X_{n+1}),\, N_{n+1}} \\
  & \qquad = \P{Y_{n+1} \notin \hat{C}_{\alpha_{\text{class}}}(X_{n+1}; \mathcal{Y}_n),\, \psi_{\text{seen}} > \alpha_{\text{seen}},\, N_{n+1}^c}
       + \P{\psi_{\text{seen}} \leq \alpha_{\text{seen}},\, N_{n+1}^c} \\
  & \qquad \quad + \P{\joker \notin \hat{C}_{\alpha}^*(X_{n+1}), N_{n+1}} \\
  & \qquad = \P{Y_{n+1} \notin \hat{C}_{\alpha_{\text{class}}}(X_{n+1}; \mathcal{Y}_n),\, \psi_{\text{seen}} > \alpha_{\text{seen}},\, Y_{n+1} \in \mathcal{Y}_n}
       + \P{\psi_{\text{seen}} \leq \alpha_{\text{seen}},\, H_{\text{seen}}} \\
  & \qquad \quad + \P{\psi_{\text{unseen}} \leq \alpha_{\text{unseen}},\, H_{\text{unseen}}} \\
  & \qquad \leq \alpha_{\text{class}} + \alpha_{\text{seen}} + \alpha_{\text{unseen}} \\
  & \qquad = \alpha.
\end{align*}
The second equality follows from the definition in~\eqref{eq:open-set-predictor}. The inequality follows from: (i) the validity assumption of the closed-set conformal prediction set, (ii) the validity assumption of $\psi_{\text{seen}}$ under $H_{\text{seen}}$, and (iii) the validity assumption of $\psi_{\text{unseen}}$ under $H_{\text{unseen}}$.
\end{proof}

\begin{proof}[of Proposition~\ref{prop:recovery}]
We first show that $\psi_{\text{unseen}} \leq (K+1)/(n+1)$ almost surely for all three p-value variants.

For the feature-blind Good--Turing p-value, recall that $M_1$ denotes the number of distinct labels that appear exactly once in $Y_1, \ldots, Y_n$. Since $|\mathcal{Y}| = K$, we have $M_1 \leq K$. Therefore:
\[
\psi_{0}^{\text{GT}} = \frac{M_{1} + 1}{n + 1} \leq \frac{K + 1}{n + 1} \quad\text{almost surely.}
\]

By construction of the randomized feature-blind Good--Turing p-value and the feature-based conformal Good--Turing p-value in Section~\ref{sec:testing}, we have
\[
\psi_{0}^{\text{RGT}} \leq \psi_{0}^{\text{GT}} \quad \text{and} \quad \psi_{0}^{\text{XGT}} \leq \psi_{0}^{\text{GT}} \quad\text{almost surely.}
\]
Thus, $\psi_{\text{unseen}} \leq (K+1)/(n+1)$ almost surely for any of the three p-value choices. 

With our allocation $\alpha_{\text{unseen}} = (K+1)/(n+1)$, we have $\psi_{\text{unseen}} \leq \alpha_{\text{unseen}}$ almost surely. Since $\alpha_{\text{seen}} = 0$ and $\psi_{\text{seen}}$ is strictly positive, we have $\psi_{\text{seen}} > \alpha_{\text{seen}}$ almost surely.

Therefore, the conditions in~\eqref{eq:open-set-predictor} output the first case:
\[
\hat{C}^{*}_{\alpha}(X_{n+1}) = \hat{C}_{\alpha_{\mathrm{class}}}(X_{n+1};\mathcal{Y}_n)
\quad \text{almost surely}. 
\]
This completes the proof.
\end{proof}

\subsection{Selective Sample Splitting} \label{app:selective-proof}

\begin{proof}[of Theorem~\ref{thm:weighted-split-validity}]
Let $\hat{C}^{\pi}_{\alpha}(X_{n+1}; \mathcal{Y}_{n+1})$ denote the conformal prediction set computed as in~\eqref{eq:weighted-split}, using a symmetric score function $s$ and replacing the full label dictionary $\mathcal{Y}$ with $\mathcal{Y}_{n+1} := \text{unique}(Y_1,\ldots,Y_{n+1})$. 
Assume also that the nonconformity scores $\{S_1,\ldots,S_{n+1}\}$ are almost-surely distinct for any $y \in \mathcal{Y}$. 


A key observation is that $Y_{n+1} \in \mathcal{Y}_n$ implies $\hat{C}^{\pi}_{\alpha}(X_{n+1}; \mathcal{Y}_n) = \hat{C}^{\pi}_{\alpha}(X_{n+1}; \mathcal{Y}_{n+1})$ almost surely:
\begin{align*}
  \mathbb{I}\{Y_{n+1} \notin \hat{C}^{\pi}_{\alpha}(X_{n+1}; \mathcal{Y}_n), Y_{n+1} \in \mathcal{Y}_n\} =  \mathbb{I}\{Y_{n+1} \notin \hat{C}^{\pi}_{\alpha}(X_{n+1}; \mathcal{Y}_{n+1}), Y_{n+1} \in \mathcal{Y}_n\}.
\end{align*}

To leverage this observation, we now analyze the coverage of $\hat{C}^{\pi}_{\alpha}(X_{n+1}; \mathcal{Y}_{n+1})$. By construction of the weights, we have the normalization property:
\begin{align*}
    w_{n+1}(y) +\sum_{j\in\mathcal{D}^{\text{cal}}}w_j(y) = 1.
\end{align*}
This allows us to express the coverage condition through a quantile representation. Specifically, we have:
\begin{align*}
    & \mathrel{\phantom{\Longleftrightarrow}}Y_{n+1} \in \hat{C}^{\pi}_{\alpha}(X_{n+1}; \mathcal{Y}_{n+1}) \\
    &\Longleftrightarrow
w_{n+1}(Y_{n+1}) +\sum_{j\in\mathcal{D}^{\text{cal}}}w_j(Y_{n+1})\I{S_j \geq S_{n+1}} > \alpha
\\
&\Longleftrightarrow
S_{n+1}
\;\leq\; 
\mathrm{Quantile}\biggl(1 - \alpha;\; 
\sum_{j \in \mathcal{D}^{\text{cal}}} 
w_j(Y_{n+1})
\,\delta_{S_j} 
\;+\; 
w_{n+1}(Y_{n+1})\,\delta_{\infty}
\biggr) \\
&\Longleftrightarrow
S_{n+1} 
\;\leq\; 
\mathrm{Quantile}\biggl(1 - \alpha;\; 
\sum_{j \in \mathcal{D}^{\text{cal}}} 
w_j(Y_{n+1})
\,\delta_{S_j} 
\;+\; 
w_{n+1}(Y_{n+1})\,\delta_{S_{n+1}} 
\biggr)
\end{align*}

Next, we need to find the distribution of $S_{n+1}$ under the selective sampling scheme. Let $E_{n+1}$ denote the multiset event that $\{Z_1, \dots, Z_{n+1}\} = \{z_1, \dots, z_{n+1}\}$, where $Z_i = (X_i, Y_i)$ and $z_i = (x_i, y_i)$. To derive the conditional distribution, we consider the probability that the test score equals any particular calibration score. For some fixed $D^{\text{train}} \in \mathcal{P}([n])$ and $j \in D^{\text{cal}} = [n] \setminus D^{\text{train}}$, we have:
\begin{align}
&\mathrel{\phantom{=}} \mathbb{P}\Bigl\{S_{n+1} = S_j \mid  E_{n+1},\, \mathcal{D}^{\text{train}} = D^{\text{train}},\, \{Z_i\}_{i \in \mathcal{D}^{\text{train}}} = \{z_i\}_{i \in D^{\text{train}}}\Bigr\} \nonumber \\
& \qquad = \mathbb{P}\Bigl\{ (X_{n+1}, Y_{n+1}) = (x_j, y_j) \mid  E_{n+1},\, \mathcal{D}^{\text{train}} = D^{\text{train}},\, \{Z_i\}_{i \in \mathcal{D}^{\text{train}}} = \{z_i\}_{i \in D^{\text{train}}}\Bigr\} \nonumber \\
&\qquad = \frac{\mathbb{P}\Bigl\{ (X_{n+1}, Y_{n+1}) = (x_j, y_j),\, \mathcal{D}^{\text{train}} = D^{\text{train}},\, \{Z_i\}_{i \in \mathcal{D}^{\text{train}}} = \{z_i\}_{i \in D^{\text{train}}} \mid  E_{n+1} \Bigr\}}{\mathbb{P}\Bigl\{ \mathcal{D}^{\text{train}} = D^{\text{train}},\, \{Z_i\}_{i \in \mathcal{D}^{\text{train}}} = \{z_i\}_{i \in D^{\text{train}}} \mid  E_{n+1} \Bigr\}}. \label{eq:score-conditional-dist}
\end{align}

The numerator can be rewritten by considering the equivalent event where we specify the calibration set instead of the training set. 
\begin{align}
       & \mathrel{\phantom{=}}  \mathbb{P}\Bigl\{ Z_{n+1} = z_j,\, \mathcal{D}^{\text{train}} = D^{\text{train}},\, \{Z_i\}_{i \in \mathcal{D}^{\text{train}}} = \{z_i\}_{i \in D^{\text{train}}} \mid  E_{n+1} \Bigr\} \nonumber \\
     & \qquad  =   \mathbb{P}\Bigl\{ Z_{n+1} = z_j,\, \mathcal{D}^{\text{cal}} = D^{\text{cal}},\, \{Z_i\}_{i \in \mathcal{D}_{\text{cal}}} = \{z_{n+1}\} \cup \{z_i\}_{i \in D^{\text{cal}} \setminus\{j\}} \mid  E_{n+1} \Bigr\} \nonumber\\
           & \qquad =     \mathbb{P}\  \Bigl\{\mathcal{D}^{\text{cal}} = D^{\text{cal}},\, \{Z_i\}_{i \in \mathcal{D}_{\text{cal}}} = \{z_{n+1}\} \cup \{z_i\}_{i \in D^{\text{cal}} \setminus\{j\}}  \mid  Z_{n+1} = z_j, \, E_{n+1} \Bigr\}  \, \mathbb{P}\Bigl\{ Z_{n+1} = z_j \mid  E_{n+1} \Bigr\} \label{eq:joint-conditional-prob}.
\end{align}

To compute this conditional probability, recall that under our selective sampling model, the inclusion indicators $\{I_i\}_{i=1}^n$ are conditionally independent given the label sequence $Y_{1:n}$. Specifically, for each $i \in [n]$:
\[
I_i \mid Y_{1:n} \sim \operatorname{Bernoulli}\bigl(\pi(N(Y_i; Y_{1:n}))\bigr),
\]
and these indicators are mutually independent conditional on $Y_{1:n}$.  

Consider first the probability of observing the unswapped calibration set configuration when $Z_{n+1} = z_{n+1}$. For any $D^{\text{cal}} \subseteq [n]$, we have:
\begin{align}
 \mathrel{\phantom{=}} & \mathbb{P}\  \Bigl\{\mathcal{D}^{\text{cal}} = D^{\text{cal}},\, \{Z_i\}_{i \in \mathcal{D}_{\text{cal}}} = \{z_i\}_{i \in D^{\text{cal}}}  \,\Big|\, Z_{n+1} = z_{n+1}, \, E_{n+1} \Bigr\} \nonumber \\
 = & \mathbb{P}\  \Bigl\{\bigcap_{i \in D^{\text{cal}}} \{I_i = 1\} \cap \bigcap_{i \in [n] \setminus D^{\text{cal}}} \{I_i = 0\}  \,\Big|\, Z_{n+1} = z_{n+1}, \, E_{n+1} \Bigr\} \nonumber \\
 = &\prod_{i\in D^{\text{cal}}}\pi\bigl(N(Y_i; Y_{1:n})\bigr)
\prod_{i\in [n]\setminus D^{\text{cal}}}\bigl(1-\pi(N(Y_i; Y_{1:n}))\bigr) \label{eq:base-prob}
\end{align}
This product form arises directly from the independence structure: each point's inclusion probability depends only on its label frequency in $Y_{1:n}$. Note that the ordering of labels within the calibration and training sets does not affect this probability, as it depends only on the label frequencies.

Now consider the case where we swap the test point with a calibration point $j \in D^{\text{cal}}$. We want to compute:
\begin{align*}
\mathbb{P}\  \Bigl\{\mathcal{D}^{\text{cal}} = D^{\text{cal}},\, \{Z_i\}_{i \in \mathcal{D}_{\text{cal}}} = \{z_{n+1}\} \cup \{z_i\}_{i \in D^{\text{cal}} \setminus\{j\}}  \,\Big|\, Z_{n+1} = z_j, \, E_{n+1} \Bigr\}
\end{align*}

As introduced in Section~\ref{sec:selective-splitting}, this swapping operation transforms the label sequence to $Y_{1:n}^{(j,y)}$, where:
\[
Y_i^{(j,y)} = \begin{cases}
y & \text{if } i = j \\
Y_j & \text{if } i = n+1 \\
Y_i & \text{otherwise}
\end{cases}
\]

The key insight is that conditional on $Z_{n+1} = z_j$ and $E_{n+1}$, the swapped configuration corresponds to a different label sequence but preserves the conditional independence structure. Following the same logic as in \eqref{eq:base-prob}, but with the swapped label sequence:
\begin{align}
\mathrel{\phantom{=}} &\mathbb{P}\  \Bigl\{\mathcal{D}^{\text{cal}} = D^{\text{cal}},\, \{Z_i\}_{i \in \mathcal{D}_{\text{cal}}} = \{z_{n+1}\} \cup \{z_i\}_{i \in D^{\text{cal}} \setminus\{j\}}  \,\Big|\, Z_{n+1} = z_j, \, E_{n+1} \Bigr\} \nonumber \\ 
 = & \prod_{i\in D^{\text{cal}}}\pi\bigl(N(Y_i^{(j,y)}; Y_{1:n}^{(j,y)})\bigr)
\prod_{i\in [n]\setminus D^{\text{cal}}}\bigl(1-\pi(N(Y_i^{(j,y)}; Y_{1:n}^{(j,y)}))\bigr) \nonumber \\
= & \pi\bigl(N(y; Y_{1:n}^{(j,y)})\bigr) \prod_{i \in D^{\text{cal}}, i \neq j} \pi\bigl(N(Y_i^{(j,y)}; Y_{1:n}^{(j,y)})\bigr)  \prod_{i \in [n] \setminus D^{\text{cal}}} \bigl(1 - \pi(N(Y_i^{(j,y)}; Y_{1:n}^{(j,y)}))\bigr) \label{eq:swap-prob}
\end{align}
where we have separated out the term for index $j$ (which now has label $y$ after swapping) in the first product.

The product form arises from the conditional independence of the Bernoulli inclusion indicators given the label sequence. This independence is preserved under the swapping operation. This motivates our definition of the probability weights:
\begin{align*}
    p^{(j)}(y) = \pi\bigl(N(y; Y_{1:n}^{(j,y)})\bigr) \prod_{i \in \mathcal{D}^{\text{cal}}, i \neq j} \pi\bigl(N(Y_i^{(j,y)}; Y_{1:n}^{(j,y)})\bigr) \prod_{i \in \mathcal{D}^{\text{train}}} \bigl(1 - \pi(N(Y_i^{(j,y)}; Y_{1:n}^{(j,y)}))\bigr).
\end{align*}
For completeness, when $j = n+1$ (representing no swapping), this formula reduces to:
\[
p^{(n+1)}(y) = \prod_{i\in \mathcal{D}^{\text{cal}}}\pi\bigl(N(Y_i; Y_{1:n})\bigr)
\prod_{i\in \mathcal{D}^{\text{train}}}\bigl(1-\pi(N(Y_i; Y_{1:n}))\bigr)
\]
These probabilities $p^{(j)}(y)$ capture the probability of the specific sample splitting configuration under our selective sampling model after swapping the test point $(X_{n+1}, Y_{n+1})$ with the calibration point $(X_j, Y_j)$.

The second term in~\eqref{eq:joint-conditional-prob} is computed using the exchangeability of the data, which ensures that 
$$\mathbb{P}\Bigl\{ Z_{n+1} = z_j \,\big|\,  E_{n+1} \Bigr\} = \frac{1}{n+1}.$$

Observe that conditional on the event $E_{n+1} \land \{ \mathcal{D}^{\text{train}} = {D}^{\text{train}} \} \land\{ \{Z_i\}_{i \in \mathcal{D}^{\text{train}}}= \{z_i\}_{i \in {D}^{\text{train}}} \}$, the score function is completely determined by the training set. Consequently, $S_{n+1}$ can only take values in the finite set $\{s((x_i,y_i) : i \in D^{\text{cal}} \cup \{n+1\}\}$. Moreover, since the denominator in Equation~\eqref{eq:score-conditional-dist} remains constant across all possible values of $S_{n+1}$ (conditional on the same event), we can normalize the probabilities to obtain the conformalization weights:
\[
w_j(y) = \frac{p^{(j)}(y)}{p^{(n+1)}(y) + \sum_{k \in \mathcal{D}^{\text{cal}}} p^{(k)}(y)}, \quad \text{for all } j \in \mathcal{D}^{\text{cal}} \cup \{n+1\}.
\]
These weights account for the non-exchangeability introduced by our frequency-dependent sampling. By weighting each score according to its probability under the selective model, we restore the validity guarantee.


Combining the above calculations, the conditional distribution of $S_{n+1}$ takes the form:
\begin{align*}
    &\mathrel{\phantom{\sim}}S_{n+1} \,\Big|\, \Bigl(E_{n+1} \land \{ \mathcal{D}^{\text{train}} = {D}^{\text{train}} \} \land\{ \{Z_i\}_{i \in \mathcal{D}^{\text{train}}}= \{z_i\}_{i \in {D}^{\text{train}}} \}\Bigr) \\
& \qquad \sim \sum_{j \in \mathcal{D}_{\text{cal}}} w_j(y_{n+1})\,\delta_{s(x_j,y_j)} \;+\; w_{n+1}(y_{n+1})\,\delta_{s(x_{n+1},y_{n+1})}. 
\end{align*}

This conditional distribution is exactly the weighted empirical distribution that appears in our quantile condition. After marginalizing, we obtain:
\begin{align*}
    & \P{
   S_{n+1} 
\;\leq\; 
\mathrm{Quantile}\biggl(1 - \alpha;\; 
\sum_{j \in \mathcal{D}^{\text{cal}}} 
w_j(Y_{n+1})
\,\delta_{S_j} 
\;+\; 
w_{n+1}(Y_{n+1})\,\delta_{S_{n+1}} 
\biggr)
} \geq 1-\alpha.
\end{align*}

By our earlier quantile characterization, this is equivalent to
\begin{align*}
    \P{Y_{n+1} \in \hat{C}^{\pi}_{\alpha}(X_{n+1}; \mathcal{Y}_{n+1})} \geq 1-\alpha
\end{align*}
and,
\begin{align*}
    \P{Y_{n+1} \notin \hat{C}^{\pi}_{\alpha}(X_{n+1}; \mathcal{Y}_{n+1})} \leq \alpha.
\end{align*}

Finally, we can complete the proof by returning to our initial observation. Since $\hat{C}^{\pi}_{\alpha}(X_{n+1}; \mathcal{Y}_n) = \hat{C}^{\pi}_{\alpha}(X_{n+1}; \mathcal{Y}_{n+1})$ when $Y_{n+1} \in \mathcal{Y}_n$, we conclude:
\begin{align*}
    \P{Y_{n+1} \notin \hat{C}^{\pi}_{\alpha}(X_{n+1}; \mathcal{Y}_n), Y_{n+1} \in \mathcal{Y}_n} \leq \alpha.
\end{align*}
This completes the proof.
\end{proof}

\begin{remark}
The probability weights $p^{(j)}(y)$ can potentially equal zero when the probability function $\pi$ takes boundary values of $0$ or $1$. This occurs when certain swapping configurations are infeasible. 

Consider a concrete example: suppose $\pi(1) = 0$ and $\pi(2) = 1$. This means that any label appearing exactly once is deterministically assigned to the training set (since $I_i = 0$ with probability $1$), while any label appearing exactly twice is deterministically assigned to the calibration set (since $I_i = 1$ with probability $1$). 

Now suppose we have a label $y^*$ that appears exactly twice in $Y_{1:n}$, and both occurrences are in the calibration set (as required by $\pi(2) = 1$). If we attempt to swap the test point (with label $y' \neq y^*$) with one of these calibration points, $y^*$ would appear only once in $Y_{1:n}^{(j,y')}$, requiring it to be in the training set with probability $1$ due to $\pi(1) = 0$. This creates a contradiction, making $p^{(j)}(y') = 0$ for this configuration. 
\end{remark}

\section{Additional Numerical Results}\label{app:experiments}

Here we present additional implementation details related to Section~\ref{sec:empirical}, and further numerical results.

\subsection{Choice of Classification Model and Adjustments} 
Across all experiments, the base classifier is a k-nearest neighbors model with $k=5$. We choose this comparatively simple model to reduce the risk of overfitting in our setting with many class labels and potentially extremely low per-class frequencies, where more complex models are easily prone to poor generalization. In our implementation, neighbor contributions to classification are weighted proportionally to the inverse of their Euclidean distance from the query point. For synthetic datasets, we employ the Minkowski distance metric, whereas for real image datasets, we embed image pixels into numerical representations and apply the cosine similarity metric, as it better reflects visual similarity in the embedding space. 

Moreover, when implementing split conformal prediction in the open-set setting, where new classes may appear in the calibration dataset, it is necessary to assign predicted probabilities to calibration labels that are absent from the training set. In our experiments, we assign small and randomized probabilities to such unseen calibration labels according to  
\[
p_{\text{unseen}} = \frac{1 + n_{\text{singleton}}}{(1 + n_{\text{train}}) \cdot |\mathcal{C}_{\text{unseen}}|} + \text{noise},
\]  
where $n_{\text{singleton}}$ denotes the number of labels that occur only once in the training set, and $|\mathcal{C}_{\text{unseen}}|$ denotes the cardinality of the new calibration labels. This probability corresponds to the Good–Turing estimate of encountering an unseen label, evenly distributed across all possible unseen labels. This adjustment ensures that prediction sets can be constructed efficiently when applying the standard split conformal prediction with adaptive nonconformity score\citep{romano2020arc}, both for establishing the baseline benchmark and for constructing $\hat{C}_{\alpha_{\text{class}}}(X_{n+1})$ in our conformal Good--Turing classification.

\subsection{Hyperparameter Settings} 
When implementing the conformal Good--Turing classification method, we use the feature-based conformal Good–Turing $p$-value $\psi_0^{\text{XGT}}$ to test $H_{\text{unseen}}$, where the score function is derived by training a {\em local outlier factor} model with the \texttt{scikit-learn} package. For testing $H_{\text{seen}}$, we employ the randomized conformal Good–Turing $p$-value $\psi_k^{\text{RGT}}$ for each hypothesis $H_k$, fixing the multiple testing parameter $c_k$ at a constant value computed with $\beta = 1.6$ to obtain $\psi_{\text{seen}}$.
We adopt the data-driven adaptive allocation strategy to distribute the total significance budget $\alpha$ across $(\alpha_{\text{class}}, \alpha_{\text{new}}, \alpha_{\text{old}})$. Specifically, we set $\alpha = 0.1$ for synthetic experiments and $\alpha = 0.2$ for CelebA experiments. We then perform a grid search over $\alpha_{\text{old}} \in \{0, 0.01, 0.02, 0.05\}$ and $\alpha_{\text{class}}$ starting from $0.01$ with increments of $0.005$. The allocation $(\alpha_{\text{class}}, \alpha_{\text{new}}, \alpha_{\text{old}})$ is chosen to minimize the loss function in~\ref{eq:alpha_allocation_loss}. In evaluating this loss, we fix $\lambda = 0.5$ and estimate the average size of conformal prediction sets using 10-fold cross-validation, where in each fold the training and calibration sets are randomly partitioned for fast computation.

\FloatBarrier

\subsection{Additional Numerical Results}
\begin{figure}[!htb]
    \centering
    \includegraphics[width=0.5\textwidth]{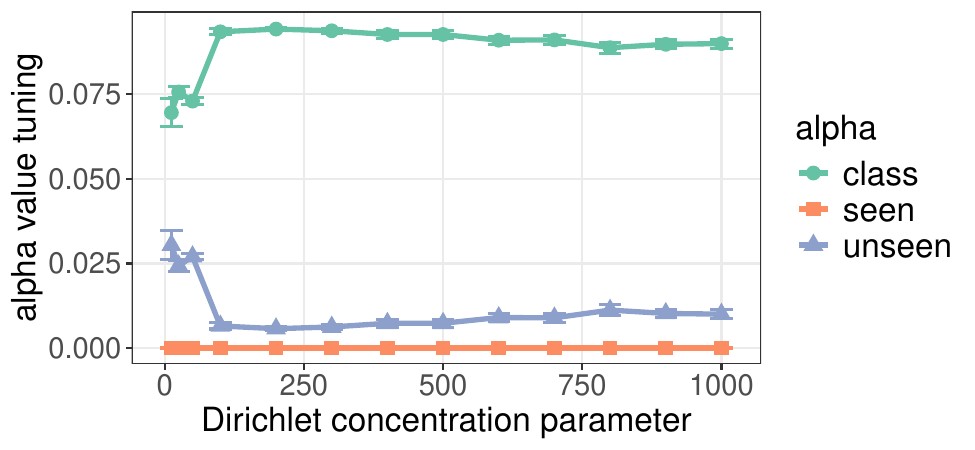}
    \caption{Allocation of significance levels $(\alpha_{\text{class}}, \alpha_{\text{unseen}}, \alpha_{\text{seen}})$ when applying the proposed conformal Good--Turing classification method with a total budget $\alpha = 0.1$ on synthetic data generated from a Dirichlet process model, as in Figure~\ref{fig:dp-main-four-panel}. Error bars indicate $1.96$ standard errors.
    }
    \label{fig:app-dp-alpha-allocation}
\end{figure}


\begin{figure}[!htb]
    \centering
    \includegraphics[width=0.9\textwidth]{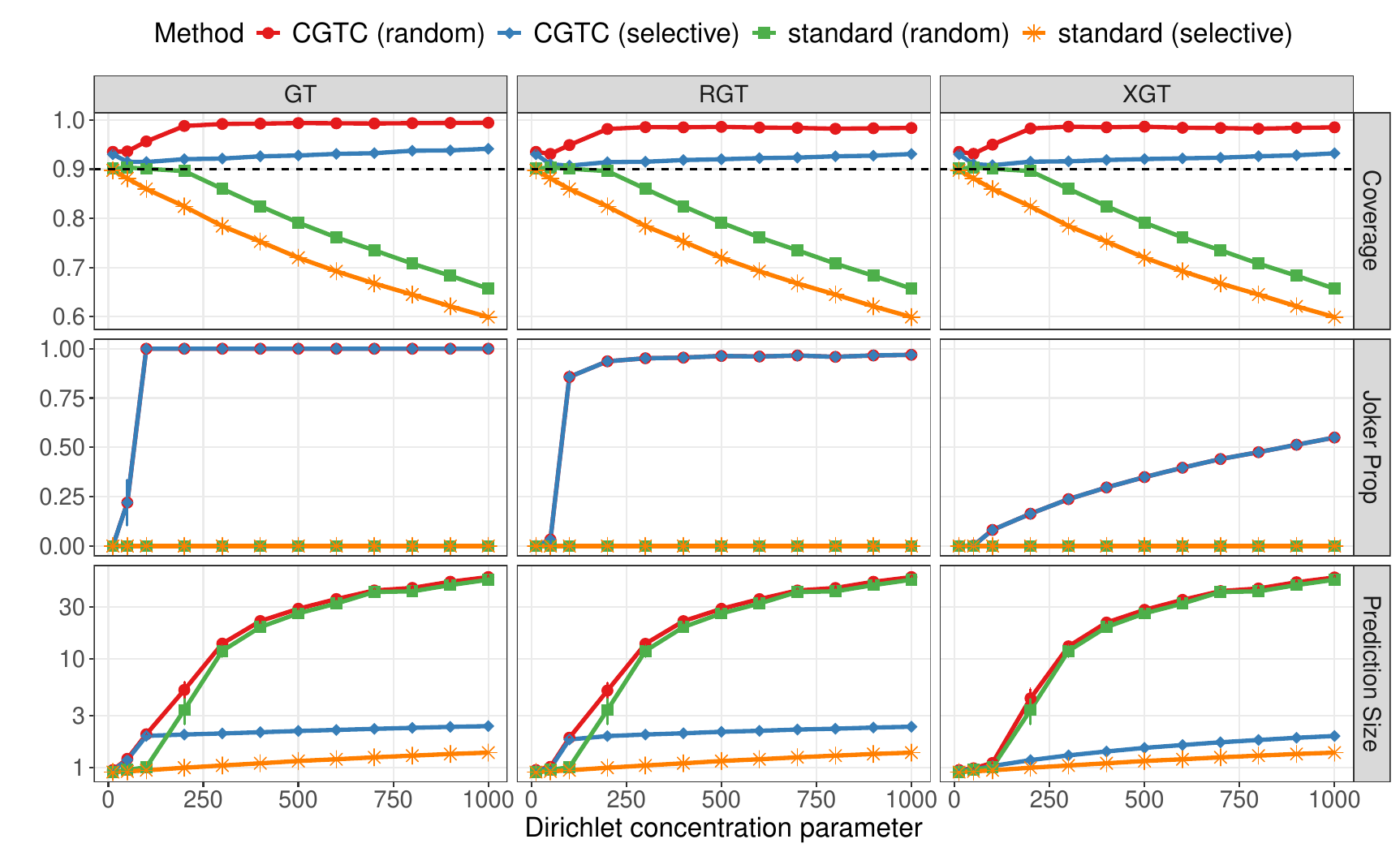}
    \caption{Performance of conformal prediction sets constructed using different methods on synthetic data generated from a Dirichlet process model, as in Figure~\ref{fig:dp-main-four-panel}. Each column corresponds to a distinct conformal Good--Turing p-value construction: deterministic feature blind (GT), randomized feature blind (RGT), and feature-based (XGT). All p-value constructions consistently achieve marginal coverage and therefore outperform the standard benchmark, while the feature-based approach minimizes usage of the joker.
    }
    \label{fig:app-dp-pvalue-full}
\end{figure}

\begin{figure}[!t]
    \centering
    \includegraphics[width=1\textwidth]{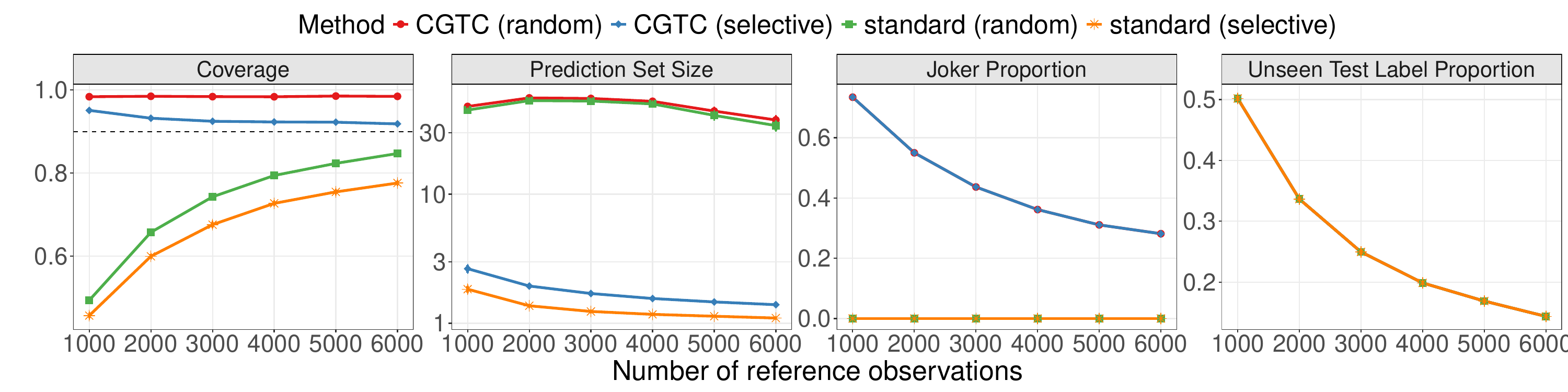}
    \caption{Performance of conformal prediction sets constructed using various methods on synthetic data generated from a Dirichlet process model, as in Figure~\ref{fig:dp-main-four-panel}. The results are shown as a function of the total number of samples in the training and calibration data, and the concentration parameter is fixed at $\theta = 1000$. Other details are as in Figure~\ref{fig:dp-main-four-panel}. 
    }
    \label{fig:app-dp-four-panel-nref}
\end{figure}

\begin{figure}[!htb]
    \centering
    \includegraphics[width=0.5\textwidth]{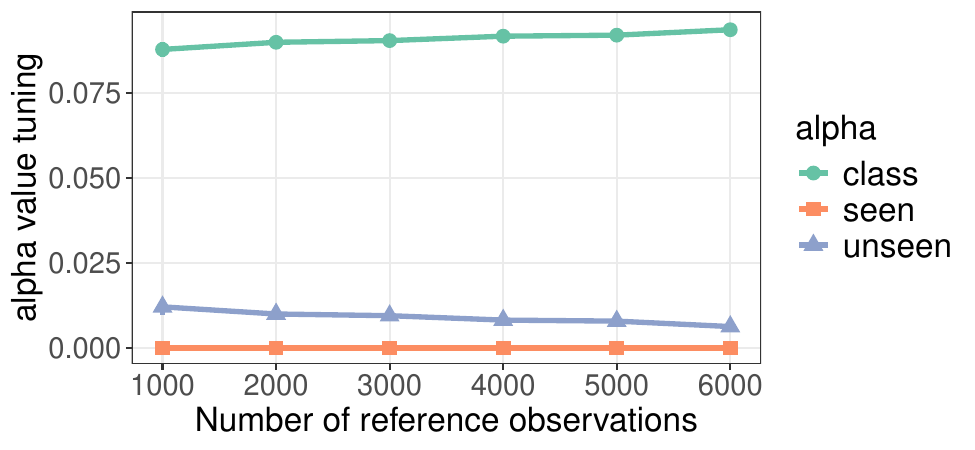}
    \caption{Allocation of significance levels \((\alpha_{\text{class}}, \alpha_{\text{unseen}}, \alpha_{\text{seen}})\) for the proposed conformal Good–Turing classifier under a total budget \(\alpha = 0.1\) on synthetic data generated from a Dirichlet process model with concentration \(\theta = 1000\), shown as a function of the total number of samples in the training and calibration data. Other details are as in Figure~\ref{fig:dp-main-four-panel}. 
    }
    \label{fig:app-dp-alpha-allocatio-nref}
\end{figure}

\begin{figure}[!htb]
    \centering
    \includegraphics[width=0.95\textwidth]{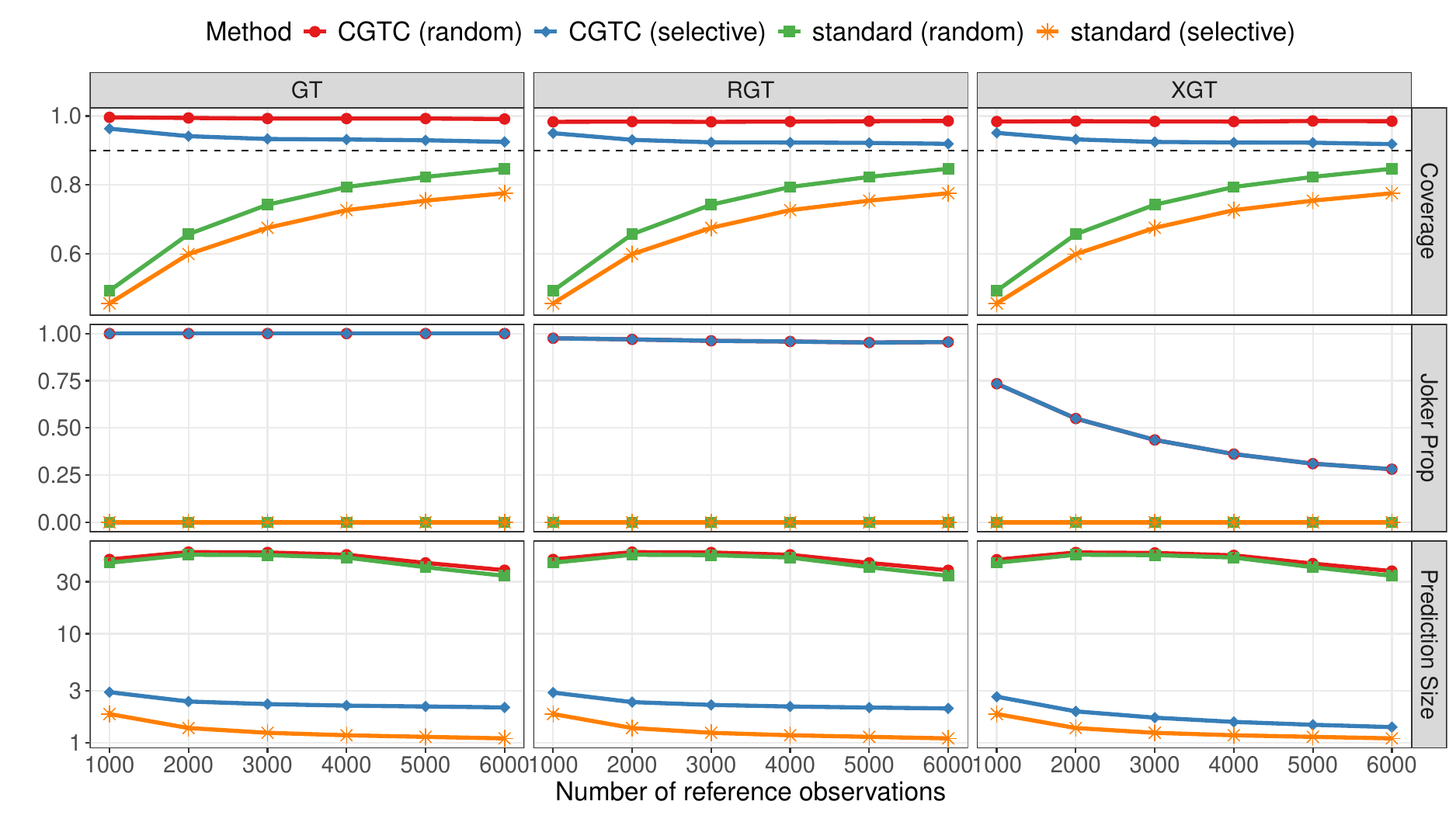}
    \caption{Performance of conformal prediction sets constructed using different methods on synthetic data generated from a Dirichlet process model with concentration \(\theta = 1000\), as a function of the total number of samples in the training and calibration data. Each column represents a distinct conformal Good--Turing p-value construction: deterministic feature blind (GT), randomized feature blind (RGT), and feature-based (XGT). Other details are as in Figure~\ref{fig:dp-main-four-panel}. 
    }
    \label{fig:app-dp-pvalue-full-nref}
\end{figure}

\begin{figure}[!t]
    \centering
    \includegraphics[width=\textwidth]{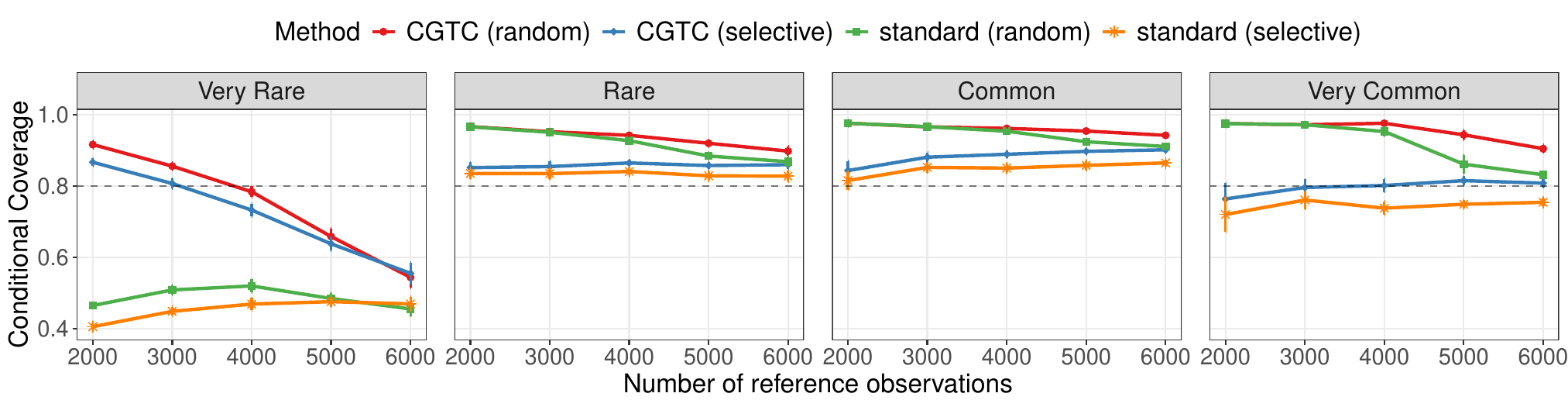}
    \caption{Coverage of conformal prediction sets constructed using different methods on face recognition data from the CelebA resource, as in Figure~\ref{fig:real-main-four-panel}, stratified by test-label frequency. The standard approach fails to achieve coverage for very rare labels, whereas the conformal Good--Turing method attains higher conditional coverage on very rare labels. Other details are as in Figure~\ref{fig:real-main-four-panel}.
    }
    \label{fig:real-app-conditional-coverage}
\end{figure}

\begin{figure}[!htb]
    \centering
    \includegraphics[width=0.5\textwidth]{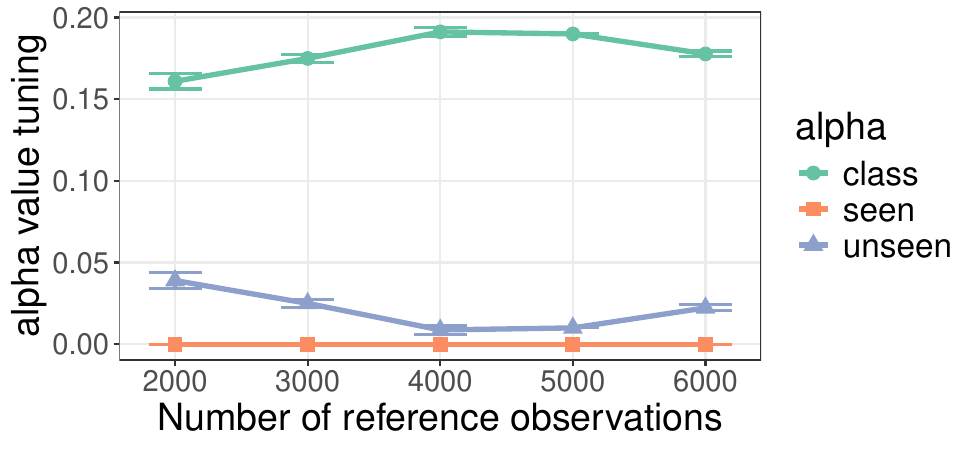}
    \caption{Allocation of significance levels \((\alpha_{\text{class}}, \alpha_{\text{unseen}}, \alpha_{\text{seen}})\) for the proposed conformal Good–Turing classifier under a total budget \(\alpha = 0.2\) on face recognition data from the CelebA resource, as in Figure~\ref{fig:real-main-four-panel}. Error bars indicate $1.96$ standard errors.
    }
    \label{fig:real-app-alpha-allocation}
\end{figure}

\begin{figure}[!htb]
    \centering
    \includegraphics[width=0.9\textwidth]{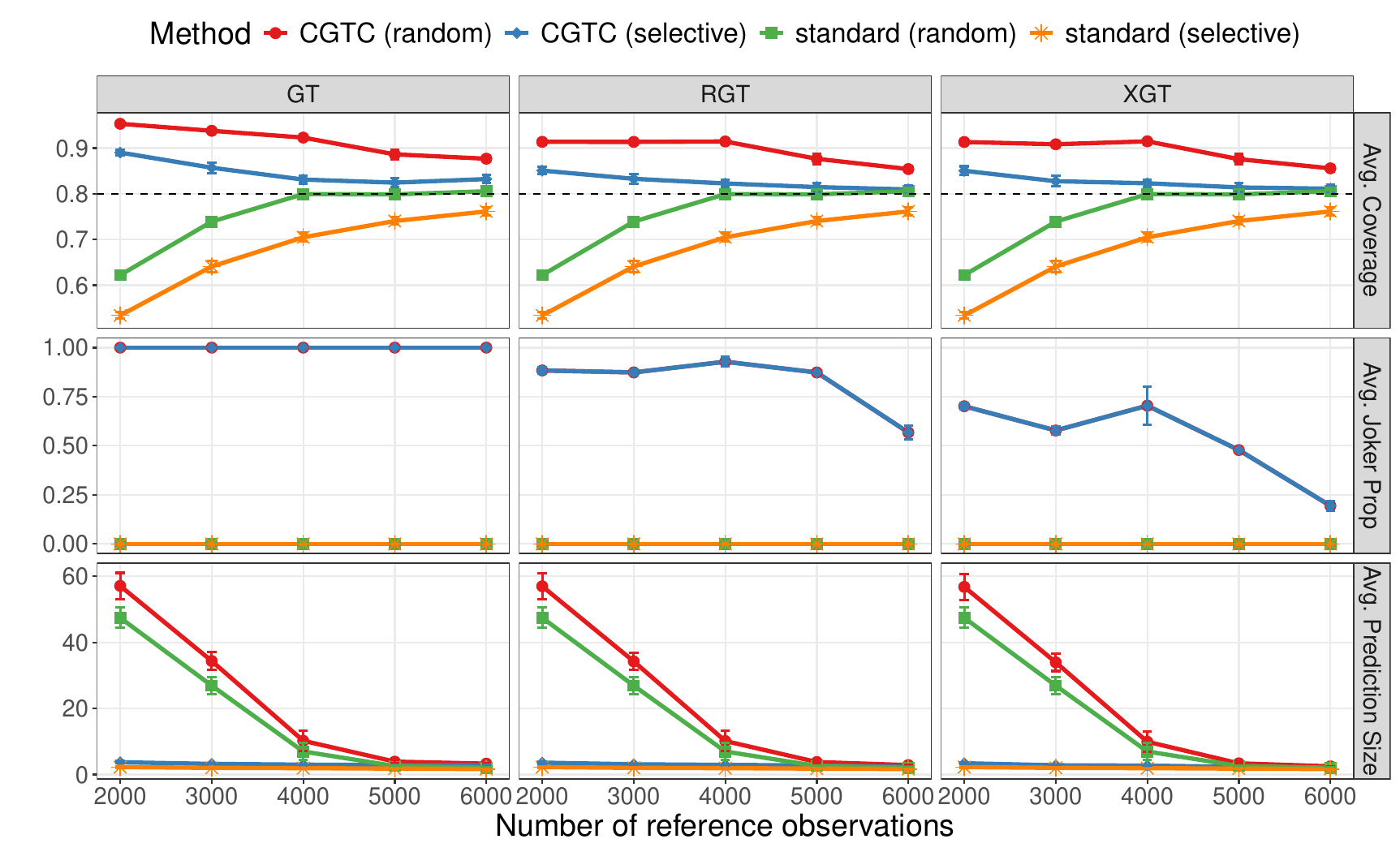}
    \caption{Performance of conformal prediction sets constructed using different methods on face recognition data from the CelebA resource, as in Figure~\ref{fig:real-main-four-panel}. Each column corresponds to a distinct conformal Good--Turing p-value construction: deterministic feature blind (GT), randomized feature blind (RGT), and feature-based (XGT). 
    }
    \label{fig:real-prop-joker-mixed-labels-80}
\end{figure}

\begin{figure}[!t]
    \centering
    \includegraphics[width=1\textwidth]{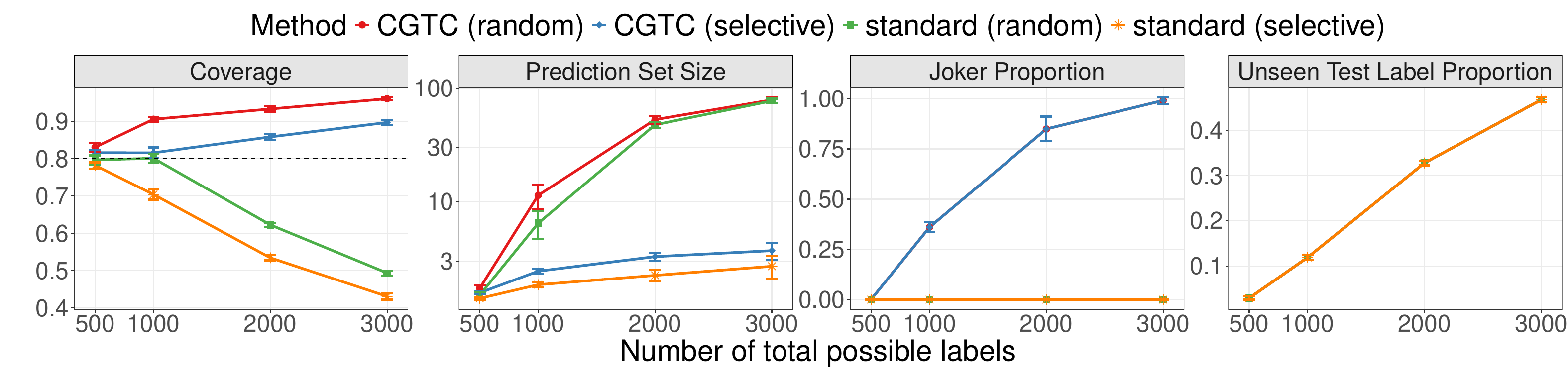}
    \caption{Performance of conformal prediction sets constructed using various methods on face recognition data from the CelebA resource, as in Figure~\ref{fig:real-main-four-panel}. Results are shown as a function of the total number of possible labels, with the training and calibration sample sizes fixed at 2000. Other details are as in Figure~\ref{fig:real-main-four-panel}.
    }
    \label{fig:app-real-four-panel-nlab}
\end{figure}

\begin{figure}[!htb]
    \centering
    \includegraphics[width=0.5\textwidth]{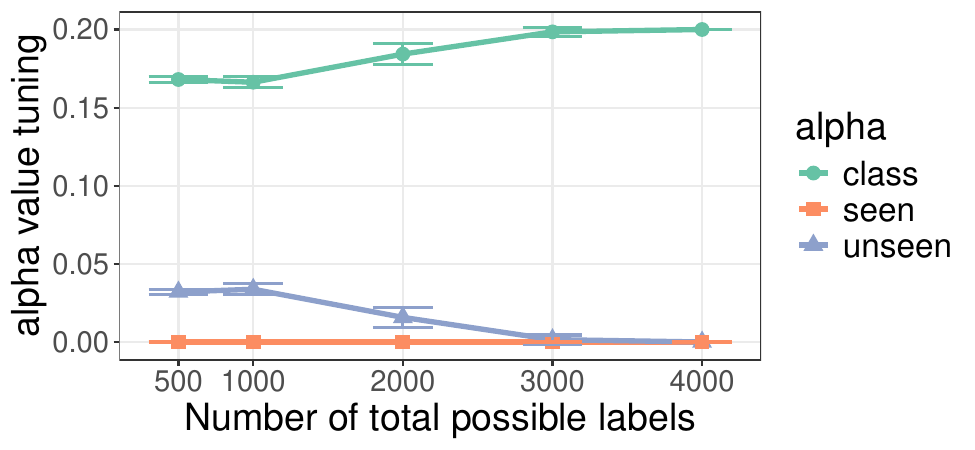}
    \caption{Allocation of significance levels \((\alpha_{\text{class}}, \alpha_{\text{unseen}}, \alpha_{\text{seen}})\) for the proposed conformal Good–Turing classifier under a total budget \(\alpha = 0.2\) on face recognition data from the CelebA resource, shown as a function of the total number of possible labels, with the training and calibration sample sizes fixed at 2000. Other details are as in Figure~\ref{fig:real-main-four-panel}. 
    }
    \label{fig:app-real-alpha-allocation-nlab}
\end{figure}

\begin{figure}[!htb]
    \centering
    \includegraphics[width=0.9\textwidth]{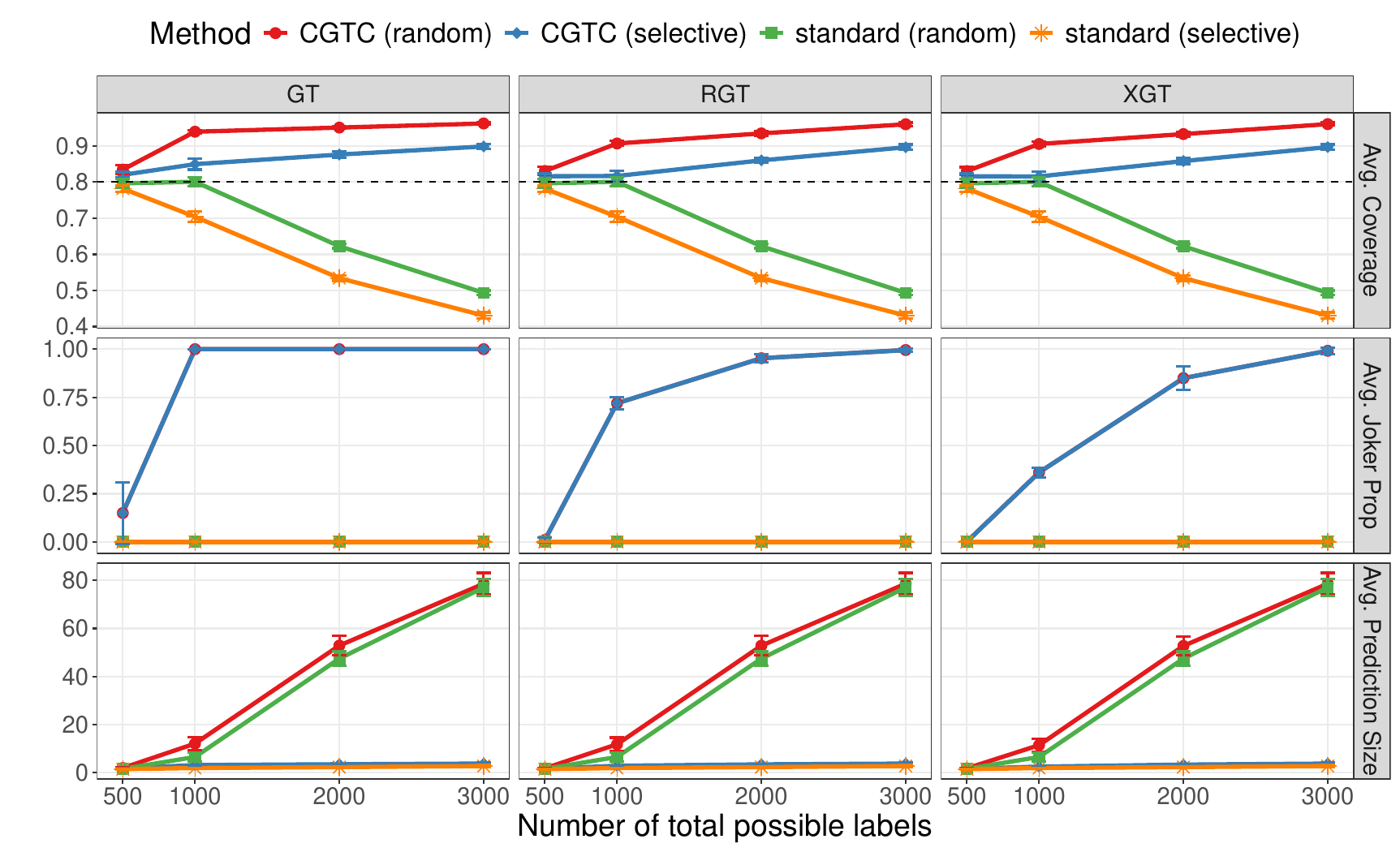}
    \caption{Performance of conformal prediction sets constructed using different methods on face recognition data from the CelebA resource, as a function of the total number of possible labels, with the training and calibration sample sizes fixed at 2000. Each column represents a distinct conformal Good--Turing p-value construction: deterministic feature blind (GT), randomized feature blind (RGT), and feature-based (XGT). Other details are as in Figure~\ref{fig:real-main-four-panel}. 
    }
    \label{fig:app-real-prop-joker-mixed-labels-80-nlab}
\end{figure}

\end{document}